\documentclass{article}

\PassOptionsToPackage{numbers, compress}{natbib}



\usepackage[final]{neurips_2022}


\usepackage[utf8]{inputenc} 
\usepackage[T1]{fontenc}    
\usepackage{url}            
\usepackage{booktabs}       
\usepackage{amsfonts}       
\usepackage{nicefrac}       
\usepackage{microtype}      
\usepackage{xcolor}         
\usepackage{wrapfig}

\usepackage[utf8]{inputenc} 
\usepackage[T1]{fontenc}    
\usepackage[colorlinks,citecolor=green]{hyperref}
\usepackage{algorithm}
\usepackage{algorithmic}
\usepackage[ruled,linesnumbered,algo2e]{algorithm2e}
\usepackage{subfig}

\usepackage{subfig}
\usepackage{graphicx}
\usepackage{amsmath,url,amssymb}
\usepackage{amsfonts}
\usepackage{amsthm}
\usepackage{ctable}

\usepackage{multirow}
\usepackage{pifont}
\usepackage{color}
\usepackage{enumitem}
\usepackage{sidecap}
\usepackage{xcolor}
\usepackage[normalem]{ulem}
\newcommand{\dgal}{\textit{dynamicAL}}
\newcommand{\rom}[1]{\uppercase\expandafter{\romannumeral #1\relax}}

\newcommand{\mylistbegin}{
  \begin{list}{$\bullet$}
   {
     \setlength{\itemsep}{-2pt}
     \setlength{\leftmargin}{1em}
     \setlength{\labelwidth}{1em}
     \setlength{\labelsep}{0.5em} } }
\newcommand{\mylistend}{
   \end{list}  }

\newtheorem{prop}{Proposition}
\newtheorem{rem}{Remark}
\newtheorem{thm}{Theorem}
\newtheorem{lem}{Lemma}

\newtheorem{defi}{Definition}
\def\norm#1{\Vert#1\Vert}

\newcommand{\ie}{\emph{i.e.}} 

%


\DeclareMathOperator{\Tr}{Tr}

\newcommand{\mathbbm}[1]{\text{\usefont{U}{bbm}{m}{n}#1}}

\makeatletter
\newcommand{\nosemic}{\renewcommand{\@endalgocfline}{\relax}}

\title{Deep Active Learning by Leveraging Training Dynamics}

\usepackage{authblk}
\usepackage{xpatch}
\xpatchcmd{\author}{\relax#1\relax}{\relax\detokenize{#1}\relax}{}{}
\author[\empty]{\textbf{Haonan Wang}\textsuperscript{$1$}}
\author[\empty]{\textbf{Wei Huang}\textsuperscript{$2$}}
\author[\empty]{\textbf{Ziwei Wu}\textsuperscript{$1$}}
\author[\empty]{\textbf{Andrew Margenot}\textsuperscript{$1$}}
\author[\empty]{\\\textbf{Hanghang Tong}\textsuperscript{$1$}}
\author[\empty]{\textbf{Jingrui He}\textsuperscript{$1$}}
\affil[$1$]{University of Illinois Urbana-Champaign}
\affil[$2$]{University of New South Wales}
\affil[$1$]{\textit {\{haonan3,ziweiwu2,margenot,htong,jingrui\}@illinois.edu}}
\affil[$2$]{\textit {\{weihuang.uts\}@gmail.com}}

\begin{document}

\maketitle

\begin{abstract}
    \label{abs}
    Active learning theories and methods have been extensively studied in classical statistical learning settings. However, deep active learning, i.e., active learning with deep learning models, is usually based on empirical criteria without solid theoretical justification, thus suffering from heavy doubts when some of those fail to provide benefits in real applications.
    In this paper, by exploring the connection between the generalization performance and the training dynamics, we propose a theory-driven deep active learning method (\textbf{\dgal}) which selects samples to maximize training dynamics. 
    In particular, we prove that the convergence speed of training and the generalization performance are positively correlated under the ultra-wide condition and show that maximizing the training dynamics leads to better generalization performance.
    Furthermore, to scale up to large deep neural networks and data sets, we introduce two relaxations for the subset selection problem and reduce the time complexity from polynomial to constant.
    Empirical results show that \dgal\ not only outperforms the other baselines consistently but also scales well on large deep learning models.
    We hope our work would inspire more attempts on bridging the theoretical findings of deep networks and practical impacts of deep active learning in real applications.
\end{abstract}
\section{Introduction}
\label{intro}

Training deep learning (DL) models usually requires large amount of high-quality labeled data~\cite{zhang2016understanding} to optimize a model with a massive number of parameters. The acquisition of such annotated data is usually time-consuming and expensive, making it unaffordable in the fields that require high domain expertise. 
A promising approach for minimizing the labeling effort is active learning (AL), which aims to identify and label the maximally informative samples, so that a high-performing classifier can be trained with minimal labeling effort~\cite{settles2009active}. 
Under classical statistical learning settings, theories of active learning have been extensively studied from the perspective of VC dimension~\cite{hanneke2014theory}. As a result, a variety of methods have been proposed, such as (i) the version-space-based approaches, which require maintaining a set of models~\cite{cohn1994improving, balcan2009agnostic}, and (ii) the clustering-based approaches, which assume that the data within the same cluster have pure labels ~\cite{dasgupta2008hierarchical}. 

However, the theoretical analyses for these classical settings may not hold for over-parameterized deep neural networks where the traditional wisdom is ineffective~\cite{zhang2016understanding}. For example, margin-based methods select the labeling examples in the vicinity of the learned decision boundary~\cite{balcan2007margin, balcan2013active}. However, in the over-parameterized regime, every labeled example could potentially be near the learned decision boundary~\cite{karzand2019active}.
As a result, theoretically, such analysis can hardly guide us to design practical active learning methods. 
Besides, empirically, multiple deep active learning works, borrowing observations and insights from the classical theories and methods, have been observed unable to outperform their passive learning counterparts in a few application scenarios~\cite{kirsch2019batchbald, ash2019deep}.

On the other hand, the analysis of neural network's optimization and generalization performance has witnessed several exciting developments in recent years in terms of the deep learning theory~\cite{jacot2018NTK, arora2019fineGrained, lee2019wide}. 
It is shown that the training dynamics of deep neural networks using gradient descent can be
characterized by the Neural Tangent Kernel (NTK) of infinite~\cite{jacot2018NTK} or finite~\cite{hanin2019finiteNTK} width networks.
This is further leveraged to characterize the generalization of over-parameterized networks through Rademacher complexity analysis~\cite{arora2019fineGrained,cao2019generalization}. We are therefore inspired to ask: How can we design a practical and generic active learning method for deep neural networks with theoretical justifications?

To answer this question, we firstly explore the connection between the model performance on testing data and the convergence speed on training data for the over-parameterized deep neural networks. Based on the NTK framework~\cite{jacot2018NTK, arora2019fineGrained}, we theoretically show that if a deep neural network converges faster (``Train Faster''), then it tends to have better generalization performance (``Generalize Better''), which matches the existing observations \cite{hardt2016train, liu2017algorithmic, lyle2020bayesian, ru2020revisiting, xu2021optimization}. 
Motivated by the aforementioned connection, we first introduce {\it Training Dynamics}, the derivative of training loss with respect to iteration, as a proxy to quantitatively describe the training process. On top of it, we formally propose our generic and theoretically-motivated deep active learning method, \dgal, which will query labels for a subset of unlabeled samples that maximally increase the training dynamics. In order to compute the training dynamics by merely using the unlabeled samples, we leverage two relaxations {\it Pseudo-labeling} and {\it Subset Approximation} to solve this non-trivial subset selection problem. Our relaxed approaches are capable of effectively estimating the training dynamics as well as efficiently solving the subset selection problem by reducing the complexity from $O(N^b)$ to $O(b)$.

In theory, we coin a new term {\it Alignment} to measure the length of the label vector's projection on the neural tangent kernel space. Then, we demonstrate that higher alignment usually comes with a faster convergence speed and a lower generalization bound. Furthermore, with the help of the maximum mean discrepancy~\cite{borgwardt2006integrating}, we extend the previous analysis to an active learning setting where the i.i.d. assumption may not hold. 
Finally, we show that alignment is positively correlated with our active learning goal, training dynamics, which implies that maximizing training dynamics will lead to better generalization performance.

Regarding experiments, we have empirically verified our theory by conducting extensive experiments on three datasets, CIFAR10~\cite{krizhevsky2009learning}, SVHN~\cite{netzer2011reading}, and Caltech101~\cite{fei2004learning} using three types of network structures: vanilla CNN, ResNet~\cite{he2016deep}, and VGG~\cite{simonyan2014very}.
We first show that the result of the subset selection problem delivered by the subset approximation is close to the global optimal solution.
Furthermore, under the active learning setting, our method not only outperforms other baselines but also scales well on large deep learning models. 

The main contributions of our paper can be summarized as follows:
\begin{itemize}[leftmargin=*]
    \item We propose a theory-driven deep active learning method, \dgal, inspired by the observation of ``train faster, generalize better''. To this end, we introduce the Training Dynamics, as a proxy to describe the training process.
    \item We demonstrate that the convergence speed of training and the generalization performance is strongly (positively) correlated under the ultra-wide condition; we also show that maximizing the training dynamics will lead to a lower generalization error in the scenario of active learning.
    \item  Our method is easy to implement. We conduct extensive experiments to evaluate the effectiveness of \dgal\ and empirically show that our method consistently outperforms other methods in a wide range of active learning settings. 
\end{itemize}
\section{Background}
\label{sec:background}
\textbf{Notation.}
We use the random variable $x\in\mathcal{X}$ to represent the input data feature and $y\in\mathcal{Y}$ as the label where $K$ is the number of classes and $[K]:=\{1,2,...,K\}$.
We are given non-degenerated a data source $D$ with unknown distribution $p({x},{y})$.
We further denote the concatenation of $x$ as $X = [x_1, x_2, ..., x_M]^{\top}$ and that of $y$ as ${Y} = [y_1, y_2, ..., y_M]^{\top} $.
We consider a deep learning classifier $h_{\theta}(x) = \text{argmax}~\sigma(f(x; \theta)): x \rightarrow y$ parameterized by $\theta \in \mathbb{R}^p$, where 
$\sigma(\cdot)$ is the softmax function and $f$ is a neural network. 
Let $\otimes$ be the Kronecker Product and $I_{K} \in \mathbb{R}^{K \times K}$ be an identity matrix.
\vspace{-0.3em}

\textbf{Active learning.}
The goal of active learning is to improve the learning efficiency of a model with a limited labeling budget. 
In this work, we consider the pool-based AL setup, where a finite data set $S =\{({x}_l,{y}_l)\}_{l=1}^M$ with $M$ points are $i.i.d.$ sampled from $p({x},{y})$ as the (initial) labeled set.
The AL model receives an unlabeled data set $U$ sampled from $p({x})$ and request labels according to $p({y}|{x})$ for any ${x} \in U$ in each query round. There are $R$ rounds in total, and for each round, a query set $Q$ consisting of $b$ unlabeled samples can be queried. The total budget size $B = b \times R$.

\textbf{Neural Tangent Kernel.}
\label{ntk_intro}
The Neural Tangent Kernel \cite{jacot2018NTK} has been widely  applied to analyze the dynamics of neural networks. If a neural network is sufficiently wide, properly initialized, and trained by gradient descent with infinitesimal step size (\ie, gradient flow), then the neural network is equivalent to kernel regression predictor with a deterministic kernel $\boldsymbol{\Theta}(\cdot, \cdot)$, called Neural Tangent Kernel (NTK). 
When minimizing the mean squared error loss, at the iteration $t$, the dynamics of the neural network $f$ has a closed-form expression:
\begin{equation}
\begin{small}\label{ntk_intro1}
\begin{aligned}
\frac{d f(\mathcal{X}; \theta(t) )}{d t}=- \mathcal{K}_{t} (\mathcal{X}, \mathcal{X})  \left(f(\mathcal{X}; \theta(t))-\mathcal{Y}\right),
\end{aligned}
\end{small}
\end{equation}
where $\theta(t)$ denotes the parameter of the neural network at iteration $t$, $\mathcal{K}_{t} (\mathcal{X}, \mathcal{X})  \in \mathbb{R}^{|\mathcal{X}| \times K \times |\mathcal{X} |\times K}$ is called the empirical NTK and 
$ \mathcal{K}_t^{i,j}({x}, {x'})=\nabla_{\theta} f^i({x}; \theta(t))^{\top} \nabla_{\theta} f^j({x'}; \theta(t))$ 
{is the inner product of the gradient of the $i$-th class probability and the gradient of the $j$-th class probability }
for two samples ${x}, {x'} \in \mathcal{X}$ and $i,j \in [K]$. The time-variant kernel $\mathcal{K}_{t}(\cdot, \cdot)$ is equivalent to the (time-invariant) NTK  with a high probability, i.e., if the neural network is sufficiently wide and properly initialized, then:
\begin{equation}
\begin{small}\label{ntk_intro2}
\begin{aligned}
\mathcal{K}_{t}(\mathcal{X}, \mathcal{X}) = \boldsymbol{\Theta}(\mathcal{X},
\mathcal{X}) \otimes I_{K}.
\end{aligned}
\end{small}
\end{equation}
The final learned neural network at iteration $t$,  is equivalent to the kernel regression solution with respect to the NTK \cite{lee2019wide}. For any input ${x}$ and training data $\{{X},{Y}\}$ we have,
\begin{equation}
\begin{small}\label{ntk_intro3}
\begin{aligned}
f(x; \theta(t)) \approx \boldsymbol{\Theta}({x},{X})^{\top} \boldsymbol{\Theta}({X},{X})^{-1} ({I} - e^{-\eta \boldsymbol{\Theta}({X},{X}) t}) {Y},
\end{aligned}
\end{small}
\end{equation}
where $\eta$ is the learning rate, $\boldsymbol{\Theta}({x},{X})$ is the NTK matrix between input ${x}$ and all samples in training data ${X}$.
\section{Method}
\label{sec:Method}
In section~\ref{subsec:training_dynamics}, we introduce the notion of training dynamics which can be used to describe the training process. Then, in section~\ref{subsec:method}, based on the training dynamics, we propose \dgal. In section~\ref{subsec:discussion}, we discuss the connection between \dgal \ and existing deep active learning methods.

\subsection{Training dynamics}
\label{subsec:training_dynamics}
In this section, we introduce the notion of training dynamics. 
The cross-entropy loss over the labeled set $S$ is defined as:
\begin{equation}
\begin{small}
\begin{aligned}
L(S) &= \sum_{({x}_l, {y}_l) \in S} \ell(f({x}_l; \theta), {y}_l)\\
&= -\sum_{({x}_l, {y}_l) \in S} \sum_{i\in[K]} {y}_l^i \log \sigma^i(f({x}_l; \theta)),
\end{aligned}
\end{small}
\end{equation}
where $\sigma^i(f({x; \theta})) = \frac{\exp(f^i({x; \theta}))}{ \sum_j \exp(f^j( {x; \theta} )) } $.
We first analyze the dynamics of the training loss, with respect to iteration $t$, on one labeled sample (derivation is in Appendix~\ref{deriv_train}):
\begin{equation}
\begin{small}
\begin{aligned}
\frac{\partial \ell (f({x}; \theta), {y})}{\partial t} = -\sum_i \big( y^i - \sigma^i(f({x}; \theta)) \big) \nabla_{\theta}{f^i({x}; \theta)} \nabla^{\top}_{t} \theta .
\label{equ:CE_error_dynamics}
\end{aligned}
\end{small}
\end{equation}
For neural networks trained by gradient descent, if the learning rate $\eta$ is small, then $\nabla_{t} \theta  = \theta_{t+1} - \theta_{t} = -\eta \frac{\partial \sum_{ ({x}_l, {y}_l) \in S } \ell(f({x}_l; \theta), {y}_l)} {\partial \theta}$.
Taking the partial derivative of the training loss with respect to the parameters, we have (the derivation of the following equation can be found in Appendix~\ref{deriv_CE}):
\begin{equation}
\begin{small}
\begin{aligned}
\frac{\partial \ell (f({x}; \theta), {y})}{\partial \theta}  =  \sum_{j \in [K]} \big(\sigma^j(f({x}; \theta)) - {y}^j \big)
\frac{\partial f^j({x}; \theta)}{\partial \theta}. 
\end{aligned}
\end{small}\label{equ:ce_loss_grad}
\end{equation}
Therefore, we can further get the following result for the dynamics of training loss:
\begin{equation}
\begin{small}
\begin{aligned}
& \frac{\partial \ell (f(x; \theta), y)}{\partial t} = - \eta\sum_i \big( \sigma^i(f(x; \theta)) - y^{i} \big) \sum_{j} \sum_{ ({x}_{l^{'}}, {y}_{l^{'}}) \in S } \nabla_{\theta} f^{i}(x; \theta)^{\top} \nabla_{\theta} f^{j}(x_{l^{'}}; \theta) \big(\sigma^j(f(x_{l^{'}}; \theta)) - y_{l^{'}}^j\big).
\end{aligned}
\end{small}
\end{equation}
Furthermore, we define $d^i({X}, {Y}) = \sigma^i(f(X; \theta)) - Y^i$ and $Y^i$ is the label vector of all samples for $i$-th class. 
Then, the {\it training dynamics} (dynamics of training loss) over training set $S${, computed with the empirical NTK $\mathcal{K}^{ij}({X},{X})$,} is denoted by $G(S) \in \mathbb{R}$:
\begin{equation} 
\begin{small}
\begin{aligned}
G(S)  &=  - \frac{1}{\eta} \sum_{ ({x}_l, {y}_l) \in S } \frac{\partial \ell (f(x_l; \theta), y_l)}{\partial t} =  \sum_{i} \sum_{j} {d^i}({X}, {Y})^{\top} \mathcal{K}^{ij}({X},{X}) d^j({X}, {Y}).
\end{aligned}
\end{small}\label{equ:gradient}
\end{equation}

\subsection{Active learning by activating training dynamics}
\label{subsec:method}
Before we present \dgal, we state Proposition~\ref{prop1}, which serves as the theoretical guidance for \dgal \ and will be proved in Section~\ref{theo}.
\begin{prop}\label{prop1}
For deep neural networks,  converging faster leads to a lower worst-case generalization error.
\end{prop}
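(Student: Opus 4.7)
My plan is to reduce both sides of the implication---convergence speed and worst-case generalization error---to a single spectral quantity, the alignment between the label vector and the top eigenspace of the NTK restricted to the labeled set, and then derive the claim by transitivity. Starting from the NTK closed form in equation~(3), I write the eigendecomposition $\boldsymbol{\Theta}(X,X)=\sum_{k}\lambda_k v_k v_k^{\top}$. A short calculation shows the squared training residual decays as $\sum_k (v_k^{\top}Y)^2 e^{-2\eta \lambda_k t}$, and the training dynamics $G(S)$ from equation~(8) evaluates to $2\sum_k \lambda_k (v_k^{\top}Y)^2 e^{-2\eta \lambda_k t}$. Hence ``converges faster'' amounts, in this regime, to the label vector $Y$ concentrating its energy on the large-$\lambda_k$ eigenvectors of $\boldsymbol{\Theta}(X,X)$.

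I would then define the \emph{alignment} $A(S):=Y^{\top}\boldsymbol{\Theta}(X,X)^{-1}Y=\sum_k (v_k^{\top}Y)^2/\lambda_k$, the squared RKHS norm of the interpolating target. This is the linchpin: for fixed $\|Y\|$, shifting spectral mass of $Y$ toward larger $\lambda_k$ simultaneously increases $G(S)$ at every iteration $t$ and decreases $A(S)$. On the generalization side, I would invoke the Rademacher-complexity bound of Arora et~al.\ for ultra-wide networks trained by gradient descent, which gives, with high probability over an i.i.d.\ labeled sample of size $|S|$, a population error of the final NTK predictor of at most $O\!\bigl(\sqrt{A(S)/|S|}\bigr)$ up to vanishing terms. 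This bound is worst-case in that it is uniform over the function class realizable in the NTK regime, so it is the right object for the proposition's ``worst-case'' qualifier.

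The main obstacle will be making ``converges faster'' formal without assuming that the two compared instances share $\|Y\|$ or the same spectrum. My plan is to adopt the partial order: instance~$1$ converges faster than instance~$2$ if $\|f_1(X;\theta(t))-Y_1\|^2\le\|f_2(X;\theta(t))-Y_2\|^2$ uniformly in $t$. A spectral exchange argument using the monotonicity of $\lambda\mapsto e^{-2\eta\lambda t}$ then forces $A(S_1)\le A(S_2)$, and the Rademacher bound closes the implication. For the eventual extension to the active-learning setting, where the queried subset is not i.i.d.\ with respect to the population, I would patch the bound with a maximum mean discrepancy term in the NTK feature space, as foreshadowed in Section~1. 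The decoupling in which a single quantity simultaneously governs both sides is what turns ``train faster $\Rightarrow$ generalize better'' from a heuristic into a provable statement.
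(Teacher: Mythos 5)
Your chain of reasoning is sound and lands on the same two pillars as the paper (the NTK convergence result and the Arora et al.\ Rademacher bound, Lemmas~\ref{lem:convergence} and~\ref{lem:generalization}), but the route through them is genuinely different. The paper's central quantity is the alignment $\mathcal{A}(X,Y)=\Tr[Y^{\top}\boldsymbol{\Theta}(X,X)Y]=\sum_{k,i}\lambda_i(\Vec{v}_i^{\top}Y^k)^2$, and it proves two sandwich bounds: Theorem~\ref{thm1} shows the training error $\mathcal{E}_t$ is squeezed between $\Tr[Y^{\top}Y]-2t\eta\mathcal{A}$ and $\Tr[Y^{\top}Y]-\eta\mathcal{A}$ (via $(1-\eta\lambda_i)^{2t}\le 1-\eta\lambda_i$ and Bernoulli's inequality), and Theorem~\ref{thm2} shows the generalization term $\Tr[Y^{\top}\boldsymbol{\Theta}^{-1}Y]$ is inversely proportional to $\mathcal{A}$ up to a condition-number factor $\lambda_{\max}/\lambda_{\min}$ (via a Cauchy--Schwarz-type expansion). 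You instead take the RKHS norm $Y^{\top}\boldsymbol{\Theta}^{-1}Y$ itself as the pivot --- note this is \emph{not} what the paper calls alignment, so beware the terminology clash --- define ``converges faster'' as uniform-in-$t$ dominance of the residual curves, and deduce the RKHS-norm comparison from that dominance; the cleanest way to make your ``spectral exchange'' rigorous is simply to integrate the residual inequality over $t\in[0,\infty)$, since $\int_0^{\infty}e^{-2\eta\lambda t}\,dt=1/(2\eta\lambda)$. What your route buys is a tighter implication (the generalization bound is monotone in exactly the quantity you control, with no $\lambda_{\max}/\lambda_{\min}$ slack) and a precise formalization of ``faster''; what the paper's route buys is that the single quantity $\mathcal{A}$ is the one later correlated (empirically, Section~4.3) with the acquisition proxy $G(S)$, so it serves as the bridge to the algorithm, and it avoids committing to a partial order under which two training curves may be incomparable. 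One side remark: your claim that shifting spectral mass of $Y$ toward larger $\lambda_k$ increases $G(S)$ ``at every iteration $t$'' is false as stated, since $\lambda e^{-2\eta\lambda t}$ is not monotone in $\lambda$ for $t>0$; this does not affect the proposition, which only needs the residual and the RKHS norm, but it matters if you intend to reuse the argument to link $G(S)$ to generalization.
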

Motivated by the connection between convergence speed and generalization performance, we propose the general-purpose active learning method, \dgal, which aims to accelerate the convergence by querying labels for unlabeled samples. As we described in the previous section, the training dynamics can be used to describe the training process. Therefore, we employ the training dynamics as a proxy to design an active learning method. Specifically, at each query round, \dgal \ will query labels for samples which maximize the training dynamics $G(S)$, \ie,
\begin{equation}
\begin{small}
\begin{aligned}
Q = \text{argmax}_{Q \subseteq U} G(S \cup \overline{Q} ),~ s.t.~ |Q| = b,
\end{aligned}\label{equ:ce_problem}
\end{small}
\end{equation}
where $\overline{Q}$ is the corresponding data set for $Q$ with ground-truth labels. Notice that when applying the above objective in practice, we are facing two major challenges. 
First, $G(S \cup \overline{Q})$ cannot be directly computed, because the label information of unlabeled examples is not available before the query.
Second, the subset selection problem can be computationally prohibitive if enumerating all possible sets with size $b$. Therefore, we employ the following two relaxations to make this maximization problem to be solved with constant time complexity.

\textbf{Pseudo labeling.} To estimate the training dynamics, we use the predicted label $\hat{{y}}_u$ for sample ${x}_u$ in the unlabeled data set $U$ to compute $G$.
Note, the effectiveness of this adaptation has been demonstrated in the recent gradient-based methods~\cite{ash2019deep, mu2020gradients}, which compute the gradient as if the model’s current prediction on the example is the true label.
Therefore, the maximization problem in Equation~\eqref{equ:ce_problem} is changed to,
\begin{equation}  \label{equ:peseudo_ce_problem}
\begin{small}
\begin{aligned}
 Q = \text{argmax}_{Q \subseteq U} G(S \cup \widehat{Q} ) .
\end{aligned}
\end{small}
\end{equation}
where $\widehat{Q}$ is the corresponding data set for $Q$ with pseudo labels $\widehat{Y}_Q$.

\textbf{Subset approximation.} 
The subset selection problem of Equation~\eqref{equ:peseudo_ce_problem} still requires enumerating all possible subsets of $U$ with size $b$, which is $O(n^b)$. 
We simplify the selection problem to the following problem without causing any change on the result, 
\begin{equation}  
\begin{small}
\begin{aligned}
\text{argmax}_{Q \subseteq U} G(S \cup \widehat{Q} ) =  \text{argmax}_{Q \subseteq U} \Delta(\widehat{Q}|S),
\end{aligned}
\end{small}\label{equ:new_peseudo_ce_problem}
\end{equation}
where $\Delta(\widehat{Q}|S) = G(S \cup \widehat{Q}) - G(S)$ is defined as the change of training dynamics. We approximate the change of training dynamics caused by query set $Q$ using the summation of the change of training dynamics caused by each sample in the query set.
 ~Then the maximization problem can be converted to Equation~\eqref{equ:practical_goal} which can be solved by a greedy algorithm with $O(b)$.
\begin{equation} 
\begin{small}
\begin{aligned}
Q  =  \text{argmax}_{Q \subseteq U} \sum_{(x, \widehat{y}) \in \widehat{Q}} \Delta(\{(x, \widehat{y}) \}|S),~ s.t.~ |Q| = b.
\end{aligned}\label{equ:practical_goal}
\end{small}
\end{equation} 
To further show the approximated result is reasonably good, we decompose the change of training dynamics as (derivation in Appendix~\ref{training_dynamics_decomposition}):
\begin{equation} \label{equ:delta_decompose}
\textcolor{black}{
\begin{small}
\begin{aligned}
&\Delta(\widehat{Q}|S) = \sum_{(x, \widehat{y}) \in \widehat{Q}}  \Delta(\{(x, \widehat{y}) \}|S) + \sum_{(x, \hat{y}), (x', \hat{y}') \in \widehat{Q}} {d^i}(x, \hat{y})^{\top} \mathcal{K}^{ij}(x,x') {d^j}(x', \hat{y}'),
\end{aligned}
\end{small}
}
\end{equation}
\textcolor{black}{where $\mathcal{K}^{ij}(x,x')$ is the empirical NTK.}
The first term in the right hand side is the approximated change of training dynamics. Then, we further define the {\it Approximation Ratio}~\eqref{equ:approx_ratio} which measures the approximation quality,
\begin{equation}
\begin{small}
\begin{aligned}
R(\widehat{Q}|S) = \frac{\sum_{(x, \widehat{y}) \in \widehat{Q}}  \Delta(\{(x, \widehat{y}) \}|S)}{\Delta(\widehat{Q}|S)}.
\end{aligned}
\end{small}\label{equ:approx_ratio}
\end{equation}
We empirically measure the expectation of the Approximation Ratio on two data sets with two different neural networks under three different batch sizes. As shown in Figure~\ref{fig:self_term}, the expectation $\mathbb{E}_{Q \sim U} R(\widehat{Q}|S) \approx 1$ when the model is converged. Therefore, the approximated result delivered by the greedy algorithm is close to the global optimal solution of the original maximization problem, Equation~\eqref{equ:peseudo_ce_problem},  especially when the model is converged.

Based on the above two approximations, we present the proposed method \dgal\ in Algorithm~\ref{alg:main}. As described below, the algorithm starts by training a neural network $f(\cdot; \theta)$ on the initial labeled set $S$ until convergence. Then, for every unlabeled sample $x_u$, we compute pseudo label $\hat{y}_u$ and the change of training dynamics $\Delta(\{(x_u, \widehat{y}_u) \}|S)$. After that, \dgal\ will query labels for top-$b$ samples causing the maximal change on training dynamics, train the neural network on the extended labeled set, and repeat the process. {\color{black} Note, to keep close to the theoretical analysis, re-initialization is not used after each query, which also enables \dgal\ to get rid of the computational overhead of retraining the deep neural networks every time.}

\begin{algorithm}[H] 
\caption{  Deep Active Learning by Leveraging Training Dynamics }
\begin{algorithmic}
\label{alg:main}
\STATE \textbf{Input:} Neural network $f(\cdot; \theta)$, unlabeled sample set $U$, initial labeled set $S$, number of query round $R$, query batch size $b$.\\
\FOR{$r=1$ \KwTo $R$}
  \STATE Train $f(\cdot; \theta)$ on $S$ with cross-entropy loss until convergence. \\
  \FOR{$x_u \in U$} 
      \STATE Compute its pseudo label $\hat{y}_u = \text{argmax} f(x_u; \theta) $.\\
    \STATE Compute $\Delta\left(\{(x_u, \widehat{y}_u) \}|S\right)$.\\
   \ENDFOR
  \STATE Select $b$ query samples $Q$ with the highest $\Delta$ values, and request their labels from the oracle. \\
  \STATE Update the labeled data set $S = S \cup \overline{Q} $ .
\ENDFOR\\
\textbf{return} Final model $f(\cdot; \theta)$.
\end{algorithmic}
\end{algorithm}

\subsection{Relation to existing methods}
\label{subsec:discussion}
Although existing deep active learning methods are usually designed based on heuristic criteria, some of them have empirically shown their effectiveness~\cite{ash2019deep, liu2021influence, huang2016active}. We surprisingly found that our theoretically-motivated method \dgal\ has some connections with those existing methods from the perspective of active learning criterion. The proposed active learning criterion in Equation~\eqref{equ:practical_goal} can be explicitly written as (derivation in Appendix~\ref{training_dynamics_simplification}):
\begin{equation}
\begin{small}
\begin{aligned}
  \Delta(\{(x_u,& \widehat{y}_u) \}| S) =  
  \|\nabla_{\theta} \ell (f({x}_u; \theta), \hat{{y}}_u)\|^2 + 2 \sum_{(x, y) \in S} \nabla_{\theta}\ell(f({x}_u; \theta), \hat{{y}}_u)^{\top} \nabla_{\theta} \ell (f({x}; \theta), {{y}}).
\end{aligned}
\end{small}\label{equ:discussion}
\end{equation}
\color{black}
\textbf{Note.} The first term of the right-hand side can be interpreted as the square of gradient length (2-norm) which reflects the 
uncertainty of the model on the example and has been wildly used as an active learning criterion in some existing works~\cite{huang2016active, ash2019deep, shukla2021egl++}. The second term can be viewed as the influence function~\cite{koh2017understanding} with identity hessian matrix. And recently, ~\cite{liu2021influence} has empirically shown that the effectiveness of using the influence function with identity hessian matrix as active learning criterion. We hope our theoretical analysis can also shed some light on the interpretation of previous methods.
\section{Theoretical analysis}
\label{theo}

In this section, 
we study the correlation between the convergence rate of the training loss and the generalization error under the ultra-wide condition~\cite{jacot2018NTK,arora2019fineGrained}.
We define a measure named {\it alignment} to quantify the convergence rate and further show its connection with generalization bound. The analysis provides a theoretical guarantee for the phenomenon of ``Train Faster, Generalize Better'' as well as our active learning method \dgal\ with a rigorous treatment. Finally, we show that the active learning proxy, training dynamics, is correlated with alignment, which indicates that increasing the training dynamics leads to larger convergence rate and better generalization performance. We leave all proofs of theorems and details of verification experiments in Appendix~\ref{proofs} and~\ref{ultra-exp} respectively.

\subsection{Train faster provably generalize better} 

Given an ultra-wide neural network, the gradient descent can achieve a near-zero training error \cite{jacot2018NTK,arora2019exact} and its generalization ability in unseen data can be bounded \cite{arora2019fineGrained}. It is shown that both the convergence and generalization {of a neural network} can be analyzed using the NTK \cite{arora2019fineGrained}. However, the question what is the relation between the convergence rate and the generalization bound has not been answered. We formally give a solution by introducing the concept of {\it alignment}, which is defined as follows: 

\begin{defi} [Alignment]
Given a data set $ S = \{X,Y\}$, the alignment is a measure of correlation between $X$ and $Y$ projected in the NTK space. In particular, the alignment can be computed by $\mathcal{A}(X,Y) = \Tr [Y^{\top} \boldsymbol{\Theta}(X,X)  Y]  = \sum_{k=1}^K \sum_{i=1}^n \lambda_i (\Vec{v}^{\top}_i Y^k)^2$.
\end{defi}

In the following, we will demonstrate why ``Train Faster'' leads to ``Generalize Better'' through alignment. In particular, the relation of the convergence rate and the generalization bound with alignment is analyzed. The convergence rate of gradient descent for ultra-wide networks is presented in following lemma:
\begin{lem} [Convergence Analysis with NTK, Theorem 4.1 of \cite{arora2019fineGrained}] \label{lem:convergence}
Suppose $\lambda_0 = \lambda_{\min}(\boldsymbol{\Theta}) > 0$ for all subsets of data samples. For $\delta \in (0,1)$, if $m = \Omega(\frac{n^7}{\lambda^4_0  \delta^4 \epsilon^2})$ and $\eta = O(\frac{\lambda_0}{n^2})$, with probability at least $1-\delta$, the network can achieve near-zero training error,
\begin{equation}
\begin{small}
\begin{aligned} 
\norm{{Y}-f({X; \theta(t)})}_2 = \sqrt{\sum_{k=1}^K \sum_{i=1}^{n} (1-\eta \lambda_i)^{2t} (\Vec{v}^{\top}_i {Y}^k)^2} \pm \epsilon,
\end{aligned}
\end{small}\label{equ:conveg}
\end{equation}
where $n$ denotes the number of training samples and $m$ denotes the width of hidden layers. The NTK $\boldsymbol{\Theta} = V^{\top} \Lambda V$ with $\Lambda = \{\lambda_i \}_{i=1}^n$ is a diagonal matrix of eigenvalues and $V =\{\Vec{v}_i \}_{i=1}^n$ is a unitary matrix.
\end{lem}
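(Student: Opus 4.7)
The plan is to linearize the gradient descent dynamics in the NTK regime, solve the resulting linear recursion in closed form via the spectral decomposition of $\boldsymbol{\Theta}(X,X)$, and control the linearization error using the ultra-width assumption. Let $r(t) := f(X;\theta(t)) - Y$ denote the residual. One step of gradient descent on the squared loss gives $r(t+1) = r(t) - \eta\,\mathcal{K}_t(X,X)\,r(t) + E_t$, where $\mathcal{K}_t$ is the empirical NTK introduced in Section~\ref{sec:background} and $E_t$ collects the nonlinear remainder of the parameter update. The first step is to invoke the standard NTK concentration results (e.g.\ Jacot \etal; Arora \etal): when $m \gtrsim n^7/(\lambda_0^4\delta^4\epsilon^2)$, with probability at least $1-\delta$ over the random initialization the parameters remain inside the lazy-training ball around $\theta(0)$ throughout training, so that $\sup_t \|\mathcal{K}_t(X,X) - \boldsymbol{\Theta}(X,X)\otimes I_K\|_{\mathrm{op}}$ and $\sup_t \|E_t\|$ are both polynomially small in $1/\sqrt{m}$.

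Granting this linearization, the recursion collapses to $r(t) \approx (I - \eta\,\boldsymbol{\Theta}(X,X)\otimes I_K)^t\, r(0)$. With NTK-style initialization, $f(X;\theta(0))$ is centered and small, so (up to an $\epsilon$-level term that can be absorbed, or cancelled by the standard symmetric ``doubling trick'') one may take $r(0)\approx -Y$. Diagonalizing $\boldsymbol{\Theta}=V^{\top}\Lambda V$, the component of $r(t)$ along the eigenvector $\Vec{v}_i$ in output coordinate $k$ is $(1-\eta\lambda_i)^t\,\Vec{v}_i^{\top}Y^k$, and Pythagoras across $i$ and $k$ yields
\begin{equation*}
\|Y - f(X;\theta(t))\|_2^2 \;=\; \sum_{k=1}^{K}\sum_{i=1}^{n}(1-\eta\lambda_i)^{2t}\bigl(\Vec{v}_i^{\top}Y^k\bigr)^2,
\end{equation*}
which upon taking a square root is exactly the main term of the lemma.

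The remaining and hardest task is to show that the total linearization error stays within $\pm\epsilon$ in $\ell_2$-norm. Since $\eta = O(\lambda_0/n^2)$, the linear operator $I - \eta\,\boldsymbol{\Theta}$ is a contraction with rate $1-\eta\lambda_0$, so any per-step error of size $\xi$ contributes at most $\xi/(\eta\lambda_0)$ to the final residual. Controlling (i) the NTK perturbation $\|\mathcal{K}_t - \boldsymbol{\Theta}\otimes I_K\|$, (ii) the nonlinear remainder $\|E_t\|$, and (iii) the initialization contribution $\|f(X;\theta(0))\|$, each uniformly over the $O\!\bigl(\log(1/\epsilon)/(\eta\lambda_0)\bigr)$ iterations needed for $(1-\eta\lambda_0)^t$ to drop below $\epsilon/\|Y\|$, is exactly the inductive coupling argument (parameters never leave the ball) that pervades the NTK literature, and it is what forces the stated polynomial width $m = \Omega(n^7/(\lambda_0^4\delta^4\epsilon^2))$. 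By comparison, the spectral computation in step two is essentially a one-liner once the linearization has been locked in, so the real effort sits in the concentration bookkeeping rather than in the algebra that produces the stated closed form.
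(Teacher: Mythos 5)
Your proposal is correct in outline, but it takes a different route from the paper for a simple reason: the paper does not reprove this lemma at all. Since the statement is Theorem~4.1 of Arora et al.\ (the cited work), the paper's ``proof'' consists of restating that result as given --- with the same width requirement $m = \Omega(n^7/(\lambda_0^4\delta^4\epsilon^2))$, step size $\eta = O(\lambda_0/n^2)$, and spectral decomposition $\boldsymbol{\Theta}=V^{\top}\Lambda V$ --- plus one remark specific to the active-learning use case: because the labeled set is updated by $S = S\cup\overline{Q}$ at each query round, the convergence guarantee is simply re-invoked on the enlarged set at every round (the hypothesis $\lambda_{\min}(\boldsymbol{\Theta})>0$ for all subsets is what licenses this). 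You instead sketch the internal mechanism of the cited theorem: linearized residual dynamics $r(t+1)=(I-\eta\mathcal{K}_t)r(t)+E_t$, NTK concentration in the lazy-training ball, spectral diagonalization giving the $(1-\eta\lambda_i)^{2t}(\Vec{v}_i^{\top}Y^k)^2$ terms, and absorption of the initialization and nonlinear remainders into the $\pm\epsilon$. That reconstruction is faithful to how the source proves the result, and it buys insight into where the polynomial width and the $\pm\epsilon$ slack come from, which the paper's citation-style proof does not provide. The trade-off is that your sketch is not self-contained where it matters most: the uniform control of $\|\mathcal{K}_t-\boldsymbol{\Theta}\otimes I_K\|$, $\|E_t\|$, and $\|f(X;\theta(0))\|$ over all iterations --- the inductive coupling argument that actually forces $m=\Omega(n^7/(\lambda_0^4\delta^4\epsilon^2))$ --- is named but deferred to the literature, so as written it is a proof outline resting on the same external result the paper cites, rather than an independent derivation. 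You also do not address the one point the paper's proof does add, namely why the lemma remains applicable after each query round when $S$ is no longer the initial set.
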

In this lemma, we take mean square error (MSE) loss as an example for the convenience of illustration. The conclusion can be extended to other loss functions such as cross-entropy loss (see Appendix B.2 in \cite{lee2019wide}). From the lemma, we find the convergence rate is governed by the dominant term (\ref{equ:conveg}) as $\mathcal{E}_t({X},{Y}) =  \sqrt{\sum_{k=1}^K \sum_{i=1}^{n} (1-\eta \lambda_i)^{2t} (\Vec{v}^{\top}_i {Y}^k)^2}$, which is correlated with the {\it alignment}:

\begin{thm} [Relationship between the convergence rate and alignment] \label{thm1}
Under the same assumptions as in Lemma \ref{lem:convergence}, the convergence rate described by $\mathcal{E}_t$ satisfies, 
\begin{equation}
\begin{small}
\begin{aligned} 
\Tr [Y^{\top} Y] - 2t\eta \mathcal{A}({X},Y) \le \mathcal{E}^2_t(X,Y) \le \Tr[Y^{\top} Y] -\eta \mathcal{A}({X},Y).
\end{aligned}
\end{small}
\end{equation}
\end{thm}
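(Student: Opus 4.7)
The plan is to diagonalize the NTK and reduce the theorem to two elementary scalar inequalities, one per bound. Writing $\boldsymbol{\Theta}(X,X) = V^{\top} \Lambda V$ with orthonormal $V$ and $\Lambda = \mathrm{diag}(\lambda_i)$, I first re-express all three quantities appearing in the theorem in the common eigenbasis $\{\vec{v}_i\}$. Orthonormality of $V$ gives $\Tr[Y^{\top} Y] = \sum_{k=1}^K \|Y^k\|^2 = \sum_{k,i}(\vec{v}_i^{\top} Y^k)^2$; the definition of alignment gives $\mathcal{A}(X,Y) = \sum_{k,i} \lambda_i (\vec{v}_i^{\top} Y^k)^2$; and Lemma~\ref{lem:convergence} already states $\mathcal{E}_t^2(X,Y) = \sum_{k,i}(1-\eta\lambda_i)^{2t}(\vec{v}_i^{\top} Y^k)^2$. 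So all three quantities are weighted sums of the same non-negative weights $(\vec{v}_i^{\top} Y^k)^2$ against different scalar coefficients in $\lambda_i$.

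Once this is in hand, both sides of the theorem reduce, term by term, to the scalar inequality
\[
1 - 2t\eta\lambda_i \;\le\; (1-\eta\lambda_i)^{2t} \;\le\; 1 - \eta\lambda_i,
\]
after which I multiply by $(\vec{v}_i^{\top} Y^k)^2 \ge 0$ and sum over $k$ and $i$. The lower bound is Bernoulli's inequality applied with $x = -\eta\lambda_i \ge -1$ and integer exponent $2t \ge 2$. For the upper bound, writing $x := \eta\lambda_i \in [0,1]$ and using $t \ge 1$, I have $(1-x)^{2t} \le (1-x)^2 = (1-x) - x(1-x) \le 1 - x$, since $x(1-x)\ge 0$.

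The only non-mechanical prerequisite is the pointwise bound $\eta\lambda_i \in [0,1]$, which I will justify from the hypotheses of Lemma~\ref{lem:convergence}: the step-size condition $\eta = O(\lambda_0/n^2)$ together with the standard NTK upper bound $\lambda_i \le \Tr(\boldsymbol{\Theta}(X,X)) = O(n)$ yields $\eta \lambda_i = O(\lambda_0/n) \ll 1$ in the ultra-wide regime considered here. I expect this boundedness check, together with keeping signs straight when summing the weighted inequalities, to be the main point that needs care; the remaining steps are bookkeeping in the eigendecomposition and one application each of Bernoulli's inequality and a short quadratic manipulation.
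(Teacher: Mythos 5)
Your proposal is correct and follows essentially the same route as the paper's proof: both bounds come from the scalar inequalities $(1-\eta\lambda_i)^{2t} \le 1-\eta\lambda_i$ (using $0 \le 1-\eta\lambda_i \le 1$) and Bernoulli's inequality $(1-\eta\lambda_i)^{2t} \ge 1-2t\eta\lambda_i$, summed against the weights $(\vec{v}_i^{\top}Y^k)^2$ in the NTK eigenbasis. The only difference is that you explicitly justify $\eta\lambda_i \in [0,1]$ from the step-size condition, a point the paper simply asserts as easy to check.
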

\begin{rem}
In the above theorem, we demonstrate that the alignment can measure the convergence rate. Especially, we find that both the upper bound and the lower bound of error $\mathcal{E}_t({X},Y)$ are inversely proportional to the alignment, which implies that higher alignment will lead to achieving faster convergence. 
\end{rem}


Now we analyze the generalization performance of the proposed method through complexity analysis. We demonstrate that the ultra-wide networks can achieve a reasonable generalization bound.
\begin{lem} [Generalization bound with NTK, Theorem 5.1 of \cite{arora2019fineGrained}] \label{lem:generalization} Suppose data $S = \{ (x_i,y_i)\}_{i=1}^n$ are i.i.d. samples from a non-degenerate distribution $p(x,y)$, and $m \ge {\rm poly}(n, \lambda_0^{-1}, \delta^{-1})$. Consider any loss function $\ell: \mathbb{R} \times \mathbb{R} \rightarrow [0,1]$ that is $1$-Lipschitz, then with probability at least $1-\delta$ over the random initialization, the network trained by gradient descent for $T \ge \Omega(\frac{1}{\eta \lambda_0} \log \frac{n}{\delta})$ iterations has population risk $\mathcal{L}_p = \mathbb{E}_{(x,y)\sim p(x,y)}[\ell(f_T(x; \theta),y)]$ that is bounded as follows:
\begin{equation}
\begin{small}
\begin{aligned}
   \mathcal{L}_p \le \sqrt{\frac{2 \Tr[{Y^{\top} \boldsymbol{\Theta}^{-1}(X, X) Y}]}{n}}+O \bigg(\sqrt{\frac{\log \frac{n}{\lambda_0 \delta}}{n}} \bigg).
\end{aligned}
\end{small}
\end{equation}
\end{lem}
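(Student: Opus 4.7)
The plan is to combine the Neural Tangent Kernel (NTK) theory with a classical Rademacher complexity argument. At a high level, for an ultra-wide network with width $m \ge \mathrm{poly}(n, \lambda_0^{-1}, \delta^{-1})$, gradient descent stays in the ``lazy'' regime, so the trained network $f_T$ is well approximated by the kernel regression predictor associated with $\boldsymbol{\Theta}(X,X)$. That predictor lives in the NTK RKHS with squared norm at most $\Tr[Y^{\top}\boldsymbol{\Theta}^{-1}(X,X) Y]$, which gives a data-dependent complexity handle that we convert into a population-risk bound.

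Concretely, I would proceed in four steps. \emph{Step 1 (lazy linearization).} Show that throughout training $\theta(t)$ remains in an $O(1)$ ball around $\theta(0)$, so the empirical kernel $\mathcal{K}_t(X,X)$ is uniformly close to $\boldsymbol{\Theta}(X,X)\otimes I_K$; this replaces the nonlinear dynamics with NTK kernel gradient descent up to $\epsilon$-error, in the same spirit as Lemma~\ref{lem:convergence}. \emph{Step 2 (identify the limit predictor).} Solving the linearized dynamics gives $f_T(x;\theta)\approx \boldsymbol{\Theta}(x,X)^{\top}\boldsymbol{\Theta}(X,X)^{-1}(I - e^{-\eta\boldsymbol{\Theta} T})Y$; for $T \ge \Omega\!\left(\tfrac{1}{\eta\lambda_0}\log\tfrac{n}{\delta}\right)$ the residual is negligible and $f_T$ is essentially the kernel regression solution with squared RKHS norm $\Tr[Y^{\top}\boldsymbol{\Theta}^{-1}(X,X) Y]$. \emph{Step 3 (data-dependent Rademacher bound).} Exploiting the explicit form of $f_T$ on the training inputs, derive the data-dependent Rademacher complexity bound $\hat{\mathcal{R}}_n(\mathcal{F}) \le \sqrt{\Tr[Y^{\top}\boldsymbol{\Theta}^{-1}(X,X) Y]/(2n)}$; Talagrand's contraction lemma then lifts this to the loss class $\ell\circ\mathcal{F}$ since $\ell$ is $1$-Lipschitz. \emph{Step 4 (uniform convergence).} Apply the standard McDiarmid/Rademacher generalization inequality to pass from empirical to population risk, picking up the additive $O(\sqrt{\log(n/(\lambda_0\delta))/n})$ confidence term. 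The empirical loss is already near zero by Lemma~\ref{lem:convergence}, so the two pieces combine into the stated bound.

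The main technical obstacle is \emph{Step 1}: maintaining tight perturbation bounds over all $T$ iterations so that simultaneously (i) $\mathcal{K}_t \approx \boldsymbol{\Theta}\otimes I_K$ and (ii) the Jacobian at $\theta(t)$ stays close to that at $\theta(0)$. This is what forces the width requirement $m \ge \mathrm{poly}(n,\lambda_0^{-1},\delta^{-1})$, and the argument typically combines a matrix-concentration bound on the Gaussian initialization with an inductive propagation of the approximation across iterations, controlling both operator-norm deviations of the empirical kernel and pointwise deviations of $f_t$ from its linearization. Once the lazy regime is secured, the remainder is essentially textbook kernel learning theory: a data-dependent Rademacher bound of order $\sqrt{\Tr[Y^{\top}\boldsymbol{\Theta}^{-1}Y]/n}$ followed by standard uniform convergence.
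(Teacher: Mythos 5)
Your proposal follows essentially the same route as the paper's proof (itself a sketch of Theorem 5.1 of Arora et al.): lazy training keeps the weights near initialization, the total parameter displacement --- equivalently the RKHS norm of the kernel-regression limit --- is $\sqrt{\Tr[Y^{\top}\boldsymbol{\Theta}^{-1}(X,X)Y]}$, and a data-dependent Rademacher complexity bound for the class of networks near initialization combined with standard uniform convergence (the $1$-Lipschitz loss handled by contraction) yields the stated population-risk bound. The only detail you gloss over is the union bound over a discretized grid of complexity values $B_i$, $i \in \{1,\dots,O(n/\lambda_0)\}$, which is what makes the data-dependent complexity bound legitimate and produces the $\log\frac{n}{\lambda_0\delta}$ factor in the additive term.
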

In this lemma, we show that the dominant term in the generalization upper bound is $\mathcal{B}(X,Y) = \sqrt{\frac{2 \Tr[{Y^{\top} \boldsymbol{\Theta}^{-1} Y}]}{n}}$. In the following theorem, we further prove that this bound is inversely proportional to the alignment $\mathcal{A}(X,Y)$.

\begin{thm} [Relationship between the generalization bound and alignment] \label{thm2}
Under the same assumptions as in Lemma \ref{lem:generalization},
 if we define the generalization upper bound as $\mathcal{B}(X,Y) = \sqrt{\frac{2 \Tr[{Y^{\top} \boldsymbol{\Theta}^{-1} Y}]}{n}}$, then it can be bounded with the alignment as follows:
\begin{equation}
\begin{small}
\label{equ:alignment_bound}
\begin{aligned}
\frac{ \Tr^2[Y^{\top} Y]}{ \mathcal{A}({X},Y)} \le \frac{n}{2}\mathcal{B}^2({X},Y) \le \frac{\lambda_{max}}{\lambda_{min}} \frac{  \Tr^2 [Y^{\top} Y] }{\mathcal{A}({X},Y) } .
\end{aligned}
\end{small}
\end{equation}             
\end{thm}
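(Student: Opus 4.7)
The plan is to reduce everything to a common spectral basis and then invoke Cauchy--Schwarz for the lower bound and crude eigenvalue bounds for the upper bound. Since $\boldsymbol{\Theta} = V^\top \Lambda V$ with $V$ unitary, for each class column $Y^k$ one has $\Tr[(Y^k)^\top Y^k] = \sum_i (\vec v_i^\top Y^k)^2$, $(Y^k)^\top \boldsymbol{\Theta} Y^k = \sum_i \lambda_i (\vec v_i^\top Y^k)^2$, and $(Y^k)^\top \boldsymbol{\Theta}^{-1} Y^k = \sum_i \lambda_i^{-1} (\vec v_i^\top Y^k)^2$. Summing over $k$ and defining $a_i := \sum_{k=1}^K (\vec v_i^\top Y^k)^2 \ge 0$, the three quantities become
\begin{equation*}
\Tr[Y^\top Y] = \sum_i a_i,\qquad \mathcal{A}(X,Y) = \sum_i \lambda_i a_i,\qquad \Tr[Y^\top \boldsymbol{\Theta}^{-1} Y] = \sum_i \frac{a_i}{\lambda_i}.
\end{equation*}
The theorem is equivalent to the sandwich
\begin{equation*}
\frac{(\sum_i a_i)^2}{\sum_i \lambda_i a_i} \;\le\; \sum_i \frac{a_i}{\lambda_i} \;\le\; \frac{\lambda_{\max}}{\lambda_{\min}}\cdot\frac{(\sum_i a_i)^2}{\sum_i \lambda_i a_i}.
\end{equation*}

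For the lower bound I would apply the Cauchy--Schwarz inequality to the decomposition $a_i = \sqrt{\lambda_i a_i}\cdot\sqrt{a_i/\lambda_i}$, giving
\begin{equation*}
\Big(\sum_i a_i\Big)^2 \le \Big(\sum_i \lambda_i a_i\Big)\Big(\sum_i \frac{a_i}{\lambda_i}\Big),
\end{equation*}
which rearranges to the desired lower bound on $\Tr[Y^\top \boldsymbol{\Theta}^{-1} Y]$ and, after multiplying by $2/n$, yields $\Tr^2[Y^\top Y]/\mathcal{A}(X,Y) \le (n/2)\mathcal{B}^2(X,Y)$.

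For the upper bound I would avoid Cauchy--Schwarz and instead use the two one-sided eigenvalue estimates $\sum_i a_i/\lambda_i \le \lambda_{\min}^{-1}\sum_i a_i$ and $\sum_i \lambda_i a_i \le \lambda_{\max}\sum_i a_i$. Chaining these gives
\begin{equation*}
\sum_i \frac{a_i}{\lambda_i} \le \frac{1}{\lambda_{\min}}\sum_i a_i = \frac{\lambda_{\max}}{\lambda_{\min}}\cdot\frac{\sum_i a_i}{\lambda_{\max}} \le \frac{\lambda_{\max}}{\lambda_{\min}}\cdot\frac{(\sum_i a_i)^2}{\sum_i \lambda_i a_i},
\end{equation*}
which is exactly the required upper bound after multiplying by $2/n$.

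I do not expect a real obstacle here; both bounds are tight-looking but elementary once the eigenbasis is installed. The only subtlety is handling the multi-class structure cleanly, i.e.\ verifying that summing the per-column spectral identities over $k \in [K]$ is legitimate because $\boldsymbol{\Theta}$ (as opposed to the full empirical NTK $\mathcal{K}$) acts identically on every class channel by the Kronecker structure in Eq.~\eqref{ntk_intro2}. After that observation the $a_i$ notation collapses the problem to a one-dimensional inequality and both directions follow in a few lines.
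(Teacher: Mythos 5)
Your proof is correct, and it rests on the same core ingredients as the paper's: expand everything in the eigenbasis $\boldsymbol{\Theta}=V^{\top}\Lambda V$, use Cauchy--Schwarz for the lower bound, and use the crude bounds $\lambda_{\min}\le\lambda_i\le\lambda_{\max}$ for the upper bound. The one genuine difference is how the class index is handled. You collapse it immediately by setting $a_i=\sum_{k}(v_i^{\top}Y^k)^2$, so that the whole theorem becomes the scalar sandwich $(\sum_i a_i)^2\le(\sum_i\lambda_i a_i)(\sum_i a_i/\lambda_i)\le\frac{\lambda_{\max}}{\lambda_{\min}}(\sum_i a_i)^2$, and the lower bound is a single application of Cauchy--Schwarz. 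The paper instead expands $\frac{n}{2}\mathcal{B}^2\cdot\mathcal{A}$ as a double sum over class pairs $(k,k')$ and bounds each pair separately; for the lower bound, the per-pair claim $\big(\sum_i\lambda_i(v_i^{\top}Y^k)^2\big)\big(\sum_i\tfrac{1}{\lambda_i}(v_i^{\top}Y^{k'})^2\big)\ge\big(\sum_i(v_i^{\top}Y^k)^2\big)\big(\sum_i(v_i^{\top}Y^{k'})^2\big)$ is actually false term-by-term when $k\neq k'$ (take $\lambda=(2,\tfrac12)$ with $Y^k$ and $Y^{k'}$ concentrated on opposite eigenvectors), and is only rescued by symmetrizing over $(k,k')$ and $(k',k)$ together with an AM--GM step — which is what the paper's brief ``quadratic mean $\ge$ geometric mean'' remark is implicitly doing. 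Your aggregated formulation sidesteps that subtlety entirely, so it is the cleaner route to the lower bound; the upper bound argument is essentially identical in both proofs.
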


\begin{rem}
Theorems \ref{thm1} and \ref{thm2} reveal that the cause for the correlated phenomenons ``Train Faster'' and ``Generalize Better'' is the projection of label vector on the NTK space (alignment). 
\end{rem}

\subsection{`` Train Faster, Generalize Better '' for active learning}
\begin{wrapfigure}{r}{5cm}
\vspace{-20pt}
\hspace{-5pt}
\includegraphics[width=5.1cm]{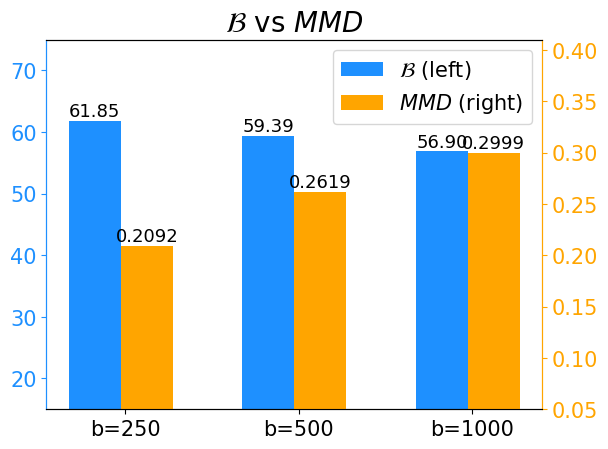}
\caption{Comparison between Empirical Generalization Bound and MMD.}
\label{fig:MMD_B}
\vspace{-30 pt}
\end{wrapfigure}
In the NTK framework~\cite{arora2019fineGrained}, the empirical average requires data in $S$ is \textit{i.i.d.} samples (Lemma \ref{lem:generalization}). However, this assumption may not hold in the active learning setting with multiple query rounds, because the training data is composed by \textit{i.i.d.} sampled initial label set and samples queried by active learning policy. To extend the previous analysis principle to active learning, we follow \cite{wang2015querying} to reformulate the Lemma \ref{lem:generalization} as:
\begin{equation}
\begin{small}
\begin{aligned}
   &\mathcal{L}_{{p}} \le ( \mathcal{L}_{{p}} - \mathcal{L}_{{q}} )
   + \sqrt{\frac{2 \Tr[{Y^{\top} \boldsymbol{\Theta}^{-1}(X, X) Y}]}{n}}
   +O \bigg(\sqrt{\frac{\log \frac{n}{\lambda_0 \delta}}{n}} \bigg),
\end{aligned}
\end{small}\label{lemma2_active}
\end{equation}  
where $\mathcal{L}_{{q}}=\mathbb{E}_{(x,y)\sim q(x,y)}[\ell(f(x; \theta),y)]$, $q(x,y)$ denotes the data distribution after query, and $X, Y$ includes initial training samples and samples after query. There is a new term in the upper bound, which is the difference between the true risk under different data distributions.
\begin{equation}
\begin{small}
\begin{aligned}
\mathcal{L}_{{p}} - \mathcal{L}_{{q}}  =& \mathbb{E}_{(x,y)\sim p(x,y)}[\ell(f(x; \theta),y)] - \mathbb{E}_{(x,y)\sim q(x,y)}[\ell(f(x; \theta),y)]
\end{aligned}
\end{small}
\end{equation}
Though in active learning the data distribution for the labeled samples may be different from the original distribution, they share the same conditional probability $p(y | x)$. We define $g(x) = \int_{y} \ell(f({x; \theta}), y) p(y|x) dy$, and then we have: 
\begin{equation}
\begin{small}
\begin{aligned}
\mathcal{L}_{{p}} - \mathcal{L}_{{q}} = \int_{{x}} g({x}) p({x}) d {x}-\int_{{x}} g({x}) q({x}) d {x} .
\end{aligned}
\end{small}
\end{equation}
To measure the distance between two distributions, we employ the Maximum Mean Discrepancy (MMD) with neural tangent kernel~\cite{jia2021efficient} (derivation in Appendix~\ref{derivation_mmd}).
\begin{equation}
\begin{small}
\begin{aligned}
\mathcal{L}_{{p}} - \mathcal{L}_{{q}} \leq \text{MMD}( S_{0}, S, \mathcal{H}_{\boldsymbol{\Theta}} )  + O\Big(\sqrt{\frac{C \ln (1 / \delta)}{n}}\Big).
\end{aligned}\label{mmd}
\end{small}
\end{equation}
\color{black}
Slightly overloading the notation, we denote the initial labeled set as $S_{0}$, $\mathcal{H}_{\boldsymbol{\Theta}}$ as the associated Reproducing Kernel Hilbert Space for the NTK $\boldsymbol{\Theta}$,  and $\forall x, x' \in S, \boldsymbol{\Theta}(x,x') \leq C$. Note, $\text{MMD}( S_{0}, S, \mathcal{H}_{\boldsymbol{\Theta}} )$ is the empirical measure for $\text{MMD}( p(x), q(x), \mathcal{H}_{\boldsymbol{\Theta}} )$.
We empirically compute MMD and the dominant term of the generalization  upper  bound $\mathcal{B}$ under the active learning setting with our method \dgal. 
As shown in Figure \ref{fig:MMD_B}, on CIFAR10 with a CNN target model (three convolutional layers with global average pooling), the initial labeled set size $|S| = 500$, query round $R=1$ and budget size $b \in \{250, 500, 1000\}$, we observe that, under different active learning settings, the MMD is always much smaller than the $\mathcal{B}$. 
{\color{black}Besides, we further investigate the MMD and $\mathcal{B}$ for $R\geq2$ and observe the similar results.
Therefore, the lemma \ref{lem:generalization} still holds for the target model with \dgal. 
More results and discussions for $R\geq2$ are in Appendix~\ref{mmd_multi_round} and the computation details of MMD and NTK are in Appendix~\ref{ntk_mmd}.}

\newpage
\subsection{Alignment and training dynamics in active learning}
\begin{wrapfigure}{r}{4.7cm}
\hspace{-10pt}
\includegraphics[width=4.5cm]{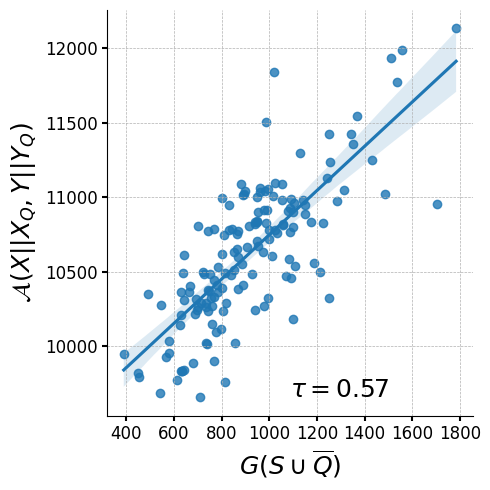}
\vspace{-10pt}
\caption{Relation between Alignment and Training Dynamics.}
\label{fig:align_dynamics}
\vspace{-10pt}
\end{wrapfigure}
In this section, we show the relationship between the alignment and the training dynamics. To be consistent with the previous theoretical analysis (Theorem~\ref{thm1} and~\ref{thm2}), we use the training dynamics with mean square error under the ultra-width condition, which can be expressed as $G_{MSE}(S) =  \Tr{ \big[ (f(X; \theta)-Y)^{\top}\boldsymbol{\Theta}(X,X) (f(X; \theta)-Y) \big] }$. Due to the limited space, we leave the derivation in Appendix~\ref{mse_gradient}. 
To further quantitatively evaluate the correlation between $G_{MSE}(S \cup \overline{Q})$ and $\mathcal{A}(X \| X_Q, Y \| Y_Q)$, we utilize the Kendall $\tau$ coefficient~\cite{kendall1938new} to empirically measure their relation. 
As shown in Figure~\ref{fig:align_dynamics}, for CNN on CIFAR10 with active learning setting, where $|S|=500$ and $|\overline{Q}|=250$, there is a strong agreement between $G_{MSE}(S \cup \overline{Q})$ and $\mathcal{A}(X \| X_Q, Y \| Y_Q)$, which further indicates that increasing the training dynamics will lead to a faster convergence and better generalization performance.
More details about this verification experiment are in Appendix~\ref{dynamics_alignment}.
\section{Experiments}
\label{exp}

\subsection{Experiment setup}
\vspace{-2 pt}

\textbf{Baselines.} We compare \dgal\ with the following eight baselines: Random, Corset, Confidence  Sampling~(Conf), Margin Sampling~(Marg), Entropy, and Active Learning by Learning~(ALBL), Batch Active learning by Diverse Gradient Embeddings~(BADGE). Description of baseline methods is in Appendix~\ref{baselines}.

\textbf{Data sets and Target Model.} We evaluate all the methods on three benchmark data sets, namely, CIFAR10~\cite{krizhevsky2009learning}, SVHN~\cite{netzer2011reading}, and Caltech101~\cite{fei2004learning}. We use accuracy as the evaluation metric and report the mean value of 5 runs. We consider three neural network architectures: vanilla CNN, ResNet18~\cite{he2016deep}, and VGG11~\cite{simonyan2014very}. For each model, we keep the hyper-parameters used in their official implementations. More information about the implementation is in Appendix~\ref{model_archi}.

\textbf{Active Learning Protocol.} 
Following the previous evaluation protocol~\cite{ash2019deep}, we compare all those active learning methods in a batch-mode setup with an initial set size $M=500$ for all those three data sets, batch size $b$ varying from $\{250, 500, 1000\}$. 
\textcolor{black}{For the selection of test set, we use the benchmark split of the CIFAR10~\cite{krizhevsky2009learning}, SVHN~\cite{netzer2011reading} and sample 20\% from each class to form the test set for the Caltech101~\cite{fei2004learning}.}

\subsection{Results and analysis}
\vspace{-2 pt}
The main experimental results have been provided as plots due to the limited space. We also provide tables in which we report the mean and standard deviation for each plot in Appendix~\ref{exp_number}.

\textbf{Overall results.}
The average test accuracy at each query round is shown in Figure~\ref{fig:main}. Our method \dgal\ can consistently outperform other methods for all query rounds. This suggests that \dgal\ is a good choice regardless of the labeling budget.
And, we notice \dgal\ can work well on data sets with a large class number, such as Caltech101. However, the previous state-of-the-art method, BADGE, cannot be scaled up to those data sets, because the required memory is linear with the number of classes. 
{\color{black} Besides, because \dgal\ depends on pseudo labeling, a relatively large initial labeled set can provide advantages for \dgal. Therefore, it is important to examine whether \dgal\ can work well with a small initial labeled set.
As shown in Figure~\ref{fig:main}, \dgal\ is able to work well with a relatively small initial labeled set ($M=500$).}
Due to the limited space, we only show the result under three different settings in Figure~\ref{fig:main}. More evaluation results are in Appendix~\ref{rest_main_exp}. 
{\color{black} Moreover, although the re-initialization trick makes \dgal\ deviate from the dynamics analysis, we investigate the effect of it to \dgal\ and provide the empirical observations and analysis in Appendix~\ref{retrain_exp}. }
\begin{figure*}[h] 
\centering
\vspace{-10 pt}
\hspace{-10pt}
\begin{minipage}{0.28\textwidth}
\includegraphics[width =1.8in]{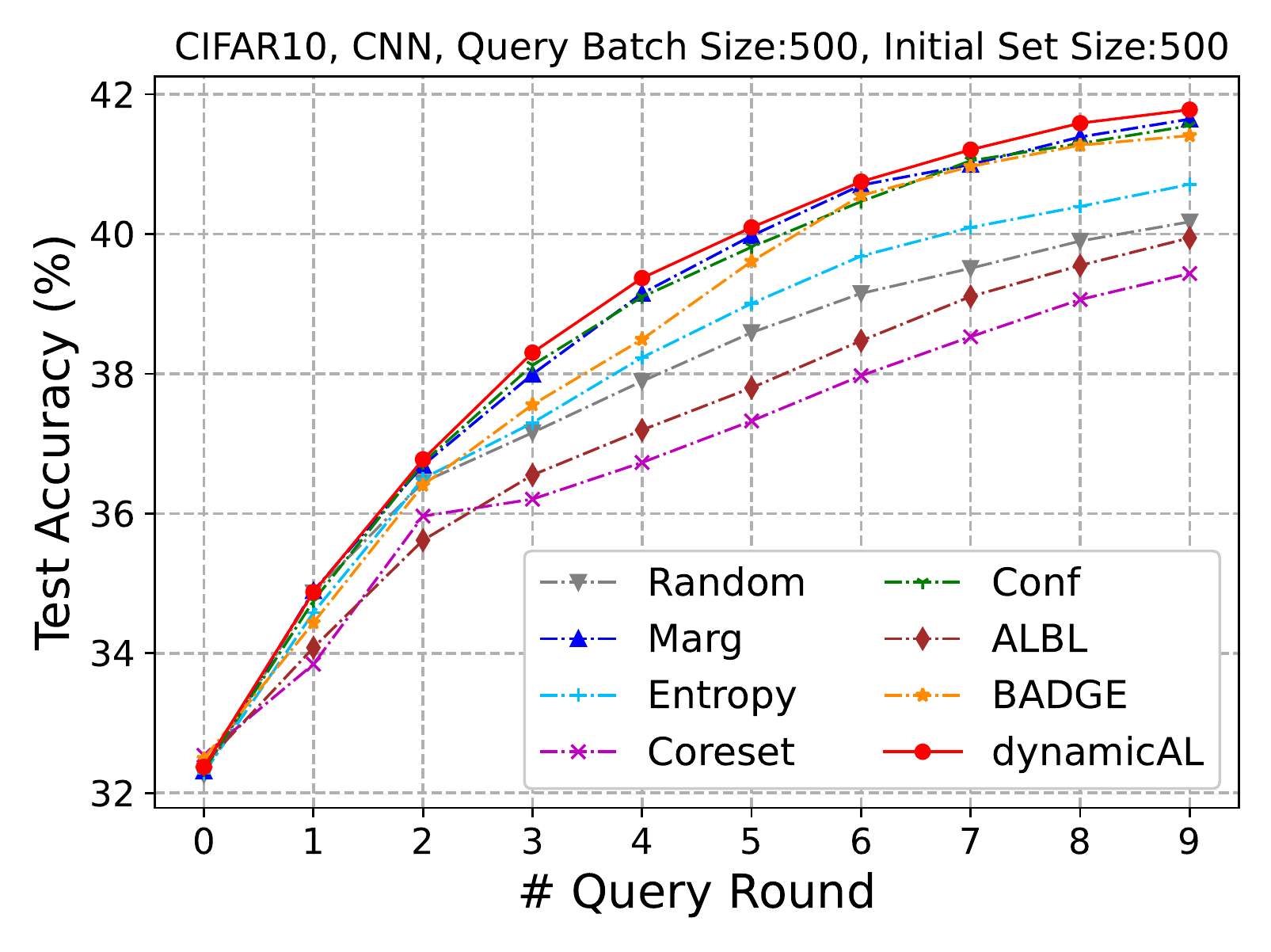}
\end{minipage} 
\hspace{15pt}
\begin{minipage}{0.28\textwidth}
\includegraphics[width =1.8in]{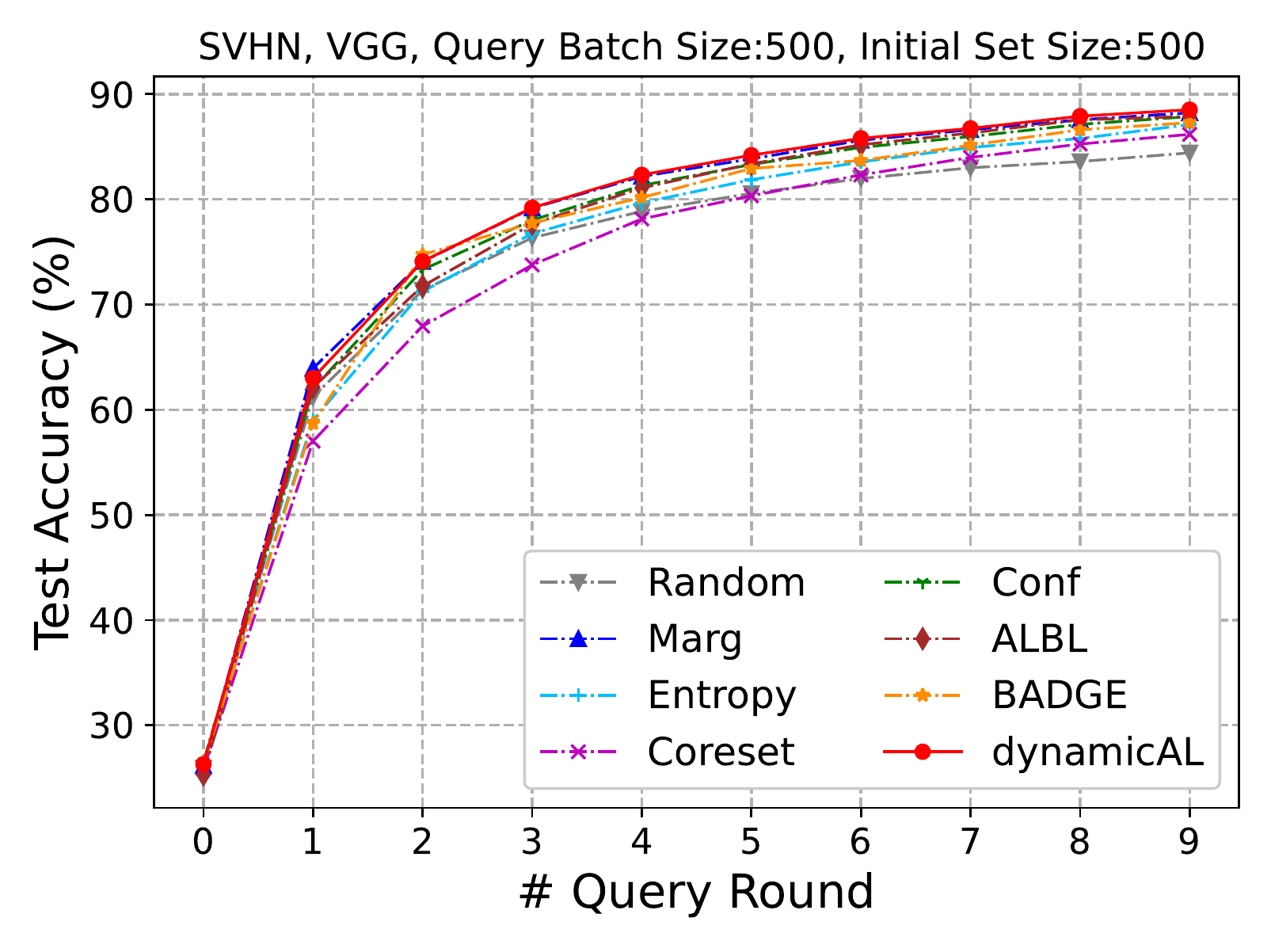}
\end{minipage}
\hspace{15pt}
\begin{minipage}{0.28\textwidth}
\includegraphics[width =1.8in]{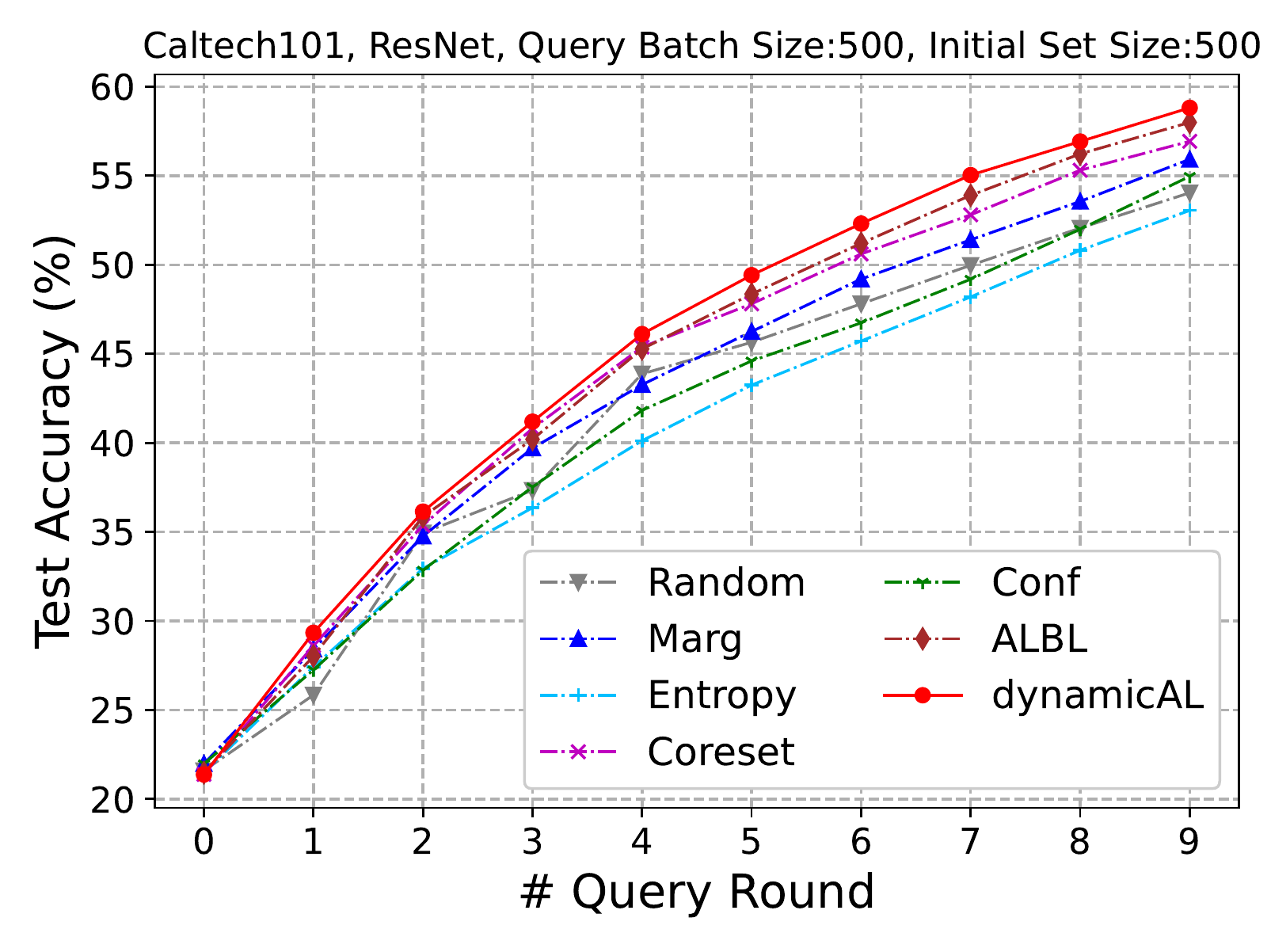}
\end{minipage}
\vspace{-3 pt}
\caption{Active learning test accuracy versus the number of query rounds for a range of conditions.}
\label{fig:main}
\vspace{-6 pt}
\end{figure*}

\textbf{Effect of query size and query round.}
Given the total label budget $B$, the increasing of query size always leads to the decreasing of query round. We study the influence of different query size and query round on \dgal\ from two perspectives. First, we study the expected approximation ratio with different query batch sizes on different data sets. As shown in Figure~\ref{fig:self_term}, under different settings the expected approximation ratio always converges to 1 with the increase of training epochs, which further indicates that the query set selected by using the approximated change of training dynamics is a reasonably good result for the query set selection problem.
Second, we study influence of query round for actual performance of target models. The performance for different target models on different data sets with total budge size $B=1000$ is shown in Table~\ref{tab:batch_size}. For certain query budget, our active learning algorithm can be further improved if more query rounds are allowed. 
\begin{figure}[h!] 
\centering
\hspace{-30pt}
\begin{minipage}{0.32\textwidth}
\includegraphics[width =2.1in]{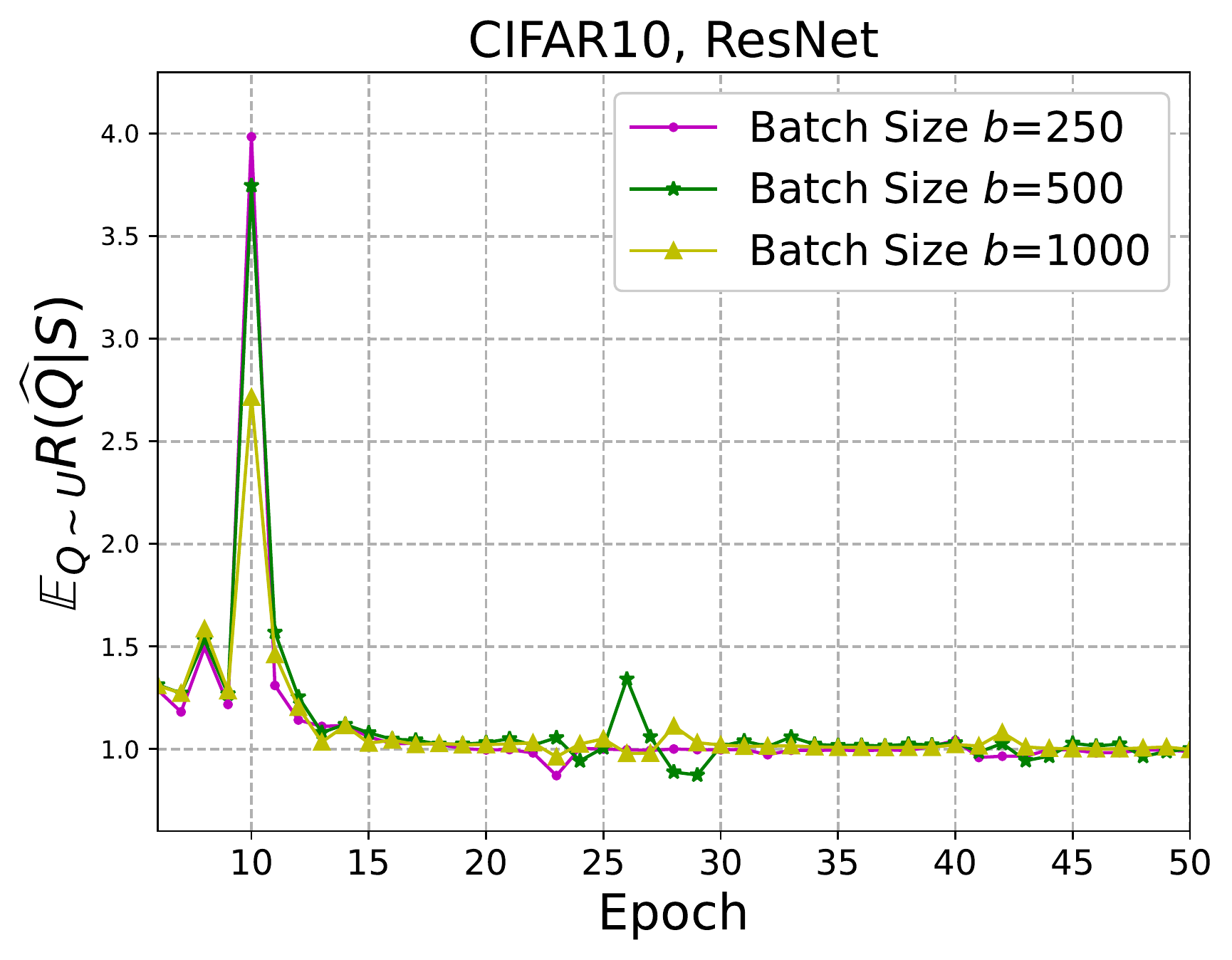}
\end{minipage}
\hspace{30pt}
\begin{minipage}{0.32\textwidth}
\includegraphics[width =2.1in]{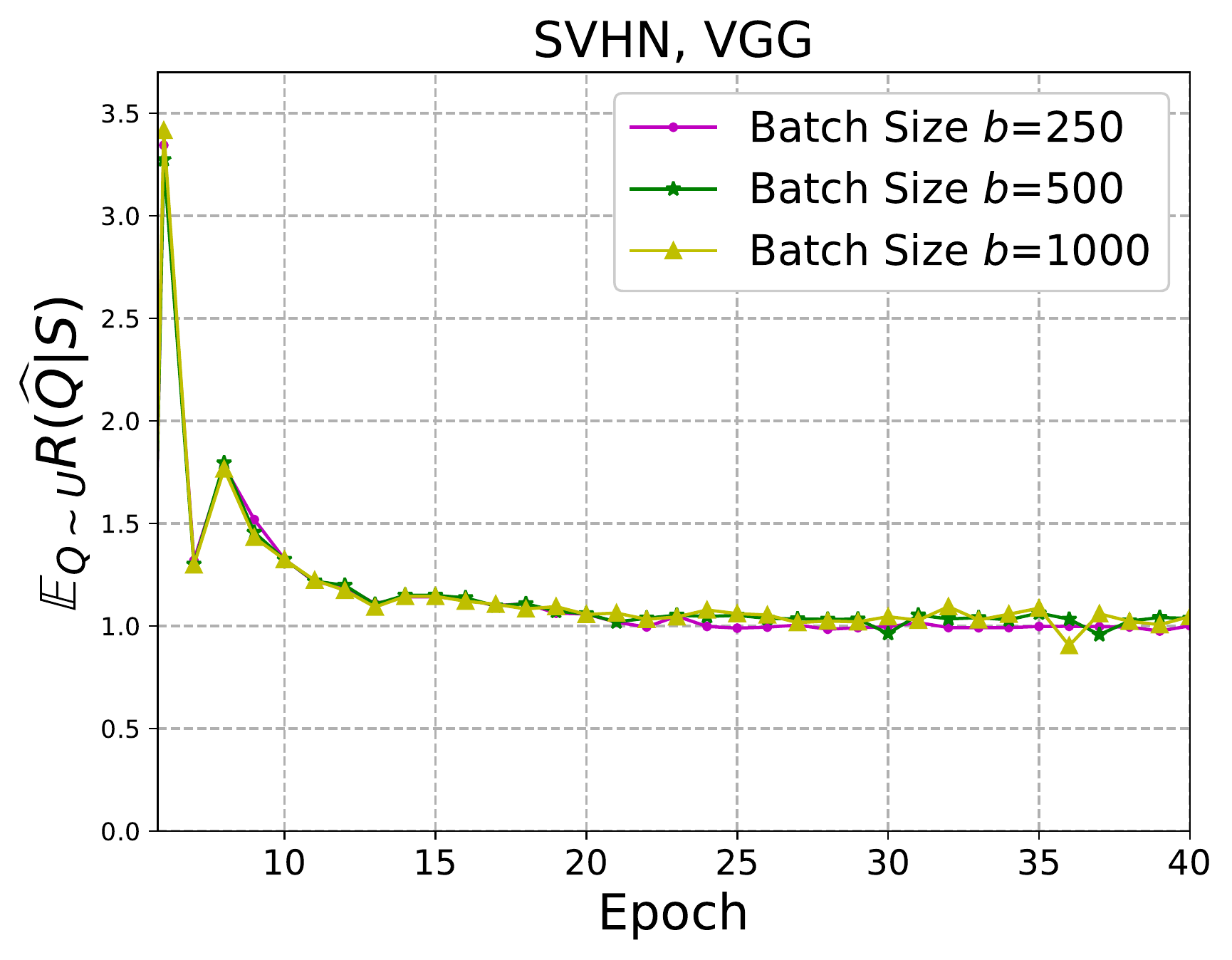}
\end{minipage}
\vspace{-3 pt}
\caption{{The Expectation of the Approximation Ratio with different query batch sizes $b$.}}
\vspace{-10 pt}
\label{fig:self_term}
\end{figure}
\begin{table}[!h]
\vspace{-7pt} 
\centering
\caption{Accuracy of \dgal \ with different query batch size $b$.}
\vspace{4 pt}
\setlength{\tabcolsep}{4pt}{
\begin{small}
\begin{tabular}{ccccc}
\toprule
\textbf{Setting} &\textbf{CIFAR10+CNN}&\textbf{CIFAR10+Resnet} &\textbf{SVHN+VGG} &\textbf{Caltech101+Resnet} \\
\midrule
$R=10, b=100$     & \textbf{36.84} & \textbf{40.92} & \textbf{76.34} & \textbf{37.06} \\
$R=4, b=250$   & 36.72 & 40.78 & 75.26 & 36.48   \\
$R=2, b=500$   & 36.71 & 40.46 & 74.10 & 35.91  \\
$R=1, b=1000$   & 36.67 & 40.09 & 70.04 & 33.82   \\
\bottomrule
\label{tab:batch_size}
\end{tabular}
\end{small}
}
\vspace{-10pt}
\end{table}

\textbf{Comparison with different variants.}
The active learning  criterion of \dgal\ can be written as $ \sum_{(x, y) \in S} 
\|\nabla_{\theta} \ell (f({x; \theta}_u), \hat{{y}}_u)\|^2 + \gamma \nabla_{\theta}\ell(f({x}_u; \theta), \hat{{y}}_u)^{\top} \nabla_{\theta} \ell (f({x}; \theta), {{y}})$. We empirically show the performance for $\gamma \in \{0,1,2, \infty\}$ in Figure~\ref{fig:variants}. 
With $\gamma=0$, the criterion is close to the expected gradient length method~\cite{shukla2021egl++}. And with $\gamma=\infty$, the selected samples are same with the samples selected by using the influence function with identity hessian matrix criterion~\cite{liu2021influence}.
As shown in Figure~\ref{fig:variants}, the model achieves the best performance with $\gamma=2$, which is aligned with the value indicated by the theoretical analysis (Equation~\ref{equ:discussion}).
The result confirms the importance of theoretical analysis for the design of deep active learning methods.
\begin{figure}[ht]
\vspace{-10 pt}
\hspace{-12pt}
\begin{minipage}{0.28\textwidth}
\includegraphics[width = 1.9in]{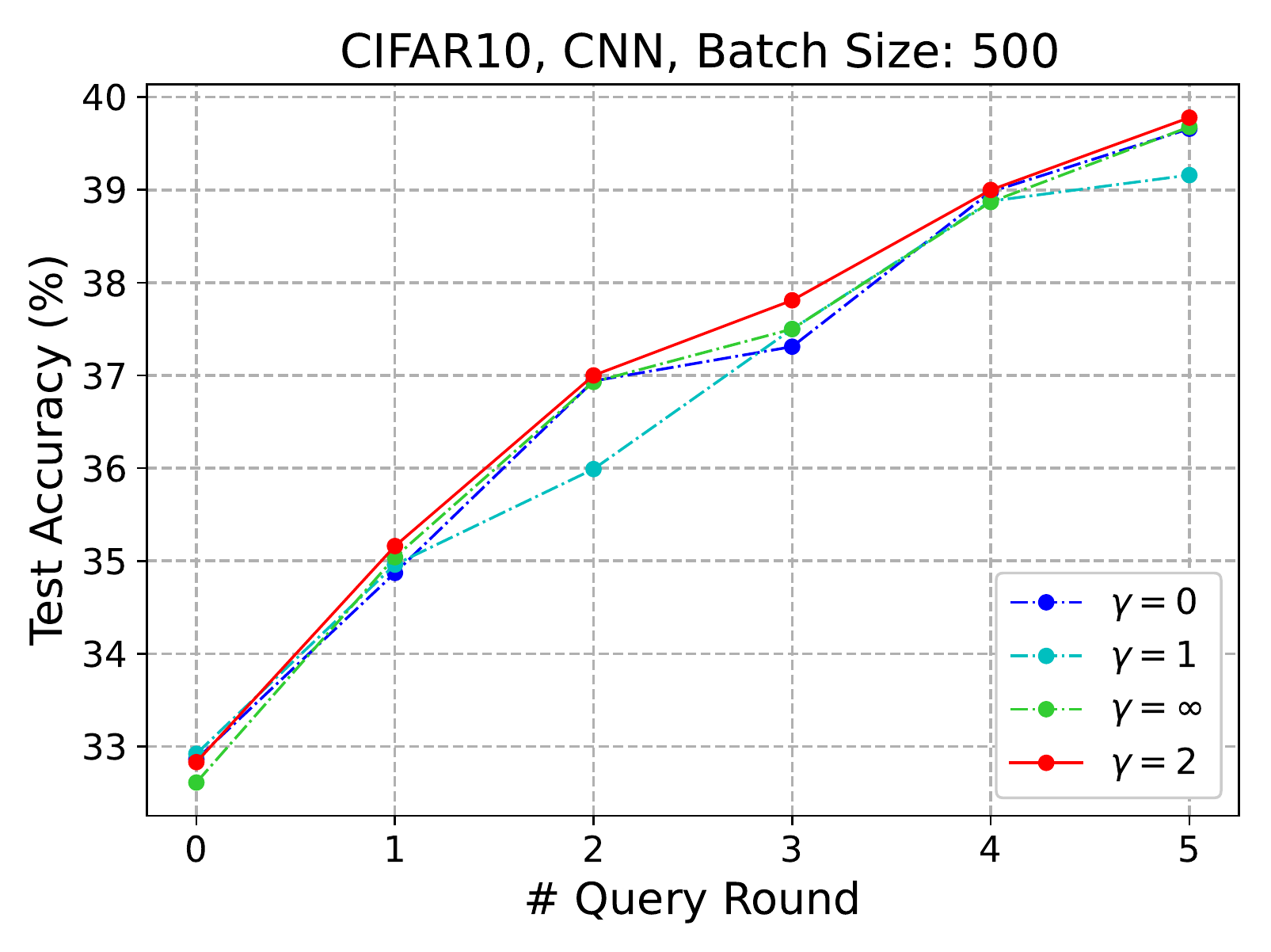}
\end{minipage}
\hspace{20pt}
\begin{minipage}{0.28\textwidth}
\includegraphics[width = 1.9in]{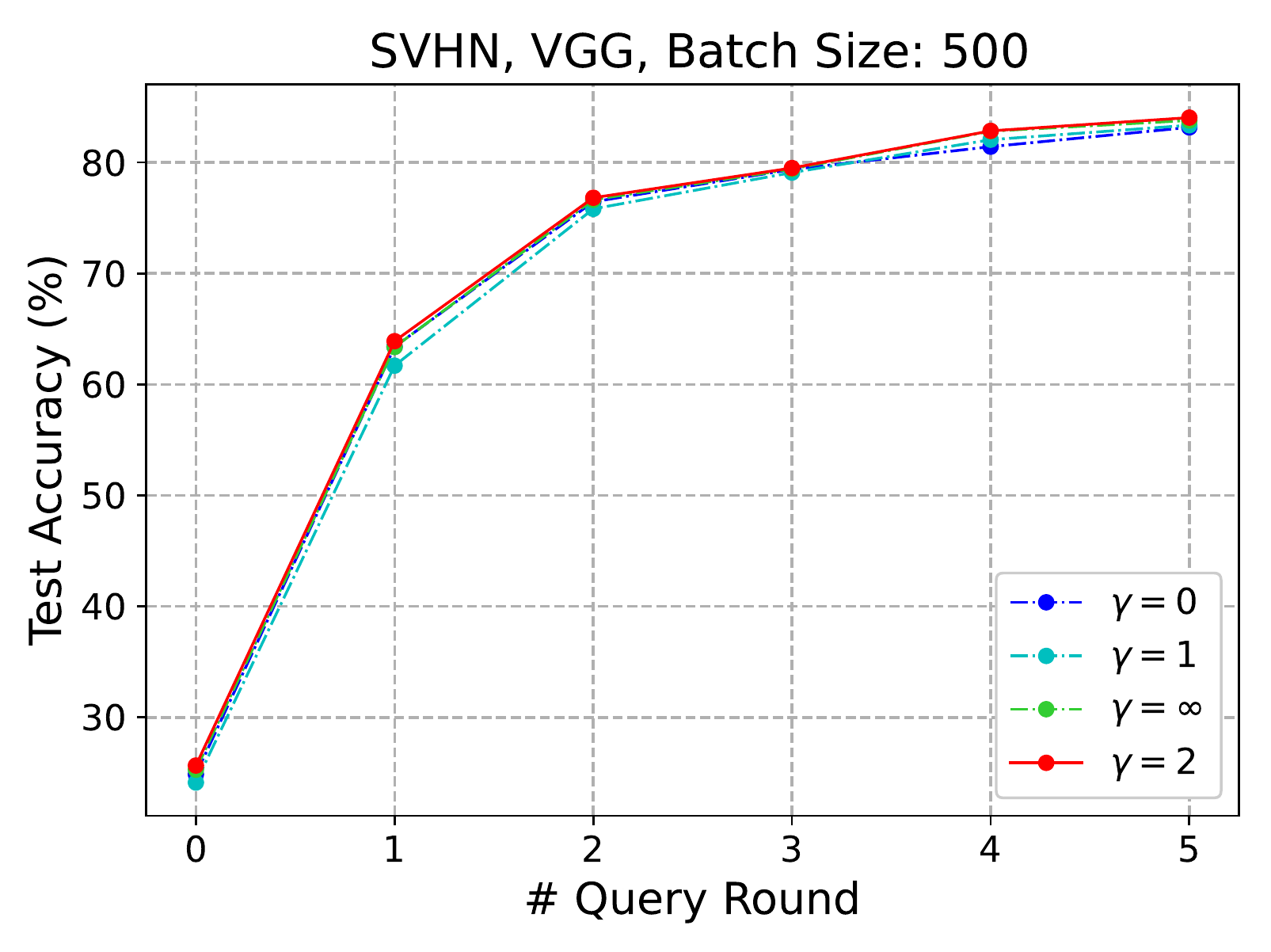}
\end{minipage}
\hspace{20pt}
\begin{minipage}{0.28\textwidth}
\includegraphics[width = 1.9in]{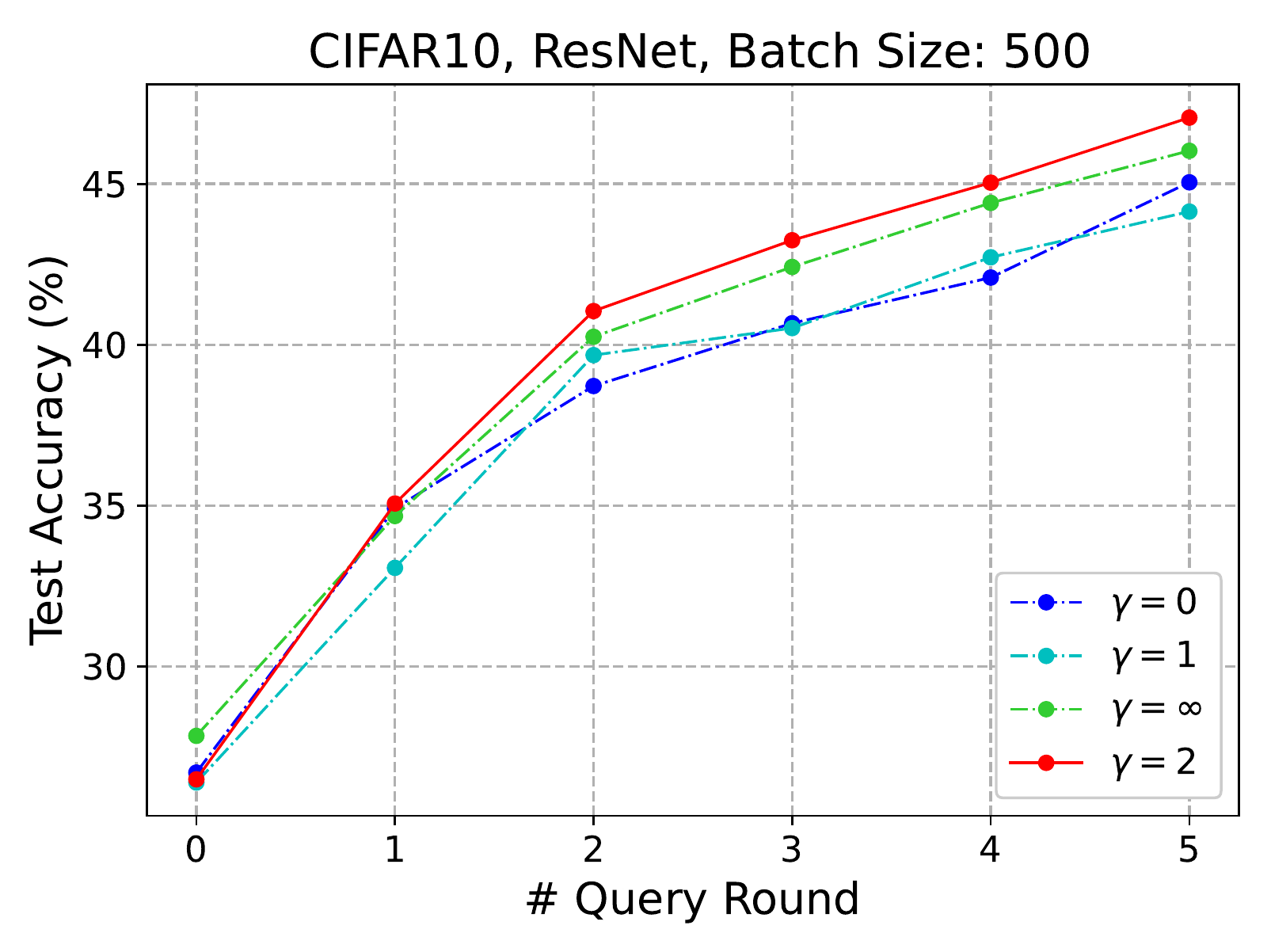}
\end{minipage}
\vspace{-4pt}
\caption{{Test Accuracy of different variants.}}
\label{fig:variants}
\vspace{-7pt}
\end{figure}

\section{Related work}
\label{sec:Related Work}
\vspace{-7 pt}
\textbf{Neural Tangent Kernel (NTK):}
Recent study has shown that under proper conditions, an infinite-width neural network can be simplified as a linear model with Neural Tangent Kernel (NTK)~\cite{jacot2018NTK}.
Since then, NTK has become a powerful theoretical tool to analyze the behavior of deep learning architecture (CNN, GNN, RNN)~\cite{arora2019exact, du2019GNTK, alemohammad2020recurrent}, random initialization~\cite{huang2020neural}, stochastic neural network~\cite{huang2022demystify}, and graph neural network \cite{huang2021towards} from its output dynamics and to characterize the convergence and generalization error~\cite{arora2019fineGrained}.
Besides, \cite{hanin2019finiteNTK} studies the finite-width NTK, aiming at making the NTK more practical.

\textbf{Active Learning:}
Active learning aims at interactively query labels for unlabeled data points to maximize model performances \cite{settles2009active}. Among others, there are two popular strategies for active learning, \ie, diversity sampling~\cite{du2015exploring, volpi2018generalizing, zhdanov2019diverse} and uncertainty sampling~\cite{roy2001toward, zhu2012uncertainty, yang2016active, ash2019deep, yoo2019learning, settles2007multiple, liu2021influence}. Recently, several papers proposed to use gradient to measure uncertainty~\cite{settles2007multiple, ash2019deep, liu2021influence}. However, those methods need to compute gradient for each class, and thus they can hardly be applied on data sets with a large class number. Besides, recent works~\citep{awasthi2021neural, Ban2022ImprovedAF} leverage NTK to analyze contextual bandit with streaming data, which are hard to be applied into our pool-based setting.
\section{Conclusion}
\label{summary}
In this work, we bridge the gap between the theoretic findings of deep neural networks and real-world deep active learning applications.
By exploring the connection between the generalization performance and the training dynamics, we propose a theory-driven method, \dgal, which selects samples to maximize training dynamics. We prove that the convergence speed of training and the generalization performance is (positively) strongly correlated under the ultra-wide condition and we show that maximizing the training dynamics will lead to a lower generalization error. 
Empirically, our work shows that \dgal\ not only consistently outperforms strong baselines across various setting, but also scales well on large deep learning models.

\section{Acknowledgment}
This work is supported by National Science Foundation (IIS-1947203, IIS-2117902, IIS-2137468, and IIS-2134079, and CNS-2125626), and by a joint ACES-ICGA funding initiative via USDA Hatch ILLU-802-946, and Agriculture and Food Research Initiative (AFRI) grant no. 2020-67021-32799/project accession no.1024178 from the USDA National Institute of Food and Agriculture. The views and conclusions are those of the authors and should not be interpreted as representing the official policies of the funding agencies or the government.


\bibliographystyle{unsrt}
\bibliography{egbib}


\section*{Checklist}


\begin{enumerate}

\item For all authors...
\begin{enumerate}
  \item Do the main claims made in the abstract and introduction accurately reflect the paper's contributions and scope?
    \answerYes{}
  \item Did you describe the limitations of your work?
    \answerYes{See appendix.}
  \item Did you discuss any potential negative societal impacts of your work?
    \answerYes{See appendix.}
  \item Have you read the ethics review guidelines and ensured that your paper conforms to them?
    \answerYes{}
\end{enumerate}

\item If you are including theoretical results...
\begin{enumerate}
  \item Did you state the full set of assumptions of all theoretical results?
    \answerYes{See appendix.}
        \item Did you include complete proofs of all theoretical results?
    \answerYes{See appendix.}
\end{enumerate}

\item If you ran experiments...
\begin{enumerate}
  \item Did you include the code, data, and instructions needed to reproduce the main experimental results (either in the supplemental material or as a URL)?
    \answerYes{See supplemental materials.}
  \item Did you specify all the training details (e.g., data splits, hyperparameters, how they were chosen)?
    \answerYes{See Section~\ref{exp} and appendix.}
        \item Did you report error bars (e.g., with respect to the random seed after running experiments multiple times)?
    \answerYes{We report the standard deviation for all experiments. See appendix.}
        \item Did you include the total amount of compute and the type of resources used (e.g., type of GPUs, internal cluster, or cloud provider)?
    \answerYes{See appendix.}
\end{enumerate}

\item If you are using existing assets (e.g., code, data, models) or curating/releasing new assets...
\begin{enumerate}
  \item If your work uses existing assets, did you cite the creators?
   \answerYes{See appendix.}
  \item Did you mention the license of the assets?
    \answerYes{}
  \item Did you include any new assets either in the supplemental material or as a URL?
    \answerYes{}
  \item Did you discuss whether and how consent was obtained from people whose data you're using/curating?
   \answerYes{}
  \item Did you discuss whether the data you are using/curating contains personally identifiable information or offensive content?
    \answerYes{}
\end{enumerate}

\item If you used crowdsourcing or conducted research with human subjects...
\begin{enumerate}
  \item Did you include the full text of instructions given to participants and screenshots, if applicable?
     \answerNA{}
  \item Did you describe any potential participant risks, with links to Institutional Review Board (IRB) approvals, if applicable?
     \answerNA{}
  \item Did you include the estimated hourly wage paid to participants and the total amount spent on participant compensation?
     \answerNA{}
\end{enumerate}

\end{enumerate}


\appendix
\setcounter{prop}{0}
\setcounter{thm}{0}
\setcounter{lem}{0}
\setcounter{cor}{0}
\setcounter{rem}{0}

\newpage
\section{APPENDIX: Derivation of Objectives}
\label{dev_obj}
For the notational convenience, we use $f(x)$ to represent $f(x; \theta)$ in the Appendix.
\subsection{Training Dynamics for Cross-Entropy Loss}
\label{deriv_train}
The partial derivative for softmax function can be defined with the following, 
\begin{equation}
\frac{\partial \sigma^i(f(x))}{\partial f^j(x)} = 
\begin{cases}
\sigma^i(f(x)) \big( 1 - \sigma^i(f(x)) \big), &  i = j, \\
-\sigma^i(f(x)) \sigma^j(f(x)), & i \neq j
\end{cases}
\end{equation}
Then, we have:

\begin{equation}
\begin{aligned}
& \frac{\partial \ell (f(x), y)}{\partial t} = - \sum_i y^i \frac{\partial \log \sigma^if(x) }{\partial \sigma^i(f(x))} \frac{ \partial \sigma^i(f(x)) }{ \partial t } \\
&= -\sum_i y^i \frac{1}{\sigma^i(f(x))} \sum_j \frac{\partial \sigma^i(f(x))}{\partial f^j(x)} \frac{\partial f^j(x)}{\partial t} \\
& = -\sum_i y^i \sum_j \big( \mathbbm{1}[i==j] - \sigma^j(f(x)) \big) \frac{\partial f^j(x)}{\partial t} \\
& = -\sum_i \big( y^i - \sigma^i(f(x)) \big) \nabla_{\theta}{f^i(x)} \nabla_{t} \theta
\end{aligned}
\end{equation}

\subsection{Derivation for Cross-Entropy Loss}
\label{deriv_CE}
\begin{equation}
\begin{aligned}
& \frac{\partial \ell (f(x), y)}{\partial \theta} = \frac{\partial \ell}{\partial f(x)} \frac{\partial f(x)}{\partial \theta}  = -\sum_{i} y^i \frac{1}{\sigma^i(f(x))} \frac{\partial \sigma^i(f(x))}{f(x)} \frac{\partial f(x)}{\partial \theta} \\
& = -\sum_{i} y^i \frac{1}{\sigma^i(f(x))}\sigma^i(f(x)) \sum_{j} \big( \mathbbm{1}[i==j] - \sigma^j(f(x)) \big)
\frac{\partial f^j(x)}{\partial \theta} \\
& =  \sum_{j} \big(\sigma^j(f(x)) - y^j \big)
\frac{\partial f^j(x)}{\partial \theta} 
\end{aligned}
\end{equation}

\subsection{APPENDIX: Training Dynamics for Mean Squared Error}
\label{mse_gradient}
For the labeled data set $S$, we define the Mean Squared Error(MSE) as:
$$
L_{MSE}(S) = \sum_{(x,y) \in S} \ell_{MSE}(f(x), y) = -\sum_{(x,y) \in S} \sum_{i\in[K]} \frac{1}{2}(f^i(x) - y^i)^2 
$$

Then the training loss dynamics for each sample can be defined as:
$$
\frac{\partial \ell_{MSE} (f(x), y)}{\partial t} = -\sum_i \big( y^i - f^i(x) \big) \nabla_{\theta}{f^i(x)} \nabla_{t} \theta 
$$

Because neural networks are optimized by gradient descent, thus:

$$
 \nabla_{t} \theta  = \theta_{t+1} - \theta_{t} = \sum_{ (x,y) \in S }\frac{\partial \ell (f(x), y)}{\partial \theta} = \sum_{(x,y) \in S}\sum_{j} \big(f^j(x) - y^j \big) \frac{\partial f^j(x)}{\partial \theta} 
$$

Therefore, the training dynamics of MSE loss can be expressed as:

$$
G_{MSE}(S) = -\frac{1}{\eta} \frac{\partial \sum_{(x,y) \in S} \ell_{MSE}(f(x), y) }{\partial t} = (f(X)-Y)^{\top}\mathcal{K}({X},{X}) (f({X})-Y)
$$

\subsection{APPENDIX: Decomposition of the Change of Training Dynamics}
\label{training_dynamics_decomposition}

According to the definition of training dynamics ( Equation~\eqref{equ:gradient} ),  we have,
\begin{equation*}
\begin{small}
\begin{aligned}
G(S) = \sum_{i,j} \sum_{ ({x}_{l}, {y}_{l}) \in S } \big( \sigma^i(f(x_l; \theta)) - y_l^{i} \big)  \sum_{ ({x}_{l^{'}}, {y}_{l^{'}}) \in S } \nabla_{\theta} f^{i}(x_l; \theta)^{\top} \nabla_{\theta} f^{j}(x_{l^{'}}; \theta) \big(\sigma^j(f(x_{l^{'}}; \theta)) - y_{l^{'}}^j\big)
\end{aligned}
\end{small}
\end{equation*}
\begin{equation*}
\begin{small}
\begin{aligned}
G(S\cup \widehat{Q}) = \sum_{i,j} \sum_{ ({x}, {y}) \in S\cup \widehat{Q} }  & \big( \sigma^i(f(x; \theta)) - y^{i} \big) \sum_{ ({x}', {y}') \in S\cup \widehat{Q} } \nabla_{\theta} f^{i}(x; \theta)^{\top} \nabla_{\theta} f^{j}(x'; \theta) \big(\sigma^j(f(x'; \theta)) - y'^j\big)
\end{aligned}
\end{small}
\end{equation*}
The change of training dynamics, $\Delta(\widehat{Q}|S) = G(S \cup \widehat{Q}) - G(S)$, can be further simplified as:

\begin{equation*}
\begin{small}
\begin{aligned}
&\Delta(\widehat{Q}|S) = G(S \cup \widehat{Q}) - G(S)\\
&=2\sum_{i,j} \sum_{ ({x}_{u}, \widehat{y}_{u}) \in \widehat{Q} }  \big( \sigma^i(f(x_u; \theta)) - \widehat{y}_u^{i} \big)  \sum_{ ({x}_{l}, {y}_{l}) \in S } \nabla_{\theta} f^{i}(x_u; \theta)^{\top} \nabla_{\theta} f^{j}(x_{l}; \theta) \big(\sigma^j(f(x_{l}; \theta)) - y_{l}^j\big) \\
&+ \sum_{i,j} \sum_{ ({x}_{u}, \widehat{y}_{u}) \in \widehat{Q} }  \big( \sigma^i(f(x_u; \theta)) - \widehat{y}_u^{i} \big)  \nabla_{\theta} f^{i}(x_u; \theta)^{\top} \nabla_{\theta} f^{j}(x_{u}; \theta) \big(\sigma^j(f(x_{u}; \theta)) - \widehat{y}_{u}^j\big) \\
& +\sum_{i,j} \sum_{ ({x}_{u}, \widehat{y}_{u}) \in \widehat{Q} }  \big( \sigma^i(f(x_u; \theta)) - \widehat{y}_u^{i} \big)  \sum_{ ({x}_{u'}, \widehat{y}_{u'}) \in \widehat{Q}, u' \neq u} \nabla_{\theta} f^{i}(x_{u'}; \theta)^{\top} \nabla_{\theta} f^{j}(x_{u'}; \theta) \big(\sigma^j(f(x_{u'}; \theta)) - \widehat{y}_{u'}^j\big) \\
& = \sum_{(x_u, \widehat{y}_u) \in \widehat{Q}}  \Delta(\{(x_u, \widehat{y}_u) \}|S) +  \sum_{(x_u, \widehat{y}_u), (x_{u'}, \widehat{y}_{u'}) \in \widehat{Q}} {d^i}(x_u, \widehat{y}_u)^{\top} \mathcal{K}^{ij}(x_u,x_{u'}) {d^i}(x_{u'}, \widehat{y}_{u'})
\end{aligned}
\end{small}
\end{equation*}

\subsection{APPENDIX: Simplification of the Change of Training Dynamics}
\label{training_dynamics_simplification}

\begin{equation*}
\begin{small}
\begin{aligned}
 \Delta(\{(x_u, \widehat{y}_u) \}|S) =  & 2\sum_{i,j} \sum_{ ({x}_{u}, \widehat{y}_{u}) \in \widehat{Q} }  \big( \sigma^i(f(x_u; \theta)) - \widehat{y}_u^{i} \big)  \sum_{ ({x}_{l}, {y}_{l}) \in S } \nabla_{\theta} f^{i}(x_u; \theta)^{\top} \nabla_{\theta} f^{j}(x_{l}; \theta) \big(\sigma^j(f(x_{l}; \theta)) - y_{l}^j\big) \\
&+\sum_{i,j} \sum_{ ({x}_{u}, \widehat{y}_{u}) \in \widehat{Q} }  \big( \sigma^i(f(x_u; \theta)) - \widehat{y}_u^{i} \big)  \nabla_{\theta} f^{i}(x_u; \theta)^{\top} \nabla_{\theta} f^{j}(x_{u}; \theta) \big(\sigma^j(f(x_{u}; \theta)) - \widehat{y}_{u}^j\big)
\end{aligned}
\end{small}
\end{equation*}

The derivative of loss with respect to model parameters can be written as:
\begin{equation*}
\begin{small}
\begin{aligned}
\frac{\partial \sum_{(x,y) \in S} \ell (f({x}; \theta), {y})}{\partial \theta} 
= \sum_{(x,y) \in S} \sum_{j \in [K]} \big(\sigma^j(f({x}; \theta)) - {y}^j \big) \nabla_{\theta} f^{j}(x; \theta)
\end{aligned}
\end{small}
\end{equation*}

Therefore, the change of training dynamics caused by $\{(x_u, \widehat{y}_u) \}$ can be written as:

\begin{equation*}
\begin{small}
\begin{aligned}
  \Delta(\{(x_u, \widehat{y}_u) \}|S) = 
  \|\nabla_{\theta} \ell (f({x}_u; \theta), \hat{{y}}_u)\|^2 + 2 \sum_{(x, y) \in S}  \nabla_{\theta}\ell(f({x}_u; \theta), \hat{{y}}_u)^{\top} \nabla_{\theta} \ell (f({x}; \theta), {{y}})
\end{aligned}
\end{small}
\end{equation*}

\color{black}

\section{APPENDIX: Proofs for Theoretical Analysis}
\label{proofs}

\subsection{Proofs for Theorem 1}
\label{proof:them1}
\begin{lem} [Convergence Analysis with NTK, Theorem 4.1 of \cite{arora2019fineGrained}] 
Suppose $\lambda_0 = \lambda_{\min}(\boldsymbol{\Theta}) > 0$ for all subsets of data samples. For $\delta \in (0,1)$, if $m = \Omega(\frac{n^7}{\lambda^4_0  \delta^4 \epsilon^2})$ and $\eta = O(\frac{\lambda_0}{n^2})$, with probability at least $1-\delta$, the network can achieve near-zero training error,
\begin{small}
\begin{equation} 
\norm{{Y}-f_t({X; \theta(t)})}_2 = \sqrt{\sum_{k=1}^K \sum_{i=1}^{n} (1-\eta \lambda_i)^{2t} (\Vec{v}^{\top}_i {Y}^k)^2} \pm \epsilon 
\end{equation}
\end{small}

where $n$ denotes the number of training samples and $m$ denotes the width of hidden layers. The NTK $\boldsymbol{\Theta} = V^{\top} \Lambda V$ with $\Lambda = \{\lambda_i \}_{i=1}^n$ is a diagonal matrix of eigenvalues and $V =\{\Vec{v}_i \}_{i=1}^n$ is a unitary matrix.
\end{lem}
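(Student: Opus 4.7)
The lemma is quoted verbatim from Arora et al.\ (fine-grained), so the plan is to outline their argument in a way that respects the constants as stated. The overall strategy is to reduce the nonlinear gradient descent dynamics to a linear ODE driven by the (time-invariant) NTK, diagonalize, and then control the perturbation errors.

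First, I would establish the linearization. Under the overparameterization hypothesis $m=\Omega(n^7/(\lambda_0^4 \delta^4 \epsilon^2))$ and the initialization scheme of the NTK regime, with probability at least $1-\delta/3$ over the initialization: (i) the empirical kernel at initialization satisfies $\|\mathcal{K}_0(X,X) - \boldsymbol{\Theta}(X,X)\otimes I_K\|_{\mathrm{op}} \le \lambda_0/4$; (ii) the output at initialization satisfies $\|f(X;\theta(0))\|_2 \le \epsilon/3$; and (iii) along the entire trajectory, $\|\theta(t)-\theta(0)\|_2$ stays within the ball of radius $R=O(\sqrt{n}/\lambda_0)$, so that $\|\mathcal{K}_t(X,X)-\mathcal{K}_0(X,X)\|_{\mathrm{op}}\le \lambda_0/4$ for all $t$. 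Items (i)--(iii) are standard NTK perturbation estimates and follow directly from concentration bounds together with a Gronwall-type argument on the evolution of $\|\theta(t)-\theta(0)\|$.

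Second, given the near-constancy of the kernel, I would compare the true residual $r(t):=Y-f(X;\theta(t))$ with the linearized residual $\tilde r(t)$ which, under gradient descent with step size $\eta=O(\lambda_0/n^2)$, evolves as $\tilde r(t+1)=(I-\eta\,\boldsymbol{\Theta}(X,X)\otimes I_K)\tilde r(t)$. Because $\eta\lambda_{\max}<1$ (by the choice of $\eta$ and a spectral norm bound on $\boldsymbol{\Theta}$), this contraction is stable; diagonalizing $\boldsymbol{\Theta}=V^\top\Lambda V$ and using the Kronecker structure (one independent copy per class $k\in[K]$) yields
\begin{equation*}
\|\tilde r(t)\|_2^2 \;=\; \sum_{k=1}^K\sum_{i=1}^n (1-\eta\lambda_i)^{2t}(\vec v_i^\top \tilde r^k(0))^2.
\end{equation*}
Substituting $\tilde r(0)=Y-f(X;\theta(0))$ and using item (ii) to replace $\tilde r(0)$ by $Y$ costs at most $\epsilon/3$ under the square root.

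Third, I would close the loop by bounding $\|r(t)-\tilde r(t)\|_2$. Writing the discrete gradient descent update for the nonlinear model and subtracting the linearized update, the error accumulates as a geometric series whose ratio is $1-\eta\lambda_0/2$, and whose per-step magnitude is controlled by the kernel perturbation bound from item (iii) combined with a local smoothness estimate on the Jacobian. Since both factors are $O(\lambda_0/n)$ under the chosen $m$ and $\eta$, summing gives $\|r(t)-\tilde r(t)\|_2 \le \epsilon/3$ uniformly in $t$. A triangle inequality then yields the $\pm\epsilon$ form of the lemma.

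The main obstacle, and the part that forces the large polynomial dependence on $n$ inside $m$, is step three: controlling the Jacobian perturbation uniformly in $t$ without incurring $t$-dependent blow-up. This requires the standard trick of coupling the trajectory bound $\|\theta(t)-\theta(0)\|\le R$ with the geometric contraction from the NTK's minimum eigenvalue $\lambda_0$, so that the nonlinear correction is suppressed at the same exponential rate as the linear residual decays. Everything else (diagonalization, initialization concentration, eigenvalue bounds) is routine once that uniform perturbation estimate is in hand. As the statement is a direct restatement of Theorem~4.1 of~\cite{arora2019fineGrained}, I would refer to their proof for the quantitative constants and present the decomposition above to make the three error sources transparent.
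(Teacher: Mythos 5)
Your proposal is correct and takes essentially the same route as the paper: both ultimately defer to Theorem~4.1 of Arora et al.\ for the quantitative constants, and your three-step sketch (kernel concentration and near-constancy along the trajectory, diagonalization of the linearized residual dynamics giving the $(1-\eta\lambda_i)^{2t}$ terms, and a uniform-in-$t$ perturbation bound absorbing the $\pm\epsilon$) is a faithful and in fact more detailed account of that argument than the paper's own proof, which is little more than a citation plus the eigendecomposition $\boldsymbol{\Theta}=V^{\top}\Lambda V$. The only ingredient of the paper's proof you omit is the remark that, since $\lambda_0>0$ is assumed for all subsets of the data, the convergence result can be re-invoked at every active-learning round after the labeled set is updated to $S\cup\overline{Q}$ --- a point about how the lemma is used downstream rather than about its proof.
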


\begin{proof}
According to \cite{arora2019fineGrained}, if $m = \Omega(\frac{n^7}{\lambda^4_0 \delta^4 \epsilon^2})$ and learning ratio $\eta = O(\frac{\lambda_0}{n^2})$, then with probability at least $1-\delta$ over the random initialization, we have, $
\norm{Y_l-f_t(X; \theta(t))}_2 = \sqrt{\sum_{k=1}^K\sum_{i=1}^n (1-\eta \lambda_i)^{2t} (v^{\top}_i Y_l^k)^2} \pm \epsilon
$. We decompose the NTK using $\boldsymbol{\Theta} = V^{\top} \Lambda V$ with $\Lambda = \{\lambda_i \}_{i=1}^n$ a diagonal matrix of eigenvalues and $V =\{v_i \}_{i=1}^n$ a unitary matrix.  
At each training step in active learning, the labeled samples will be updated by $S = S \cup \overline{Q} $. We can apply the convergence result in each of this step and achieve near zero error.
\end{proof}

\begin{thm} [Relationship between convergence rate and alignment]
Under the same assumptions as in Lemma \ref{lem:convergence}, the convergence rate described by $\mathcal{E}_t$ satisfies, 
\begin{equation}
 \Tr [Y^{\top} Y] - 2t\eta \mathcal{A}({X},Y) \le \mathcal{E}^2_t(X,Y) \le \Tr[Y^{\top} Y] -\eta \mathcal{A}({X},Y) 
\end{equation}
\end{thm}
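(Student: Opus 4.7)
The plan is to reduce the claimed inequalities to a scalar comparison, one per eigenvalue of $\boldsymbol{\Theta}$, and then discharge each side with an elementary inequality. Using the eigendecomposition $\boldsymbol{\Theta}=V^{\top}\Lambda V$ from Lemma~\ref{lem:convergence} and setting $a_{k,i}:=(\vec v_i^{\top} Y^k)^2$, I would first write
\begin{equation*}
\mathcal{E}_t^2(X,Y)=\sum_{k=1}^{K}\sum_{i=1}^{n}(1-\eta\lambda_i)^{2t}\,a_{k,i},\qquad
\mathcal{A}(X,Y)=\sum_{k,i}\lambda_i\,a_{k,i},
\end{equation*}
and, by unitarity of $V$ (Parseval), $\Tr[Y^{\top}Y]=\sum_{k}\|Y^k\|^2=\sum_{k,i}a_{k,i}$. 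Since all $a_{k,i}\ge 0$, the theorem follows once I establish the per-mode bound
\begin{equation*}
1-2t\eta\lambda_i \;\le\; (1-\eta\lambda_i)^{2t} \;\le\; 1-\eta\lambda_i,
\end{equation*}
and then sum against $a_{k,i}$.

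Before applying these, I would verify the regime: under the hypotheses $\eta=O(\lambda_0/n^2)$ and with $\lambda_i$ bounded above by $\Tr[\boldsymbol{\Theta}]$, one gets $0<\eta\lambda_i<1$ for all $i$, so $1-\eta\lambda_i\in(0,1)$ and Bernoulli's inequality applies. For the upper bound, since $1-\eta\lambda_i\in[0,1]$ and $2t\ge 1$ (the statement is informative only for $t\ge 1$), monotonicity of $x\mapsto x^{2t}$ on $[0,1]$ gives $(1-\eta\lambda_i)^{2t}\le(1-\eta\lambda_i)^1$. For the lower bound, I would invoke Bernoulli: $(1+x)^{r}\ge 1+rx$ for $x\ge -1$ and $r\ge 1$, with $x=-\eta\lambda_i$ and $r=2t$, yielding $(1-\eta\lambda_i)^{2t}\ge 1-2t\eta\lambda_i$.

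Finally, multiplying the two-sided per-mode bound by $a_{k,i}\ge 0$ and summing over $k,i$ turns the left side into $\Tr[Y^{\top}Y]-2t\eta\mathcal{A}(X,Y)$ and the right side into $\Tr[Y^{\top}Y]-\eta\mathcal{A}(X,Y)$, giving the claim. The main obstacle is not really any single step but rather making the regime condition $\eta\lambda_i<1$ rigorous: this is needed both so the base $1-\eta\lambda_i$ is nonnegative (for monotonicity in the upper bound) and so Bernoulli's hypothesis $x\ge -1$ holds. I would handle this by noting that $\lambda_i\le\lambda_{\max}\le\Tr[\boldsymbol{\Theta}]$, which, together with $\eta=O(\lambda_0/n^2)$ and the scaling of the NTK, yields $\eta\lambda_i\ll 1$; no other calculation is subtle.
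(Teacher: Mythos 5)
Your proposal is correct and follows essentially the same route as the paper's proof: the upper bound via $(1-\eta\lambda_i)^{2t}\le 1-\eta\lambda_i$ on $[0,1]$ and the lower bound via Bernoulli's inequality with $x=-\eta\lambda_i$, $r=2t$, then summing against the nonnegative coefficients $(\vec v_i^{\top}Y^k)^2$. Your extra care in verifying $0<\eta\lambda_i<1$ is a mild refinement of the paper's unstated assertion that $0\le 1-\eta\lambda_i\le 1$, but the argument is the same.
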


\begin{proof} We first prove the inequality on the right hand side. It is easy to see that $(1-\eta \lambda_i)^{2t} \le (1-\eta \lambda_i) $ for each $\lambda_i$ and $t \ge 1$, based on the fact that $\forall \lambda_i$, $0 \le 1-\eta \lambda_i \le 1$. Then we can obtain,
$$
\begin{aligned}
\mathcal{E}_t(X,Y) & = \sqrt{\sum_{k=1}^K \sum_{i=1}^{n} (1-\eta \lambda_i)^{2t} (v^{\top}_i Y^k)^2}
\le \sqrt{\sum_{k=1}^K \sum_{i=1}^{n} (1-\eta \lambda_i) (v^{\top}_i Y^k)^2} \\
& = \sqrt{\Tr[ Y^{\top}(I-\eta \boldsymbol{\Theta}) Y] } = \sqrt{\Tr[ Y^{\top} Y] - \eta \mathcal{A}(X,Y) }
\end{aligned}
$$

Then we use Bernoulli's inequality to prove the inequality on the left hand side.  Bernoulli's inequality states that, ${\displaystyle (1+x)^{r}\geq 1+rx}$, for every integer $r \ge 0$ and every real number $x \ge -1$. It is easy to check that $(-\eta \lambda_i) \ge -1$, $\forall \lambda_i$. Therefore,
$$
\begin{aligned}
\mathcal{E}_t(X,Y) & = \sqrt{\sum_{k=1}^K \sum_{i=1}^{n} (1-\eta \lambda_i)^{2t} (v^{\top}_i Y^k)^2}
\ge \sqrt{\sum_{k=1}^K \sum_{i=1}^{n} (1-2t \eta \lambda_i) (v^{\top}_i Y^k)^2} \\
& = \sqrt{\Tr[ Y^{\top}(I- 2t \eta \boldsymbol{\Theta}) Y] } = \sqrt{\Tr[ Y^{\top} Y ] - 2t \eta \mathcal{A}(X,Y) }
\end{aligned}
$$
\end{proof}

\subsection{Proof for Theorem 2}
\label{proof:them2}
\begin{lem} [Generalization bound with NTK, Theorem 5.1 of \cite{arora2019fineGrained}] Suppose data $S = \{ (x_i,y_i)\}_{i=1}^n$ are i.i.d. samples from a non-degenerate distribution $p(x,y)$, and $m \ge {\rm poly}(n, \lambda_0^{-1}, \delta^{-1})$. Consider any loss function $\ell: \mathbb{R} \times \mathbb{R} \rightarrow [0,1]$ that is $1$-Lipschitz, with probability at least $1-\delta$ over the random initialization, the network trained by gradient descent for $T \ge \Omega(\frac{1}{\eta \lambda_0} \log \frac{n}{\delta})$ iterations has population risk $\mathcal{L}_p = \mathbb{E}_{(x,y)\sim p(x,y)}[\ell(f_T(x),y)]$ that is bounded as follows:
\small
\begin{equation}
   \mathcal{L}_p \le \sqrt{\frac{2 \Tr[{Y^{\top} \boldsymbol{\Theta}^{-1}(X, X) Y}]}{n}}+O \bigg(\sqrt{\frac{\log \frac{n}{\lambda_0 \delta}}{n}} \bigg).
\end{equation}
\normalsize
\end{lem}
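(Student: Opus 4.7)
Since this lemma is a direct restatement of Theorem 5.1 of \cite{arora2019fineGrained}, my plan is to reconstruct the standard NTK generalization argument via Rademacher complexity. The backbone idea is that, under the ultra-wide condition $m \ge {\rm poly}(n,\lambda_0^{-1},\delta^{-1})$, the parameters $\theta(t)$ produced by gradient descent stay in a tight $O(\sqrt{n/\lambda_0})$-ball around the initialization $\theta(0)$. Inside this ball, the network $f(x;\theta)$ is well approximated by its linearization $f_{\rm lin}(x;\theta) = f(x;\theta(0)) + \nabla_{\theta} f(x;\theta(0))^{\top}(\theta-\theta(0))$, so the trained network is effectively an element of the RKHS associated with the NTK $\boldsymbol{\Theta}$.

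The key steps I would carry out, in order, are: (i) Invoke the trajectory control result (essentially Lemma \ref{lem:convergence} combined with a perturbation argument) to show that with probability at least $1-\delta/3$, the trained network lies in the function class $\mathcal{F}_B = \{x \mapsto \langle \nabla_{\theta} f(x;\theta(0)),\theta-\theta(0)\rangle : \|\theta-\theta(0)\|_2 \le B\}$ for $B = O\!\bigl(\sqrt{Y^{\top}\boldsymbol{\Theta}^{-1}Y}\bigr)$, the latter bound arising because the kernel-regression fit $\boldsymbol{\Theta}(\cdot,X)^{\top}\boldsymbol{\Theta}(X,X)^{-1}Y$ has squared RKHS norm exactly $Y^{\top}\boldsymbol{\Theta}^{-1}Y$. (ii) Bound the empirical Rademacher complexity of $\mathcal{F}_B$ on $n$ i.i.d.\ samples; standard kernel arguments give $\widehat{\mathcal{R}}_n(\mathcal{F}_B) \le B\sqrt{\Tr[\boldsymbol{\Theta}]/n}/n$, which after substituting $B$ produces a term of order $\sqrt{\Tr[Y^{\top}\boldsymbol{\Theta}^{-1}Y]/n}$ up to constants and logarithmic factors. (iii) Combine with the training-loss bound: by Lemma \ref{lem:convergence}, after $T \ge \Omega(\frac{1}{\eta\lambda_0}\log\frac{n}{\delta})$ iterations, the empirical loss of $f_T$ can be driven below $O(1/\sqrt{n})$ using the $1$-Lipschitz property of $\ell$. (iv) Apply the standard Rademacher bound: for any $\ell$ that is $1$-Lipschitz and bounded in $[0,1]$, with probability $1-\delta$, $\mathcal{L}_p \le \widehat{\mathcal{L}}_n + 2\widehat{\mathcal{R}}_n(\mathcal{F}_B) + 3\sqrt{\log(2/\delta)/(2n)}$. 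Collecting constants, setting $\delta \to \delta/3$ to absorb the three failure events via a union bound, and folding subleading terms into the additive $O(\sqrt{\log(n/(\lambda_0\delta))/n})$ correction yields the stated bound.

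The delicate step I anticipate being the main obstacle is the trajectory control in (i): one needs a quantitative coupling between the actual GD iterates on the nonlinear network and the NTK-linearized dynamics, with enough precision that the deviation contributes only to the $O(\sqrt{\log(n/(\lambda_0\delta))/n})$ slack and not to the dominant $\sqrt{2\Tr[Y^{\top}\boldsymbol{\Theta}^{-1}Y]/n}$ term. This is where the width requirement $m \ge {\rm poly}(n,\lambda_0^{-1},\delta^{-1})$ enters in a nontrivial way, because the perturbation bounds on $\nabla_{\theta} f$ and on the Hessian along the trajectory must be strong enough to keep the gradient features nearly constant throughout training. All other steps are then relatively mechanical applications of RKHS Rademacher bounds and Talagrand-style concentration.
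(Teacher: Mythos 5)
Your proposal follows essentially the same route as the paper's proof (itself a sketch of Arora et al.'s argument): control the gradient-descent trajectory near initialization so the trained network lies in the NTK-linearized class of RKHS norm $\sqrt{\Tr[Y^{\top}\boldsymbol{\Theta}^{-1}Y]}$ (the paper writes this as $\norm{W(\infty)-W(0)}_F^2=\Tr[Y^{\top}\boldsymbol{\Theta}^{-1}Y]$), bound that class's Rademacher complexity, use the near-zero empirical loss after $T$ iterations, and conclude via the standard Lipschitz-loss generalization bound with a union bound over the discretized radii. The only nit is your displayed kernel bound, which should read $\widehat{\mathcal{R}}_n(\mathcal{F}_B)\le B\sqrt{\Tr[\boldsymbol{\Theta}]}/n$ (the extra $1/\sqrt{n}$ is a slip), but the rate you then state is the correct one, so the plan is sound.
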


\begin{proof}
We first show that the generalization bound regrading our method on ultra-wide networks. The distance between weights of trained networks and their initialization values can be bounded as,
$\norm{w_r(t)-w_r(0)} = O(\frac{n}{\sqrt{m}\lambda_0 \sqrt{\delta}})$. We then give a bound on the $\norm{W(t)-W(0)}_F$, where $W =\{w_1, w_2, \dots \}$ is the set of all parameters. We definite $Z = \frac{\partial f(t)}{\partial W(t)}$, then the update function is given by $W(t+1) = W(t) - \eta Z (Z^{\top} W(t) -Y)$.    
Summing over all the time step $t =0,1,\dots$, we can obtain that $W(\infty)-W(0) = \sum_{t=0}^{\infty} \eta Z(I-\eta \boldsymbol{\Theta}) y = Z\boldsymbol{\Theta}^{-1}Y$. Thus the distance can be measured by $
\norm{W(\infty)-W(0)}^2_F =  \Tr[{Y^{\top} \boldsymbol{\Theta}^{-1} Y}]$.

Then the key step is to apply Rademacher complexity. Given $R>0$, with probability at least $1-\delta$, simultaneously for every $B>0$, the function class $\mathcal{F}_{B,R} = \{ f: \norm{w_r(t)-w_r(0)} \le R~ (\forall r \in m), \norm{W(\infty)-W(0)}^2_F \le B \}$  has empirical Rademacher complexity bounded as,
$$
\mathcal{R}_S(\mathcal{F}_{B,R}) = \frac{1}{n} \mathbb{E}_{\epsilon_i \in \{ \pm 1\}^n } \bigg[ \sup_{f \in \mathcal{F}_{B,R}} \sum_{i=1}^n \epsilon_i f(x_i) \bigg] \le  \frac{B}{\sqrt{2n}}\big(1+(\frac{2\log\frac{2}{\delta}}{m})^{1/4} \big) + 2R^2\sqrt{m} + R\sqrt{2\log\frac{2}{\delta}}
$$
where $ B = \sqrt{ \Tr[{Y^{\top} \boldsymbol{\Theta}^{-1}(X,X) Y}]} $, and $R =\frac{n}{\sqrt{m}\lambda_0 \sqrt{\delta}} $. 

Finally, Rademacher complexity directly gives an upper bound on generalization error \citep{mohri2018foundations},
$
\sup_{f \in \mathcal{F}} \{ \mathcal{L}_p(f)- \mathcal{L}_S(f) \} \le 2 \mathcal{R}_{S} + 3c \sqrt{\frac{\log(2/\delta)}{2n}}
$, where $\mathcal{L}_S(f) \le \frac{1}{\sqrt{n}}$. Based on this, we apply a union bound over a finite set of different $i$'s. Then with probability at least $1-\delta/3$ over the sample $S$, we have $\sup_{f \in \mathcal{F}_{R, B_i}} \{ \mathcal{L}_p (f) - \mathcal{L}_S (f) \} \le  2\mathcal{R}_S(\mathcal{F}_{B_i, R}) + O(\sqrt{\frac{\log \frac{n}{\lambda_0\delta}}{n}}),~\forall i \in  \{1,2,\ldots, O(\frac{n}{\lambda_0}) \}$. Taking a union bound, we know that with probability at least $1-\frac{2}{3} \delta$ over the sample $S$, we have, $f_T \in \mathcal{F}_{B^*_i,R}$ for some $i^\ast$, $\mathcal{R}_S (\mathcal{F}_{B^*_i,R}) \le \sqrt{\frac{\Tr[{Y^{\top} \boldsymbol{\Theta}^{-1}(X,X) Y}]}{2n}} + \frac{2}{\sqrt{n}}$ and $\sup_{f_T \in \mathcal{F}_{B^*_i,R}}\{\mathcal{L}_p(f_T) - \mathcal{L}_S(f_T)\} \le 2\mathcal{R}_S(\mathcal{F}_{B^*_i,R})+O(\sqrt{\frac{\log \frac{n}{\lambda_0 \delta}}{n}})$. These together can imply,
$$
\mathcal{L}_p(f) \le \frac{1}{\sqrt{n}} + 2 \mathcal{R}_{S}(\mathcal{F}_{B^*_i,R}) + O(\sqrt{\frac{\log \frac{n}{\lambda_0 \delta}}{n}}) \le \sqrt{\frac{2\Tr[{Y^{\top} \boldsymbol{\Theta}^{-1}(X,X) Y}]}{n}} + O\bigg(\sqrt{\frac{\log \frac{n}{\lambda_0 \delta}}{n}}\bigg).
$$
More proof details can be found in \cite{arora2019fineGrained}.
\end{proof}

\begin{thm} [Relationship between the generalization bound and alignment]
Under the same assumptions as in Lemma \eqref{lem:generalization}, if we define the generalization upper bound as $\mathcal{B}(X,Y) = \sqrt{\frac{2 \Tr[{Y^{\top} \boldsymbol{\Theta}^{-1} Y}]}{n}}$, then it can be bounded with the alignment as follows,
\begin{equation}
\frac{ \Tr^2[Y^{\top} Y]}{ \mathcal{A}({X},Y)} \le \frac{n}{2}\mathcal{B}^2({X},Y) \le \frac{\lambda_{max}}{\lambda_{min}} \frac{  \Tr^2 [Y^{\top} Y] }{\mathcal{A}({X},Y) } 
\end{equation}               
\end{thm}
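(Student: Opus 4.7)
The plan is to reduce the inequality to a statement about nonnegative coefficients in the eigenbasis of $\boldsymbol{\Theta}$, and then prove the two sides by Cauchy--Schwarz and by crude eigenvalue bounds respectively. Writing the spectral decomposition $\boldsymbol{\Theta}=V^{\top}\Lambda V$ with $\Lambda=\operatorname{diag}(\lambda_1,\dots,\lambda_n)$ and $V=\{\vec v_i\}$ unitary, I would set $a_{ik}:=(\vec v_i^{\top}Y^k)^2\ge 0$. Since $V$ is unitary, these coefficients simultaneously diagonalize the three quantities involved:
\begin{equation*}
\Tr[Y^{\top}Y]=\sum_{i,k}a_{ik},\qquad \mathcal{A}(X,Y)=\sum_{i,k}\lambda_i a_{ik},\qquad \tfrac{n}{2}\mathcal{B}^2(X,Y)=\Tr[Y^{\top}\boldsymbol{\Theta}^{-1}Y]=\sum_{i,k}\tfrac{a_{ik}}{\lambda_i}.
\end{equation*}
Hence the whole theorem is equivalent to the scalar inequality
\begin{equation*}
\frac{\bigl(\sum_{i,k}a_{ik}\bigr)^{2}}{\sum_{i,k}\lambda_i a_{ik}}\;\le\;\sum_{i,k}\frac{a_{ik}}{\lambda_i}\;\le\;\frac{\lambda_{\max}}{\lambda_{\min}}\cdot\frac{\bigl(\sum_{i,k}a_{ik}\bigr)^{2}}{\sum_{i,k}\lambda_i a_{ik}}.
\end{equation*}

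For the left inequality I would apply Cauchy--Schwarz in the form $\bigl(\sum u_{ik}v_{ik}\bigr)^{2}\le\bigl(\sum u_{ik}^{2}\bigr)\bigl(\sum v_{ik}^{2}\bigr)$ with $u_{ik}=\sqrt{\lambda_i a_{ik}}$ and $v_{ik}=\sqrt{a_{ik}/\lambda_i}$, so that $u_{ik}v_{ik}=a_{ik}$. This immediately yields $\bigl(\sum a_{ik}\bigr)^{2}\le\bigl(\sum\lambda_i a_{ik}\bigr)\bigl(\sum a_{ik}/\lambda_i\bigr)$, which rearranges to the desired lower bound on $\Tr[Y^{\top}\boldsymbol{\Theta}^{-1}Y]$.

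For the right inequality I would use the crudest possible eigenvalue bounds: since every $\lambda_i\in[\lambda_{\min},\lambda_{\max}]$ and $a_{ik}\ge 0$, one has $\sum a_{ik}/\lambda_i \le \lambda_{\min}^{-1}\sum a_{ik}$ and $\sum\lambda_i a_{ik}\le\lambda_{\max}\sum a_{ik}$, so
\begin{equation*}
\sum_{i,k}\frac{a_{ik}}{\lambda_i}\cdot\sum_{i,k}\lambda_i a_{ik}\;\le\;\frac{\lambda_{\max}}{\lambda_{\min}}\Bigl(\sum_{i,k}a_{ik}\Bigr)^{2},
\end{equation*}
which rearranges exactly to the upper bound. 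Translating back through $\mathcal{A}(X,Y)$ and $\tfrac{n}{2}\mathcal{B}^{2}(X,Y)$ gives the theorem.

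There is no real obstacle here once the spectral decomposition is invoked: the entire argument hinges on recognizing that the three quantities share the same eigen-coefficients $a_{ik}$. The only thing to be careful about is that $\lambda_{\min}>0$ (so that $\boldsymbol{\Theta}^{-1}$ and the ratios $a_{ik}/\lambda_i$ make sense), which is exactly the non-degeneracy assumption $\lambda_0=\lambda_{\min}(\boldsymbol{\Theta})>0$ already used in Lemma~\ref{lem:generalization}. A minor remark worth making is that the upper bound is tight up to the condition number $\lambda_{\max}/\lambda_{\min}$, and could in principle be sharpened via Kantorovich's inequality, but such refinement is unnecessary for the qualitative ``alignment controls generalization'' message of the theorem.
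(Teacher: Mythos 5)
Your proposal is correct and follows essentially the same route as the paper's proof: diagonalize in the NTK eigenbasis, lower-bound the product $\tfrac{n}{2}\mathcal{B}^2(X,Y)\,\mathcal{A}(X,Y)$ by a Cauchy--Schwarz argument and upper-bound it by the crude bounds $\lambda_i\in[\lambda_{\min},\lambda_{\max}]$. The only (minor, and arguably favorable) difference is that you apply Cauchy--Schwarz once over the joint index $(i,k)$, whereas the paper decomposes into class pairs $(k,k')$ and bounds each pair separately, a step whose cross-terms ($k\neq k'$) need a small symmetrization to justify; your version sidesteps that entirely.
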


\begin{proof}
We first expand the following expression:
$$
\frac{n}{2} \mathcal{B}^2(X,Y)  \mathcal{A}(X,Y) =
\sum_{k=1}^K \sum_{i=1}^n \lambda_i (v^{\top}_i Y^k)^2 \sum_{k=1}^K \sum_{i=1}^n \frac{1}{\lambda_i} (v^{\top}_i Y^k)^2
$$
Then we use this expansion to prove the inequality on the left hand side,

$$
\begin{aligned}
 &\sum_{k=1}^K \sum_{i=1}^n \lambda_i (v^{\top}_i Y^k)^2 \sum_{k=1}^K \sum_{i=1}^n \frac{1}{\lambda_i} (v^{\top}_i Y^k)^2  = \sum_{k=1}^K \sum_{k'=1}^K \bigg( \sum_{i=1}^n \lambda_i (v^{\top}_i Y^k)^2 \sum_{i=1}^n \frac{1}{\lambda_i} (v^{\top}_i Y^{k'})^2 \bigg) \\
& \ge  \sum_{k=1}^K \sum_{k'=1}^K
  \bigg( \sum_{i=1}^n (v^{\top}_i Y^k)^2 \sum_{i=1}^n  (v^{\top}_i Y^{k'})^2 \bigg) = \big( \sum_{k=1}^K {Y^k}^{\top} V^{\top} V Y^k \big) \big( \sum_{k=1}^K {Y^k}^{\top} V^{\top} V Y^k \big) \\
& = \Tr^2 [Y^{\top} Y]
\end{aligned} 
$$
The second line is due to quadratic mean is greater or equal to geometric mean. Finally, we prove the inequality on the right hand side,
$$
\begin{aligned}
 &\sum_{k=1}^K \sum_{i=1}^n \lambda_i (v^{\top}_i Y^k)^2 \sum_{k=1}^K \sum_{i=1}^n \frac{1}{\lambda_i} (v^{\top}_i Y^k)^2  = \sum_{k=1}^K \sum_{k'=1}^K \bigg( \sum_{i=1}^n \lambda_i (v^{\top}_i Y^k)^2 \sum_{i=1}^n \frac{1}{\lambda_i} (v^{\top}_i Y^{k'})^2 \bigg) \\
& \le  \sum_{k=1}^K \sum_{k'=1}^K \frac{\lambda_{max}}{\lambda_{min}}
  \bigg( \sum_{i=1}^n (v^{\top}_i Y^k)^2 \sum_{i=1}^n  (v^{\top}_i Y^{k'})^2 \bigg) = \frac{\lambda_{max}}{\lambda_{min}} \big( \sum_{k=1}^K {Y^k}^{\top} V^{\top} V Y^k \big) \big( \sum_{k=1}^K {Y^k}^{\top} V^{\top} V Y^k \big) \\
& = \frac{\lambda_{max}}{\lambda_{min}} \Tr^2 [Y^{\top} Y]
\end{aligned} 
$$
\end{proof}

\subsection{Derivation for Maximum Mean Discrepancy}
\label{derivation_mmd}
The difference between truth risk over $p(x)$ and $q(x)$ can be defined as,
\begin{equation*}
\begin{small}
\begin{aligned}
\mathcal{L}_{{p}} - \mathcal{L}_{{q}} = \int_{{x}} g({x}) p({x}) d {x}-\int_{{x}} g({x}) q({x}) d {x}
\end{aligned}
\end{small}
\end{equation*}
where $g(x) = \int_{y} \ell(f({x; \theta}), y) p(y|x) dy$. Follow~\cite{wang2015querying}, we assume that the prediction functions have bounded norm $\|f\|_{F}$. Thus, the function $g$ is bounded. By given the loss function, $g$ is also measurable. Then, $\exists ~ \hat{g} \in \mathcal{C}(x)$, such that, 

\begin{equation*}
\begin{small}
\begin{aligned}
& \int_{{x}} g({x}) p({x}) d {x}-\int_{{x}} g({x}) q({x}) d {x}
= \int_{{x}} \hat{g}({x}) p({x}) d {x}-\int_{{x}} \hat{g}({x}) q({x}) d {x} \\
 & \leq  \sup _{\hat{g} \in \mathcal{C}({x})} \int_{{x}} \hat{g}({x}) p({x}) d {x}-\int_{{x}} \hat{g}({x}) q({x}) d {x} = \text{MMD}\big(p({x}), q({x}), \mathcal{C}\big)
\end{aligned}
\end{small}
\end{equation*}

where $\mathcal{C}(x)$ is the function class of bounded and continuous functions of $x$. To make the MMD term be measurable, we empirically restrict the MMD on a reproducing kernel Hilbert space (RKHS) with the
characteristic kernel $\mathcal{H}_{\boldsymbol{\Theta}}$. Following~\cite{gretton2012kernel}, we know that the relationship between the true MMD and the empirical MMD is, 

\begin{equation*}
\begin{aligned}
{P}\Big(\big| \text{MMD}\big(p({x}), q({x}), \mathcal{C}\big) - \text{MMD}( S_0, S, \mathcal{H}_{\boldsymbol{\Theta}} )\big| \geq \epsilon+2(\sqrt{\frac{C}{n_0}}+\sqrt{\frac{C}{n}})\Big) 
\quad \leq 2 e^{\frac{-\epsilon^{2} n_0 n}{2 C(n_0+n)}}
\end{aligned}
\end{equation*}
where $\text{MMD}( S_0, S, \mathcal{H}_{\boldsymbol{\Theta}})$ is the empirical measure for $\text{MMD}( p(x), q(x), \mathcal{H}_{\boldsymbol{\Theta}})$. Slightly overloading the notation, we denote $S \sim q(x)$, which may not be i.i.d., and the initial label set $S_0 \sim p(x)$. Then, in the active learning setting, $S_0 \subseteq S$.  Further, we denote $|S_0|=n_0, |S|=n$ and $\forall x,x' \in S, \boldsymbol{\Theta}(x,x') \leq C$. Therefore, we have, $\sqrt{\frac{C}{n}}+\sqrt{\frac{C}{n_0}} \geq 2 \sqrt{\frac{C}{n}}$.
For constant factor $\gamma = \frac{M}{M+B}$, we have the following inequality, 
\begin{equation*}
\begin{aligned}
{P}\big(\text{MMD}\big(p({x}), q({x}), \mathcal{C}\big) \geq  \text{MMD}( S_0, S, \mathcal{H}_{\boldsymbol{\Theta}} ) +\epsilon+ 4 \sqrt{\frac{C}{n}}\big) \leq 2 e^{\frac{-\gamma \epsilon^{2} n}{4 C}}
\end{aligned}
\end{equation*}
Denoting $2 e^{\frac{- \gamma \epsilon^{2} n}{4 C}}=\delta / 2$, then we have $\epsilon=\sqrt{\frac{4 C \ln (4 / \delta)}{\gamma n}}$. Combining all the above results, we show that with probability at least $1-\delta$, the following inequality holds:
\begin{equation*}
\begin{aligned}
\mathcal{L}_{{p}} - \mathcal{L}_{{q}} \leq \text{MMD}( S_0, S, \mathcal{H}_{\boldsymbol{\Theta}} ) + 4 \sqrt{\frac{C}{n}} + \sqrt{\frac{4 C \ln (4 / \delta)}{\gamma n}}
\end{aligned}
\end{equation*}
Then, we can get, 
\begin{equation*}
\begin{aligned}
\mathcal{L}_{{p}} - \mathcal{L}_{{q}} \leq \text{MMD}( S_0, S, \mathcal{H}_{\boldsymbol{\Theta}} )  + O\left(\sqrt{\frac{C \ln (1 / \delta)}{n}}\right)
\end{aligned}
\end{equation*}
\color{black}

\section{APPENDIX: More details of experimental settings}
\label{appendix:exp-setting}



\subsection{Implementation Detail}
\label{model_archi}
For simple CNN model, we utilize the same architecture used in Pytorch CIFAR10 Image Classification Tutorial \footnote{\url{https://pytorch.org/tutorials/beginner/blitz/cifar10_tutorial.html}}.
For ResNet model, we use the Pytorch Offical implementation of ResNet-18 \footnote{\url{https://github.com/pytorch/vision/blob/main/torchvision/models/resnet.py}} and set the output dimension to the number of classes.
For VGG model, we use the Pytorch Offical implementation of VGG-11 \footnote{\url{https://github.com/pytorch/vision/blob/main/torchvision/models/vgg.py}}. Besides, we leverage the library BackPACK~\citep{dangel2020backpack} to collect the gradient of samples in batch.

We keep a constant learning rate  of 0.001 for all three datasets and all three models. All the codes mentioned above use the MIT license. All experiments are done with four Tesla V100 SXM2 GPUs and a 12-core 2.2GHz CPU. 

\color{black}
\subsection{Computation of Acquisition Function}

The acquisition function employed by \dgal~can be written as the Equation~\ref{equ:discussion}. Furthermore, we simplify it into the following form:

\begin{equation}
\begin{small}
\begin{aligned}
  \Delta(\{(x_u,& \widehat{y}_u) \}| S) =  
  \|\nabla_{\theta} \ell (f({x}_u; \theta), \hat{{y}}_u)\|^2 + 2 \nabla_{\theta}\ell(f({x}_u; \theta), \hat{{y}}_u)^{\top} \nabla_{\theta} \ell (f({X_S}; \theta), {{Y_S}}).
\end{aligned}
\end{small}\label{equ:acquisition_apd}
\end{equation}
where $\nabla_{\theta} \ell (f({X_S}; \theta), {{Y_S}}) = \sum_{(x, y) \in S} \nabla_{\theta} \ell (f({x}; \theta), {{y}})$.
The computational requirement of the Equation~\ref{equ:acquisition_apd} is mainly composed of two parts, the computation of gradient and the computation of the inner product.
While PyTorch~\cite{paszke2019pytorch} can compute efficiently batch gradients, BackPACK ~\cite{dangel2020backpack} optimizes the computation of individual gradient and compute the gradient norm, sample per sample, at almost no time overhead. Thus, the acquisition function can be computed at low computational costs.
Note, the efficiency of BackPACK has been verified by several recent works with extensive experiments\cite{rame2022fishr, subramani2021enabling}.
\color{black}

\section{APPENDIX: Verification Experiments under Ultra-wide Condition}
\label{ultra-exp}


\subsection{Experiment Setting and Computational Detail for the Empirical Comparison between NTK and MMD}
\label{ntk_mmd}

\textbf{Experiment Setting} For the verification experiment shown in Figure \ref{fig:MMD_B}, we employ a simple CNN as the target model, in which there are three convolutional layers following with global average pooling layer, on the CFAIR10 data set. Note, this CNN architecture is widely used in NTK analysis works~\citep{arora2019exact, zandieh2021scaling}. To make the verification experiment close to the application setting, we keep size of initial labeled set and query batch size same as what we used in Section~\ref{exp}. 

\textbf{Computational Detail}
We follow ~\citep{jia2021efficient} to compute the MMD with NTK kernel. The MMD term, $\text{MMD}(p(x), q(x), \mathcal{H}_{\boldsymbol{\Theta}})$, can be simplified into the following form:

\begin{small}
\begin{equation}
\begin{aligned}
\text{MMD}^2(p(x), q(x)) = \mathbbm{E}[\boldsymbol{\Theta}(x_i,x_j) +\boldsymbol{\Theta}(x'_i,x'_j) - 2\boldsymbol{\Theta}(x_i,x'_j)]
\end{aligned}
\end{equation}
\end{small}
where $x_i, x_j \sim p(x)$ and $x'_i,x'_j \sim q(x)$. Then, we define set $S_0$ as $\{x_1, ..., x_{n_0}\} \sim p(x)$ and 
{ set $S$} 
as $\{x'_1, ..., x'_n\} \sim q(x)$, where $n_0 \leq n$. The  ${\text{MMD}}^2(S_0, {S})$ is an unbiased estimation for $\text{MMD}^2(p(x), q(x))$, 
can we explicitly computed by:
\begin{small}
\begin{equation}
\begin{aligned}
& {\text{MMD}}^2(S_0, {S})  = \frac{1}{m^{2}-m} a+\frac{1}{n^{2}-n} b-\frac{2}{m(n-1)} c \\
&a=\left( \sum_{i,j}^m \boldsymbol{\Theta}(x_i,x_j) - \sum_{i}^m \boldsymbol{\Theta}(x_i,x_i) \right) \\
&b=\left( \sum_{i,j}^n \boldsymbol{\Theta}(x'_i,x'_j) - \sum_{i}^n \boldsymbol{\Theta}(x'_i,x'_i) \right) \\
&c = \left( \sum_{i}^m \sum_{j}^n \boldsymbol{\Theta}(x_i,x'_j) - \sum_{i, j}^m \boldsymbol{\Theta}(x_i,x'_i) \right) \\
\end{aligned}
\end{equation}
\end{small}
Therefore, the MMD term of Equation~\eqref{mmd}, { $\text{MMD}(S_{0}, S, \mathcal{H}_{\boldsymbol{\Theta}})$} , can be empirically approximated by { $\sqrt{{\mathrm{MMD}}^{2}(S_{0}, S)}$}.

\subsection{Experiment for the Correlation Study between Training Dynamics and Alignment}
\label{dynamics_alignment}

\textbf{Experiment Setting.}
For the verification experiment shown in Figure~\ref{fig:align_dynamics}, we also use the simple CNN on CIFAR10. And to keep consistent with the application setting, we set $|S|=500$ and $|\overline{Q}|=250$. The $\overline{Q}$ is randomly sampled from the unlabeled set and the labeled set $S$ is fixed. We independently sample $\overline{Q}$ 150 times to compute the correlation between between $G_{MSE}(S \cup \overline{Q})$ and $\mathcal{A}(X \| X_Q, Y \| Y_Q)$.

\vspace{5pt}
\color{black}
\textbf{Correlation between Training Dynamics computed with pseudo-labels and Alignment.}
\begin{wrapfigure}{r}{5cm}
\includegraphics[width=5.1cm]{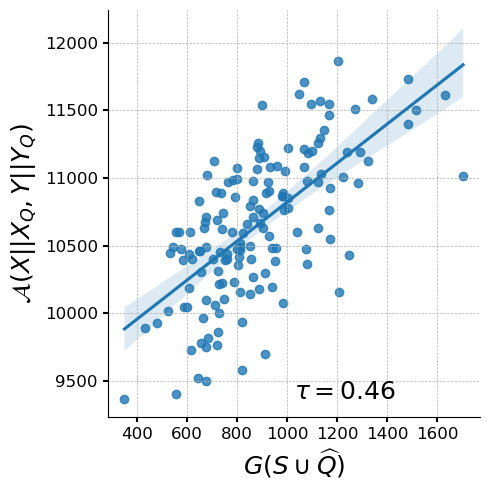}
\caption{Relation between Alignment and Training Dynamics computed with the pseudo-label.}
\label{fig:align_dynamics_pesudo_label}
\end{wrapfigure}
In Figure~\ref{fig:align_dynamics}, we compute the training dynamics with the ground-truth label. To study the effect of pseudo-labels, we further provide the relation between training dynamics computed with pseudo-labels $G_{MSE}(S\cup \overline{Q})$ and alignment $\mathcal{A}(X \| X_Q, Y \| Y_Q)$, in which we compute the pseudo-labels with $\Theta(X_Q, X)^{\top} \Theta(X, X)^{-1} Y$. The result is shown in the Figure~\ref{fig:align_dynamics_pesudo_label}. Note that we keep hyperparameters the same as previously described. 
Compared with Figure~\ref{fig:align_dynamics}, we find that the positive relationship between $\mathcal{A}$ and the $G$ computed with ground-truth labels is stronger than the $G$ computed with pseudo-labels. The result is aligned with our expectations, because the extra noise is introduced by the pseudo-labels. But, the Kendall $\tau$ coefficient still achieves $0.46$ for $\mathcal{A}$ and the $G$ computed with pseudo-labels which indicates the utility of using $G$ calculating with pseudo-labels as the acquisition function to query samples.

\subsection{Correlation Study between Training Dynamics and Generalization Bound}
We present the relation between the training dynamics and the generalization bound in Figure~\ref{fig:dynamics_bound}.
Same as the previous,  we set $|S|=500$ and $|\overline{Q}|=250$ and the $\overline{Q}$ is randomly sampled from the unlabeled set.
The result shows that with the increase of $G$, $B$ will decrease. This empirical observation is aligned with our expectation, because Theorem 2 indicates that the alignment $A$ is inverse proportional to $B$ and Figure~\ref{fig:align_dynamics} tells us that the $G$ is proportional to $A$. Besides, the $\tau$ achieves -0.253 which indicates that the $A$ is moderately inverse proportional to $B$~\cite{botsch2011chapter}.
\begin{figure}[h]
\centering
\includegraphics[width=6cm]{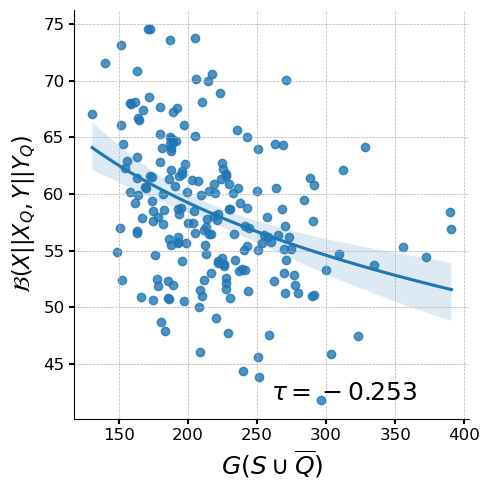}
\caption{Relationship between Training Dynamics and Generalization.}
\label{fig:dynamics_bound}
\end{figure}
\color{black}



\section{APPENDIX: More details of experimental results}

\subsection{Baselines}\label{baselines}
\begin{enumerate}
\item Random: Unlabeled data are randomly selected at each round.
\item Coreset: This method performs a clustering over the last hidden representations in the network, and calculates the minimum distance between each candidate sample’s embedding and embeddings of labeled samples. Then data samples with the maximum distances are selected. ~\citep{sener2017active}.
\item Confidence  Sampling~(Conf): The method  selects $b$ examples with smallest predicted class probability $\text{max}_{i}^{K}f^i(x; \theta)$~\citep{wang2014new}.
\item Margin Sampling~(Marg): The bottom $b$ examples sorted according to the example's multi-class margin are selected. The margin is defined as $f^i(x; \theta) - f^j(x; \theta)$, where $i$ and $j$ are the indices of the largest and second largest entries of $f(x;\theta)$ ~\citep{roth2006margin}.
\item Entropy: Top $b$ samples are selected according  to  the  entropy  of  the  example’s  predictive  class  probability  distribution,  the entropy is defined as $H((f^i(x;\theta))^K_{i=1})$, where $H(p) =\sum_{i}^K{p_i}\ln\frac{1}{p_i}$~\citep{wang2014new}.
\item Active Learning by Learning~(ALBL): The bandit-style meta-active learning algorithm combines Coreset and Conf~\citep{hsu2015active}.
\item Batch Active learning by Diverse Gradient Embeddings~(BADGE): $b$ samples are selected by using k-means++ seeding on the gradients of the final layer, in order to query by uncertainty and diversity.~\citep{ash2019deep}.
\end{enumerate}



\subsection{Experiment Results}\label{rest_main_exp}
The results for ResNet18, VGG11, and vanilla CNN on CIFAR10, SVHN, and Caltech101 with different batch sizes have been shown in the Figure~\ref{fig:main} and~\ref{fig:rest_main}. 
{Note, for the large batch size setting ($b=1000$) on Caltech101, we set the number of query round $R=4$, in which 49.2\% images will be labeled after 4 rounds.}


\begin{figure}[h!] 
\vspace{-10 pt}
\centering
\hspace{-5pt}
\begin{minipage}{0.3\textwidth}
\includegraphics[width = 1.8in]{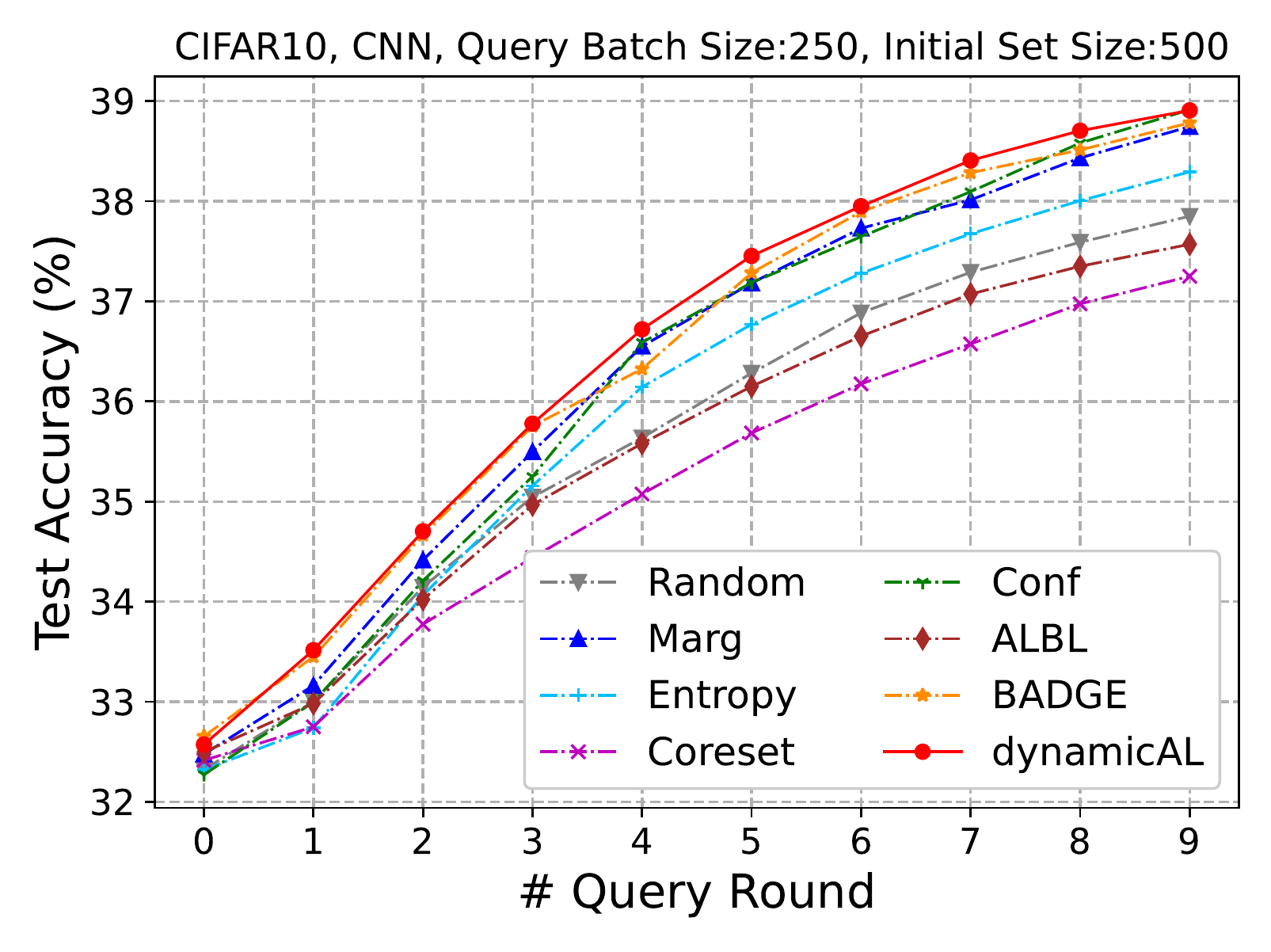}
\end{minipage}
\hspace{5pt}
\begin{minipage}{0.3\textwidth}
\includegraphics[width = 1.8in]{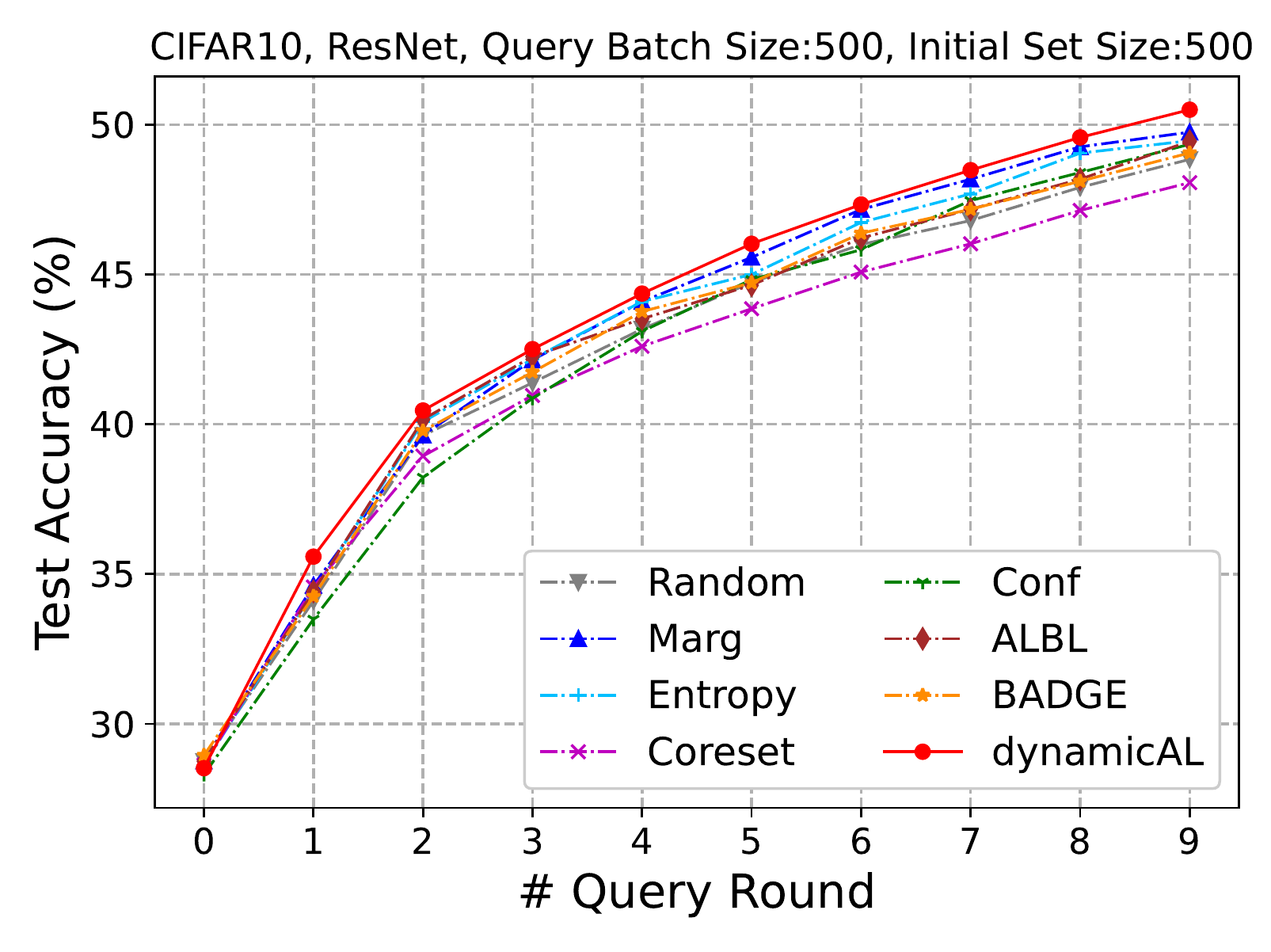}
\end{minipage}
\hspace{5pt}
\begin{minipage}{0.3\textwidth}
\includegraphics[width = 1.8in]{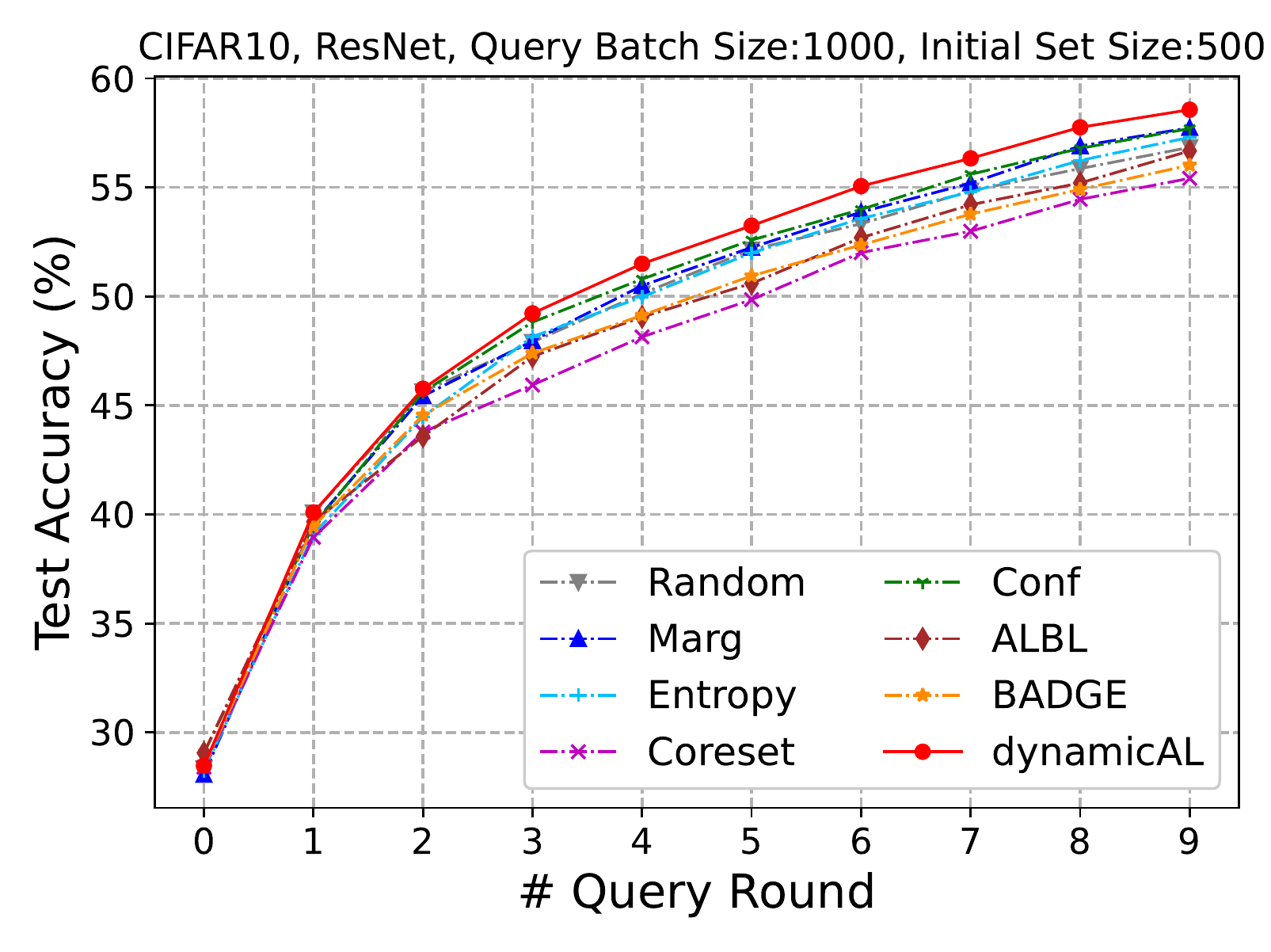}
\end{minipage}
\begin{minipage}{0.3\textwidth}
\includegraphics[width = 1.8in]{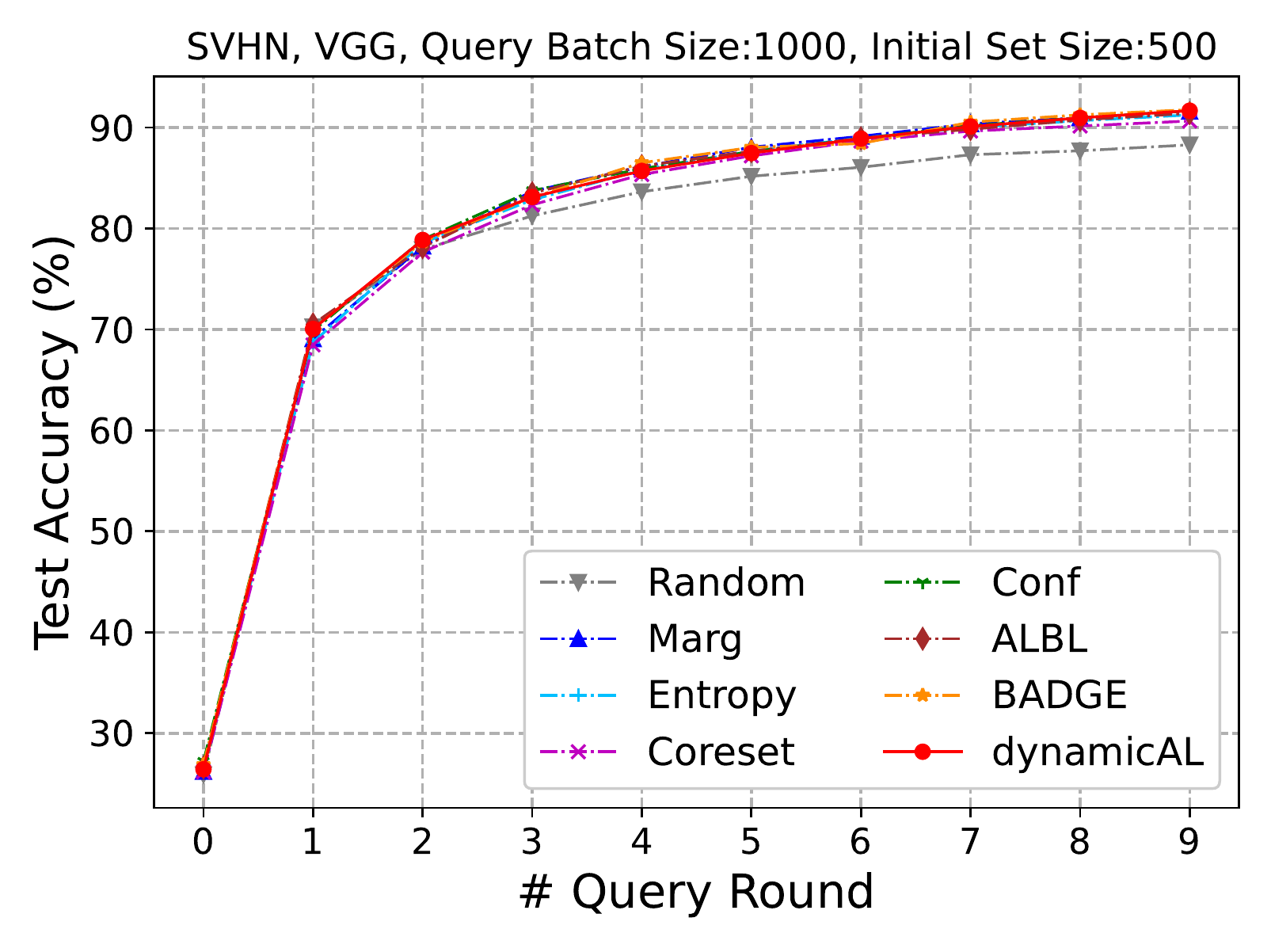}
\end{minipage}
\hspace{45pt}
\begin{minipage}{0.3\textwidth}
\includegraphics[width = 1.8in]{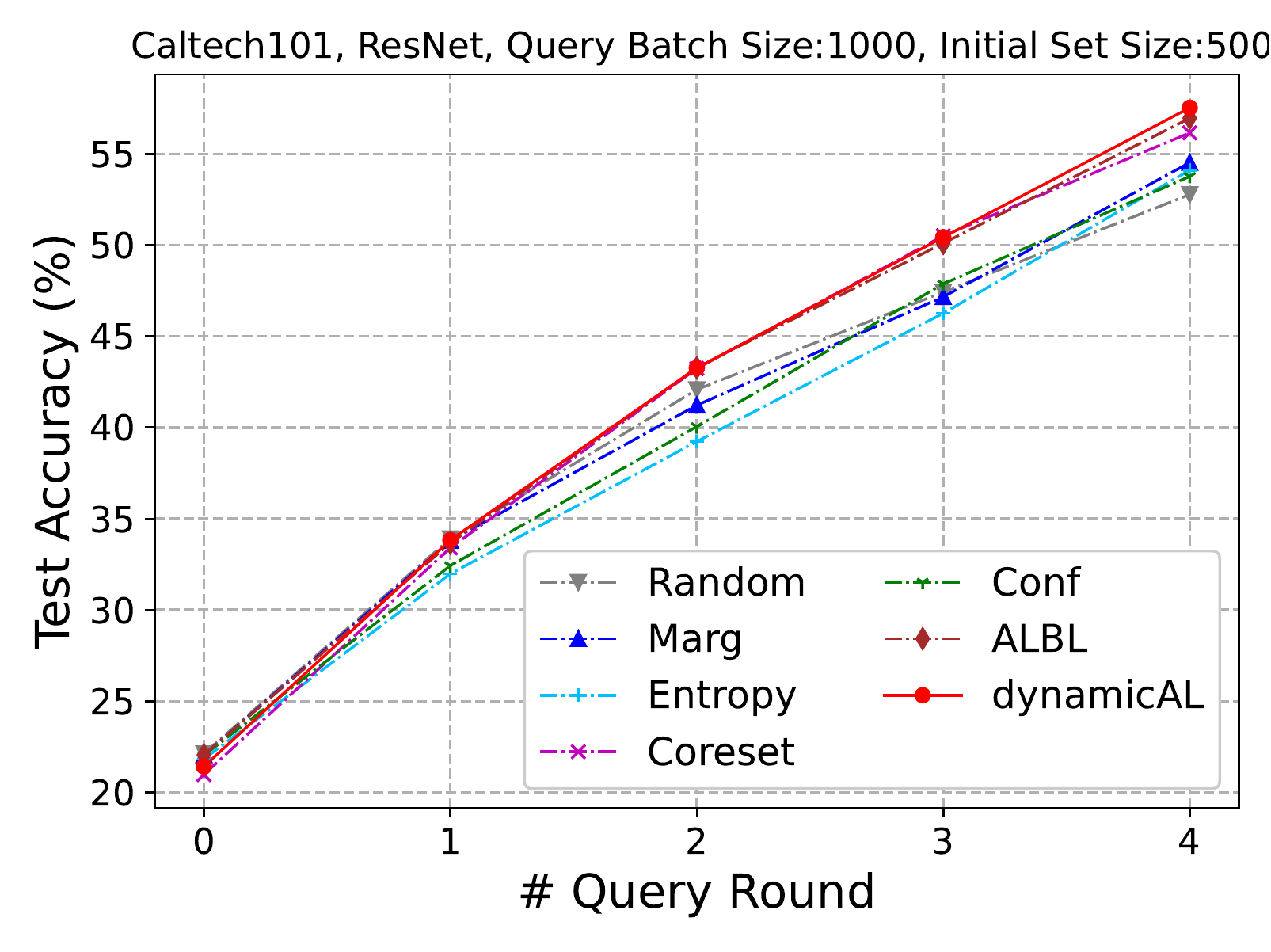}
\end{minipage}
\caption{ {The evaluation results for different active learning methods under a range of conditions.}}
\vspace{-7 pt}
\label{fig:rest_main}
\end{figure}

\subsection{Numerical Result of Main Experiments}
\label{exp_number}
For the the main experiments, we report the means and standard deviations of active learning performance under different settings in the the following tables.

\begin{table}[H]
    \centering
    \tiny
    \setlength{\tabcolsep}{4pt}{
    \caption{\color{black}{CIFAR10, CNN, Query Batch Size:250, Initial Set Size:500}}
    \begin{sc}
    \begin{tabular}{c|cccccccc}
    \toprule
    & Random & Marg &  Entropy &  Coreset & Conf &  ALBL  & BADGE  & \dgal \\ \midrule
    0 & 32.32\%$\pm$1.308\% &	32.48\%$\pm$1.286\%	& 32.32\%$\pm$1.269\% &	32.41\%$\pm$1.281\% & 32.27\%$\pm$1.266\% & 32.50\%$\pm$1.263\% & 32.66\%$\pm$1.182\%  & 32.57\%$\pm$1.423\%   \\
    1 & 33.00\%$\pm$1.175\% &	33.16\%$\pm$1.165\%	& 32.74\%$\pm$1.617\% &	32.75\%$\pm$1.306\% & 33.00\%$\pm$1.703\% & 32.98\%$\pm$1.184\% & 33.45\%$\pm$1.813\%  & 33.52\%$\pm$1.311\%   \\
    2 & 34.14\%$\pm$1.322\% &	34.41\%$\pm$1.130\%	& 34.06\%$\pm$1.546\% &	33.77\%$\pm$1.011\% & 34.21\%$\pm$1.426\% & 34.02\%$\pm$1.392\% & 34.66\%$\pm$1.483\%  & 34.70\%$\pm$1.019\%   \\
    3 & 35.05\%$\pm$1.508\% &	35.50\%$\pm$1.301\%	& 35.16\%$\pm$1.679\% &	34.44\%$\pm$0.937\% & 35.25\%$\pm$1.344\% & 34.97\%$\pm$1.227\% & 35.75\%$\pm$1.024\%  & 35.78\%$\pm$1.115\%   \\
    4 & 35.64\%$\pm$1.945\% &	36.55\%$\pm$1.249\%	& 36.14\%$\pm$1.646\% &	35.08\%$\pm$1.396\% & 36.59\%$\pm$1.508\% & 35.58\%$\pm$1.177\% & 36.33\%$\pm$0.791\%  & 36.72\%$\pm$0.716\%   \\
    5 & 36.28\%$\pm$1.124\% &	37.18\%$\pm$1.547\%	& 36.77\%$\pm$1.004\% &	35.68\%$\pm$1.390\% & 37.19\%$\pm$1.063\% & 36.15\%$\pm$1.311\% & 37.29\%$\pm$1.126\%  & 37.45\%$\pm$1.573\%   \\
    6 & 36.88\%$\pm$1.568\% &	37.73\%$\pm$1.546\%	& 37.28\%$\pm$1.983\% &	36.18\%$\pm$1.419\% & 37.65\%$\pm$2.062\% & 36.65\%$\pm$1.111\% & 37.90\%$\pm$1.988\%  & 37.95\%$\pm$1.414\%   \\
    7 & 37.29\%$\pm$1.605\% &	38.01\%$\pm$0.874\%	& 37.67\%$\pm$1.723\% &	36.57\%$\pm$1.346\% & 38.09\%$\pm$1.174\% & 37.07\%$\pm$1.731\% & 38.28\%$\pm$1.474\%  & 38.41\%$\pm$1.295\%   \\
    8 & 37.59\%$\pm$1.848\% &	38.43\%$\pm$1.675\%	& 38.01\%$\pm$1.601\% &	36.98\%$\pm$0.748\% & 38.58\%$\pm$1.556\% & 37.35\%$\pm$1.135\% & 38.51\%$\pm$1.091\%  & 38.70\%$\pm$1.291\%   \\
    9 & 37.85\%$\pm$1.789\% &	38.75\%$\pm$1.550\%	& 38.29\%$\pm$1.312\% &	37.25\%$\pm$1.527\% & 38.91\%$\pm$1.902\% & 37.57\%$\pm$1.170\% & 38.78\%$\pm$0.776\%  & 38.91\%$\pm$1.358\%   \\
    \bottomrule
\end{tabular}
\end{sc}
}
\end{table}

\begin{table}[H]
    \centering
    \tiny
    \setlength{\tabcolsep}{4pt}{
    \caption{\color{black}{CIFAR10, CNN, Query Batch Size:500, Initial Set Size:500}}
    \begin{sc}
    \begin{tabular}{c|cccccccc}
    \toprule
    &  Random &  Marg &  Entropy &  Coreset & Conf &  ALBL  &  BADGE  &  \dgal \\ \midrule
    0 & 32.26\%$\pm$1.164\% &	32.31\%$\pm$1.441\%	& 32.29\%$\pm$1.397\% &	32.54\%$\pm$1.331\% & 32.32\%$\pm$1.288\% & 32.41\%$\pm$1.432\% & 32.49\%$\pm$1.320\%  & 32.37\%$\pm$1.049\%   \\
    1 & 34.87\%$\pm$1.286\% &	34.89\%$\pm$1.575\%	& 34.58\%$\pm$1.664\% &	33.84\%$\pm$1.368\% & 34.75\%$\pm$1.503\% & 34.08\%$\pm$1.368\% & 34.44\%$\pm$1.230\%  & 34.88\%$\pm$1.557\%   \\
    2 & 36.45\%$\pm$0.842\% &	36.69\%$\pm$1.456\%	& 36.50\%$\pm$1.463\% &	35.96\%$\pm$1.667\% & 36.73\%$\pm$1.744\% & 35.62\%$\pm$1.536\% & 36.41\%$\pm$1.175\%  & 36.78\%$\pm$1.253\%   \\
    3 & 37.16\%$\pm$0.767\% &	37.99\%$\pm$1.356\%	& 37.30\%$\pm$1.221\% &	36.20\%$\pm$1.086\% & 38.12\%$\pm$1.663\% & 36.55\%$\pm$1.327\% & 37.56\%$\pm$1.284\%  & 38.30\%$\pm$1.152\%   \\
    4 & 37.89\%$\pm$0.880\% &	39.15\%$\pm$1.056\%	& 38.23\%$\pm$0.878\% &	36.73\%$\pm$1.011\% & 39.10\%$\pm$1.336\% & 37.20\%$\pm$1.381\% & 38.49\%$\pm$1.238\%  & 39.37\%$\pm$0.708\%   \\
    5 & 38.59\%$\pm$0.861\% &	39.98\%$\pm$1.562\%	& 39.01\%$\pm$1.278\% &	37.33\%$\pm$1.373\% & 39.81\%$\pm$1.402\% & 37.80\%$\pm$1.560\% & 39.61\%$\pm$1.219\%  & 40.09\%$\pm$0.940\%   \\
    6 & 39.15\%$\pm$1.108\% &	40.70\%$\pm$1.391\%	& 39.68\%$\pm$1.315\% &	37.97\%$\pm$1.393\% & 40.47\%$\pm$1.126\% & 38.47\%$\pm$1.270\% & 40.55\%$\pm$1.066\%  & 40.75\%$\pm$1.671\%   \\
    7 & 39.51\%$\pm$1.219\% &	40.99\%$\pm$1.217\%	& 40.09\%$\pm$1.408\% &	38.53\%$\pm$1.600\% & 41.05\%$\pm$1.448\% & 39.11\%$\pm$1.385\% & 40.97\%$\pm$0.814\%  & 41.21\%$\pm$1.433\%   \\
    8 & 39.90\%$\pm$0.807\% &	41.39\%$\pm$1.614\%	& 40.39\%$\pm$1.357\% &	39.06\%$\pm$1.156\% & 41.30\%$\pm$1.865\% & 39.55\%$\pm$1.595\% & 41.27\%$\pm$1.409\%  & 41.59\%$\pm$1.013\%   \\
    9 & 40.17\%$\pm$1.170\% &	41.64\%$\pm$1.287\%	& 40.71\%$\pm$0.739\% &	39.43\%$\pm$0.892\% & 41.55\%$\pm$1.341\% & 39.95\%$\pm$1.299\% & 41.41\%$\pm$0.949\%  & 41.78\%$\pm$0.645\%   \\
    \bottomrule
\end{tabular}
\end{sc}}
\end{table}

\begin{table}[H]
    \centering
    \tiny
    \setlength{\tabcolsep}{4pt}{
    \caption{\color{black}CIFAR10, ResNet, Query Batch Size:500, Initial Set Size:500}
    \begin{sc}
    \begin{tabular}{c|cccccccc}
    \toprule
    &  Random &  Marg &  Entropy &  Coreset &  Conf &  ALBL  &  BADGE  &  \dgal \\ \midrule
    0 & 28.75\%$\pm$1.780\% &	28.75\%$\pm$0.369\%	& 28.63\%$\pm$1.394\% &	28.63\%$\pm$1.120\% & 28.31\%$\pm$1.011\% & 28.75\%$\pm$0.957\% & 28.95\%$\pm$1.040\%  & 28.52\%$\pm$0.686\%   \\
    1 & 34.11\%$\pm$3.088\% &	34.62\%$\pm$2.022\%	& 34.42\%$\pm$0.849\% &	34.57\%$\pm$0.992\% & 33.49\%$\pm$1.269\% & 34.42\%$\pm$2.077\% & 34.26\%$\pm$1.740\%  & 35.58\%$\pm$2.858\%   \\
    2 & 39.63\%$\pm$2.157\% &	39.63\%$\pm$0.313\%	& 40.08\%$\pm$1.022\% &	38.94\%$\pm$1.408\% & 38.23\%$\pm$1.454\% & 40.16\%$\pm$2.574\% & 39.78\%$\pm$1.384\%  & 40.46\%$\pm$0.959\%   \\
    3 & 41.38\%$\pm$2.357\% &	42.15\%$\pm$0.810\%	& 42.18\%$\pm$1.271\% &	40.96\%$\pm$0.961\% & 40.87\%$\pm$0.860\% & 42.26\%$\pm$2.347\% & 41.74\%$\pm$1.230\%  & 42.51\%$\pm$0.799\%   \\
    4 & 43.18\%$\pm$1.809\% &	44.09\%$\pm$1.165\%	& 44.09\%$\pm$1.150\% &	42.60\%$\pm$1.094\% & 43.10\%$\pm$1.325\% & 43.52\%$\pm$3.064\% & 43.76\%$\pm$1.364\%  & 44.36\%$\pm$0.980\%   \\
    5 & 44.73\%$\pm$2.253\% &	45.57\%$\pm$1.115\%	& 45.00\%$\pm$0.731\% &	43.86\%$\pm$1.369\% & 44.83\%$\pm$1.388\% & 44.64\%$\pm$3.097\% & 44.73\%$\pm$1.675\%  & 46.02\%$\pm$0.754\%   \\
    6 & 46.00\%$\pm$2.193\% &	47.17\%$\pm$0.929\%	& 46.74\%$\pm$1.118\% &	45.08\%$\pm$1.549\% & 45.83\%$\pm$1.426\% & 46.22\%$\pm$2.601\% & 46.38\%$\pm$1.607\%  & 47.34\%$\pm$1.027\%   \\
    7 & 46.80\%$\pm$2.134\% &	48.18\%$\pm$1.230\%	& 47.69\%$\pm$1.253\% &	46.02\%$\pm$1.589\% & 47.47\%$\pm$1.424\% & 47.18\%$\pm$2.384\% & 47.17\%$\pm$1.404\%  & 48.48\%$\pm$1.452\%   \\
    8 & 47.91\%$\pm$1.722\% &	49.26\%$\pm$0.652\%	& 49.05\%$\pm$1.113\% &	47.14\%$\pm$1.880\% & 48.40\%$\pm$1.178\% & 48.18\%$\pm$2.503\% & 48.11\%$\pm$1.049\%  & 49.58\%$\pm$1.673\%   \\
    9 & 48.84\%$\pm$1.584\% &	49.75\%$\pm$1.341\%	& 49.46\%$\pm$1.282\% &	48.07\%$\pm$1.480\% & 49.35\%$\pm$1.269\% & 49.45\%$\pm$2.529\% & 49.06\%$\pm$0.850\%  & 50.50\%$\pm$1.301\%   \\
    \bottomrule
\end{tabular}
\end{sc}}
\end{table}

\begin{table}[H]
    \centering
    \tiny
    \setlength{\tabcolsep}{4pt}{
    \caption{\color{black}CIFAR10, ResNet, Query Batch Size:1000, Initial Set Size:500}
    \begin{sc}
    \begin{tabular}{c|cccccccc}
    \toprule
    &  Random &  Marg &  Entropy &  Coreset &  Conf &  ALBL  &  BADGE  &  \dgal \\ \midrule
    0 & 28.34\%$\pm$1.465\% &	28.07\%$\pm$2.604\%	& 28.24\%$\pm$1.756\% &	28.41\%$\pm$0.722\% & 29.05\%$\pm$1.137\% & 29.06\%$\pm$0.847\% & 28.43\%$\pm$1.176\%  & 28.48\%$\pm$2.062\%   \\
    1 & 40.08\%$\pm$0.329\% &	39.57\%$\pm$1.551\%	& 39.09\%$\pm$2.180\% &	38.95\%$\pm$1.047\% & 39.50\%$\pm$2.340\% & 39.67\%$\pm$1.489\% & 39.46\%$\pm$3.020\%  & 40.09\%$\pm$1.795\%   \\
    2 & 45.63\%$\pm$1.253\% &	45.43\%$\pm$0.444\%	& 44.48\%$\pm$1.823\% &	43.78\%$\pm$0.986\% & 45.62\%$\pm$1.882\% & 43.58\%$\pm$1.329\% & 44.55\%$\pm$3.654\%  & 45.77\%$\pm$2.290\%   \\
    3 & 47.90\%$\pm$1.257\% &	47.96\%$\pm$0.735\%	& 48.15\%$\pm$2.509\% &	45.93\%$\pm$0.682\% & 48.82\%$\pm$1.797\% & 47.24\%$\pm$1.926\% & 47.39\%$\pm$4.189\%  & 49.22\%$\pm$1.704\%   \\
    4 & 50.13\%$\pm$1.207\% &	50.49\%$\pm$0.807\%	& 49.97\%$\pm$2.819\% &	48.14\%$\pm$0.566\% & 50.79\%$\pm$1.870\% & 49.05\%$\pm$1.831\% & 49.13\%$\pm$4.053\%  & 51.50\%$\pm$1.925\%   \\
    5 & 52.14\%$\pm$1.517\% &	52.24\%$\pm$0.781\%	& 52.00\%$\pm$2.762\% &	49.85\%$\pm$1.075\% & 52.59\%$\pm$2.202\% & 50.59\%$\pm$1.636\% & 50.94\%$\pm$3.628\%  & 53.24\%$\pm$1.927\%   \\
    6 & 53.33\%$\pm$1.300\% &	53.87\%$\pm$0.635\%	& 53.57\%$\pm$3.123\% &	52.01\%$\pm$0.772\% & 53.99\%$\pm$2.390\% & 52.69\%$\pm$1.599\% & 52.36\%$\pm$3.924\%  & 55.06\%$\pm$1.697\%   \\
    7 & 54.84\%$\pm$1.238\% &	55.19\%$\pm$1.136\%	& 54.79\%$\pm$3.144\% &	52.99\%$\pm$1.147\% & 55.60\%$\pm$2.002\% & 54.20\%$\pm$1.685\% & 53.77\%$\pm$3.985\%  & 56.33\%$\pm$1.613\%   \\
    8 & 55.86\%$\pm$1.161\% &	56.90\%$\pm$0.732\%	& 56.23\%$\pm$3.182\% &	54.45\%$\pm$0.821\% & 56.79\%$\pm$2.033\% & 55.20\%$\pm$1.868\% & 54.91\%$\pm$4.104\%  & 57.76\%$\pm$1.796\%   \\
    9 & 56.84\%$\pm$0.979\% &	57.73\%$\pm$0.500\%	& 57.29\%$\pm$3.225\% &	55.42\%$\pm$0.954\% & 57.70\%$\pm$2.042\% & 56.67\%$\pm$1.783\% & 56.02\%$\pm$3.935\%  & 58.56\%$\pm$1.574\%   \\
    \bottomrule
\end{tabular}
\end{sc}}
\end{table}

\begin{table}[H]
    \centering
    \tiny
    \setlength{\tabcolsep}{4pt}{
    \caption{\color{black}SVHN, VGG, Query Batch Size:500, Initial Set Size:500}
    \begin{sc}
    \begin{tabular}{c|cccccccc}
    \toprule
    &  Random &  Marg &  Entropy &  Coreset &  Conf & ALBL  &  BADGE  &  \dgal \\ \midrule
    0 & 25.94\%$\pm$7.158\% &	26.15\%$\pm$6.290\%	& 26.41\%$\pm$8.994\% &	25.83\%$\pm$5.845\% & 26.52\%$\pm$7.489\% & 25.31\%$\pm$5.030\% & 26.38\%$\pm$9.100\%  & 26.30\%$\pm$5.505\%   \\
    1 & 61.23\%$\pm$4.812\% &	63.93\%$\pm$3.127\%	& 59.02\%$\pm$4.724\% &	57.02\%$\pm$3.672\% & 61.99\%$\pm$2.613\% & 62.14\%$\pm$5.531\% & 58.70\%$\pm$5.615\%  & 63.01\%$\pm$12.293\%   \\
    2 & 71.35\%$\pm$2.364\% &	74.08\%$\pm$0.933\%	& 71.25\%$\pm$1.459\% &	67.95\%$\pm$2.870\% & 73.31\%$\pm$2.828\% & 71.75\%$\pm$2.555\% & 74.74\%$\pm$2.978\%  & 74.10\%$\pm$4.557\%   \\
    3 & 76.34\%$\pm$1.626\% &	79.17\%$\pm$1.064\%	& 76.74\%$\pm$1.521\% &	73.76\%$\pm$2.844\% & 78.02\%$\pm$1.939\% & 77.65\%$\pm$1.518\% & 77.75\%$\pm$2.100\%  & 79.20\%$\pm$2.651\%   \\
    4 & 78.86\%$\pm$1.378\% &	82.18\%$\pm$0.504\%	& 79.67\%$\pm$0.809\% &	78.14\%$\pm$2.486\% & 81.32\%$\pm$1.901\% & 81.09\%$\pm$1.005\% & 80.16\%$\pm$1.353\%  & 82.33\%$\pm$2.134\%   \\
    5 & 80.56\%$\pm$1.149\% &	83.85\%$\pm$0.750\%	& 81.87\%$\pm$0.638\% &	80.34\%$\pm$2.339\% & 83.31\%$\pm$1.529\% & 83.37\%$\pm$1.225\% & 82.94\%$\pm$0.830\%  & 84.19\%$\pm$1.940\%   \\
    6 & 81.98\%$\pm$1.334\% &	85.61\%$\pm$0.624\%	& 83.56\%$\pm$0.541\% &	82.32\%$\pm$1.592\% & 84.94\%$\pm$0.858\% & 85.19\%$\pm$0.993\% & 83.69\%$\pm$0.975\%  & 85.80\%$\pm$1.498\%   \\
    7 & 83.00\%$\pm$1.048\% &	86.62\%$\pm$0.607\%	& 84.94\%$\pm$0.079\% &	83.98\%$\pm$1.394\% & 85.97\%$\pm$1.179\% & 86.31\%$\pm$0.977\% & 85.15\%$\pm$0.760\%  & 86.75\%$\pm$1.426\%   \\
    8 & 83.59\%$\pm$0.945\% &	87.57\%$\pm$0.625\%	& 85.78\%$\pm$0.068\% &	85.26\%$\pm$1.431\% & 87.13\%$\pm$0.679\% & 87.55\%$\pm$0.831\% & 86.61\%$\pm$0.478\%  & 87.91\%$\pm$1.264\%   \\
    9 & 84.42\%$\pm$0.744\% &	88.23\%$\pm$0.600\%	& 87.11\%$\pm$0.437\% &	86.18\%$\pm$0.886\% & 87.87\%$\pm$0.598\% & 87.89\%$\pm$0.780\% & 87.29\%$\pm$0.441\%  & 88.52\%$\pm$1.240\%   \\
    \bottomrule
\end{tabular}
\end{sc}}
\end{table}

\begin{table}[H]
    \centering
    \tiny
    \setlength{\tabcolsep}{4pt}{
    \caption{\color{black}SVHN, VGG, Query Batch Size:1000, Initial Set Size:500}
    \begin{sc}
    \begin{tabular}{c|cccccccc}
    \toprule
    &  Random &  Marg &  Entropy &  Coreset &  Conf &  ALBL  &  BADGE  &  \dgal \\ \midrule
    0 & 25.90\%$\pm$3.479\% &	26.20\%$\pm$5.409\%	& 26.85\%$\pm$4.403\% &	26.18\%$\pm$6.853\% & 27.21\%$\pm$8.721\% & 26.60\%$\pm$4.688\% & 26.88\%$\pm$6.248\%  & 26.43\%$\pm$8.047\%   \\
    1 & 70.26\%$\pm$3.154\% &	69.06\%$\pm$3.646\%	& 68.72\%$\pm$2.156\% &	68.46\%$\pm$1.111\% & 69.85\%$\pm$3.485\% & 70.51\%$\pm$3.487\% & 70.09\%$\pm$2.690\%  & 70.04\%$\pm$1.650\%   \\
    2 & 77.91\%$\pm$1.061\% &	78.24\%$\pm$2.237\%	& 78.56\%$\pm$0.492\% &	77.66\%$\pm$1.784\% & 78.89\%$\pm$2.809\% & 78.14\%$\pm$1.494\% & 78.67\%$\pm$1.799\%  & 78.86\%$\pm$1.710\%   \\
    3 & 81.25\%$\pm$0.812\% &	83.68\%$\pm$1.657\%	& 82.83\%$\pm$0.527\% &	82.34\%$\pm$1.461\% & 83.75\%$\pm$2.165\% & 83.50\%$\pm$1.669\% & 83.07\%$\pm$1.334\%  & 83.11\%$\pm$1.269\%   \\
    4 & 83.63\%$\pm$0.746\% &	86.12\%$\pm$1.251\%	& 85.80\%$\pm$0.744\% &	85.34\%$\pm$1.126\% & 85.91\%$\pm$1.128\% & 86.18\%$\pm$0.979\% & 86.50\%$\pm$1.087\%  & 85.70\%$\pm$1.179\%   \\
    5 & 85.17\%$\pm$0.870\% &	88.04\%$\pm$1.022\%	& 87.66\%$\pm$0.683\% &	87.19\%$\pm$0.928\% & 87.61\%$\pm$1.044\% & 87.65\%$\pm$1.031\% & 88.03\%$\pm$0.742\%  & 87.46\%$\pm$1.054\%   \\
    6 & 86.06\%$\pm$0.822\% &	89.13\%$\pm$0.712\%	& 88.96\%$\pm$0.395\% &	88.65\%$\pm$0.505\% & 88.90\%$\pm$0.845\% & 88.93\%$\pm$0.809\% & 88.41\%$\pm$0.783\%  & 88.89\%$\pm$1.274\%   \\
    7 & 87.30\%$\pm$0.948\% &	90.36\%$\pm$0.532\%	& 90.00\%$\pm$0.257\% &	89.65\%$\pm$0.486\% & 90.18\%$\pm$0.706\% & 89.83\%$\pm$0.747\% & 90.53\%$\pm$0.495\%  & 90.09\%$\pm$1.149\%   \\
    8 & 87.69\%$\pm$0.890\% &	90.95\%$\pm$0.375\%	& 90.67\%$\pm$0.385\% &	90.15\%$\pm$0.410\% & 90.96\%$\pm$0.677\% & 90.75\%$\pm$0.567\% & 91.25\%$\pm$0.432\%  & 90.95\%$\pm$0.782\%   \\
    9 & 88.28\%$\pm$0.723\% &	91.59\%$\pm$0.417\%	& 91.25\%$\pm$0.353\% &	90.64\%$\pm$0.311\% & 91.66\%$\pm$0.755\% & 91.41\%$\pm$0.665\% & 91.76\%$\pm$0.367\%  & 91.67\%$\pm$0.840\%   \\
    \bottomrule
\end{tabular}
\end{sc}}
\end{table}


\begin{table}[H]
    \centering
    \tiny
    \setlength{\tabcolsep}{4pt}{
    \caption{\color{black}{Caltech101, ResNet, Query Batch Size:500, Initial Set Size:500}}
    \begin{sc}
    \begin{tabular}{c|ccccccc}
    \toprule
    &  Random &  Marg &  Entropy &  Coreset &  Conf &  ALBL  &  \dgal \\ \midrule
    0 & 21.59\%$\pm$1.431\% &	21.98\%$\pm$1.688\%	& 21.49\%$\pm$1.681\% &	21.39\%$\pm$1.738\% & 21.98\%$\pm$1.459\% & 21.48\%$\pm$1.828\% & 21.38\%$\pm$1.323\%   \\
    1 & 25.84\%$\pm$1.112\% &	28.42\%$\pm$1.677\%	& 27.43\%$\pm$0.760\% &	28.61\%$\pm$1.224\% & 27.25\%$\pm$1.155\% & 28.02\%$\pm$1.515\% & 29.34\%$\pm$2.111\%   \\
    2 & 34.94\%$\pm$0.635\% &	34.76\%$\pm$1.745\%	& 32.94\%$\pm$1.224\% &	35.37\%$\pm$1.561\% & 32.85\%$\pm$0.849\% & 35.86\%$\pm$1.012\% & 36.14\%$\pm$1.234\%   \\
    3 & 37.34\%$\pm$1.088\% &	39.70\%$\pm$1.328\%	& 36.36\%$\pm$0.636\% &	40.81\%$\pm$1.005\% & 37.52\%$\pm$1.250\% & 40.20\%$\pm$1.091\% & 41.19\%$\pm$0.789\%   \\
    4 & 43.87\%$\pm$0.867\% &	43.26\%$\pm$0.612\%	& 40.12\%$\pm$0.805\% &	45.38\%$\pm$0.508\% & 41.82\%$\pm$1.104\% & 45.27\%$\pm$0.725\% & 46.11\%$\pm$1.138\%   \\
    5 & 45.45\%$\pm$1.672\% &	46.25\%$\pm$1.562\%	& 43.24\%$\pm$1.617\% &	47.81\%$\pm$1.683\% & 44.60\%$\pm$1.295\% & 48.35\%$\pm$1.729\% & 49.42\%$\pm$1.298\%   \\
    6 & 47.60\%$\pm$1.383\% &	49.20\%$\pm$1.310\%	& 45.71\%$\pm$1.047\% &	50.60\%$\pm$1.596\% & 46.74\%$\pm$0.760\% & 51.20\%$\pm$1.466\% & 52.31\%$\pm$1.739\%   \\
    7 & 49.97\%$\pm$0.530\% &	51.40\%$\pm$1.571\%	& 48.19\%$\pm$0.928\% &	52.80\%$\pm$1.887\% & 49.19\%$\pm$0.885\% & 53.90\%$\pm$1.166\% & 55.03\%$\pm$1.098\%   \\
    8 & 52.06\%$\pm$1.476\% &	53.56\%$\pm$1.044\%	& 50.81\%$\pm$0.943\% &	55.31\%$\pm$1.105\% & 51.99\%$\pm$1.383\% & 56.22\%$\pm$0.838\% & 56.92\%$\pm$1.153\%   \\
    9 & 54.04\%$\pm$0.898\% &	55.92\%$\pm$0.496\%	& 53.05\%$\pm$0.554\% &	56.93\%$\pm$0.691\% & 54.96\%$\pm$0.981\% & 57.99\%$\pm$0.805\% & 58.81\%$\pm$1.040\%   \\
    \bottomrule
\end{tabular}
\end{sc}}
\end{table}

\begin{table}[H]
    \centering
    \tiny
    \setlength{\tabcolsep}{4pt}{
    \caption{\color{black}Caltech101, ResNet, Query Batch Size:1000, Initial Set Size:500}
    \begin{sc}
    \begin{tabular}{c|ccccccc}
    \toprule
    &  Random &  Marg &  Entropy &  Coreset &  Conf &  ALBL  &  \dgal \\ \midrule
    0 & 22.13\%$\pm$1.050\% &	22.05\%$\pm$1.011\%	& 21.83\%$\pm$0.725\% &	20.98\%$\pm$0.631\% & 22.03\%$\pm$1.364\% & 22.05\%$\pm$0.633\% & 21.42\%$\pm$1.735\%  \\
    1 & 33.91\%$\pm$1.330\% &	33.80\%$\pm$1.002\%	& 31.98\%$\pm$1.000\% &	33.40\%$\pm$0.962\% & 32.43\%$\pm$0.895\% & 33.66\%$\pm$2.174\% & 33.83\%$\pm$1.438\%  \\
    2 & 42.08\%$\pm$0.560\% &	41.22\%$\pm$0.730\%	& 39.23\%$\pm$0.981\% &	43.24\%$\pm$0.960\% & 40.05\%$\pm$0.988\% & 43.28\%$\pm$2.360\% & 43.27\%$\pm$2.280\%  \\
    3 & 47.43\%$\pm$0.700\% &	47.16\%$\pm$0.659\%	& 46.26\%$\pm$0.968\% &	50.51\%$\pm$0.706\% & 47.87\%$\pm$0.698\% & 50.10\%$\pm$2.082\% & 50.43\%$\pm$1.634\%  \\
    4 & 52.77\%$\pm$0.980\% &	54.52\%$\pm$1.288\%	& 54.11\%$\pm$1.347\% &	56.15\%$\pm$1.284\% & 53.76\%$\pm$1.196\% & 56.96\%$\pm$1.733\% & 57.52\%$\pm$1.189\%  \\
    \bottomrule
\end{tabular}
\end{sc}}
\end{table}

\color{black}
\subsection{Maximum Mean Discrepancy for Multiple Rounds}
\label{mmd_multi_round}
As shown in Figure~\ref{fig:MMD_B}, the MMD term is much smaller than the $\mathcal{B}$ at the first query round. To better understand the relation between MMD and $\mathcal{B}$ for multiple query setting, we measure the $\text{MMD}/\mathcal{B}$ for $R \geq 2$. As shown in Figure~\ref{fig:mmd_b_large_R}, $\mathcal{B}$ is much larger than $\text{MMD}$ even  multiple query rounds. 
Besides, we notice that, for the first round, the larger query batch always leads to larger $\text{MMD}/\mathcal{B}$, because the sampling bias introduced by the query policy will be amplified by using large batch size. 

\begin{figure}[h]
    \centering
    \includegraphics[width=0.5\textwidth]{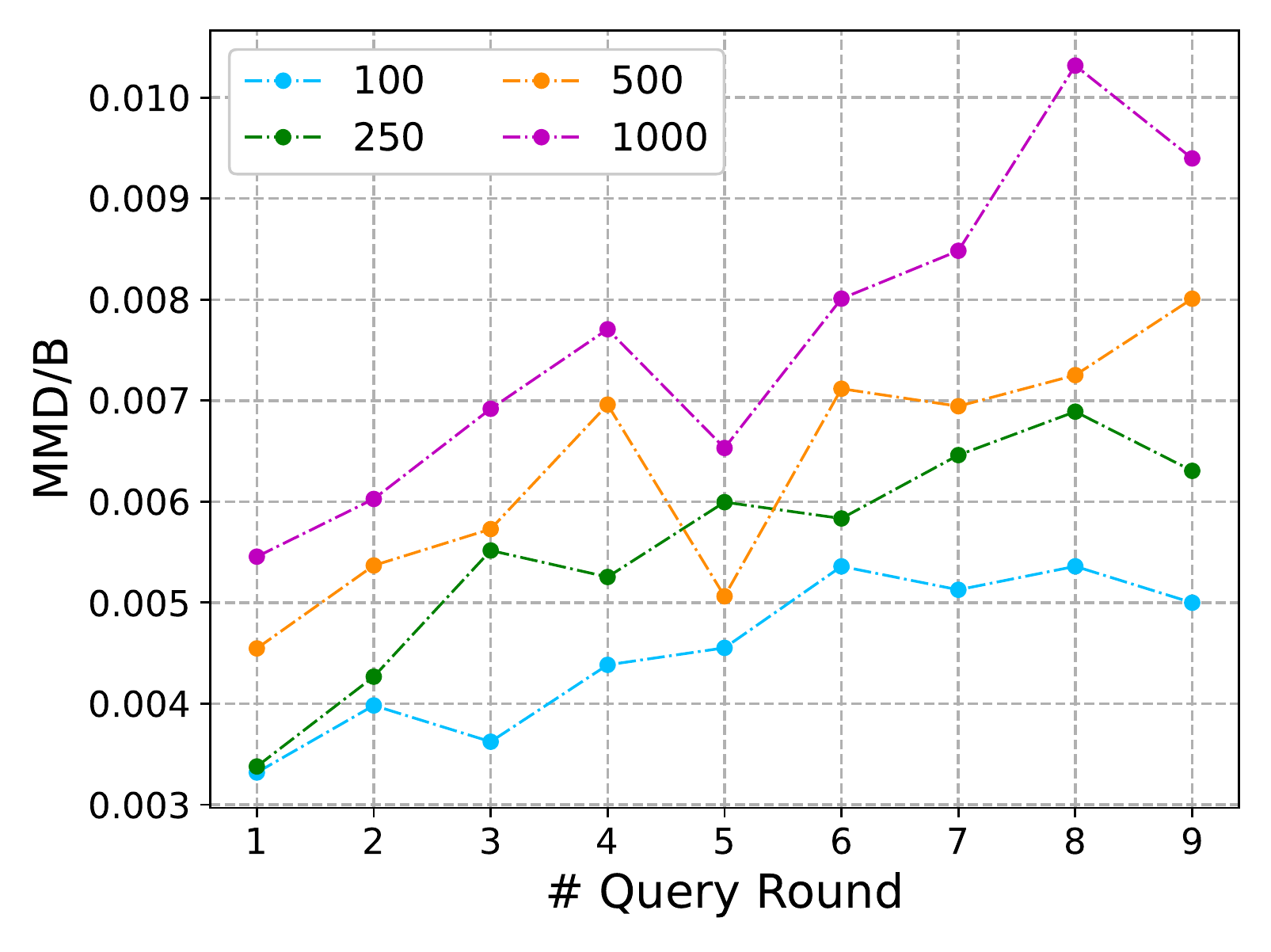}
    \vspace{-12pt}
    \caption{\color{black} $\text{MMD}/\mathcal{B}$ for larger query round.}
    \label{fig:mmd_b_large_R}
\end{figure}

Furthermore, we measure the $\text{MMD}/\mathcal{B}$ with a constant total budget size but different query rounds. The result is shown in Table~\ref{tab:mmd_b_constant_budget}. As our expectation, spending the total query budget in one query round will induce the largest $\text{MMD}/\mathcal{B}$. And, with more query rounds, the $\text{MMD}/\mathcal{B}$ will be lower.

\begin{table}[H]
\centering
\small
\setlength{\tabcolsep}{4pt}{
\begin{sc}
\centering
\caption{ \color{black}  $\text{MMD}/\mathcal{B}$ under constant budget size.}
\begin{tabular}{ccccc}
\toprule
{Setting } &\textbf{$R=10, b=100$}&\textbf{$R=4, b=250$} &\textbf{$R=2, b=500$ } &\textbf{$R=1, b=1000$ } \\
\midrule
$\text{MMD}/\mathcal{B}$   & {0.004999} & {0.005253} & {0.005367} & {0.005455} \\
\bottomrule
\label{tab:mmd_b_constant_budget}
\end{tabular}
\end{sc}}
\vspace{-10pt}
\end{table}

\color{black}

\color{black}
\subsection{Performance under the Re-initialization Setting}
\label{retrain_exp}
{\color{black} To study the effectiveness of \dgal under the re-initialization setting, we compare \dgal\ with the strong baseline involving the re-initialization trick in its algorithm design, e.g., Coreset~\citep{sener2017active}.
Following~\citep{ash2019deep}, we query samples when training accuracy is greater than $99\%$ and the results are summarized in Table~\ref{tab:retrain_cf10} and~\ref{tab:retrain_svhn}. 
The results show that \dgal\ can still be better than or competitive with the commonly used active learning methods. 
We notice that the improvement in the non-retraining setting is more significant. 
This is as our expectation. The dynamic analysis (Equation~\eqref{equ:gradient}), that \dgal\ is based on, considers the change of dynamics according to the model's current parameters. The re-initialization trick will not only causes the computational overhead of retraining, but also makes \dgal\ deviate from the analysis (Section~\ref{theo}).}

\color{black}
\subsection{Performance with large query rounds}
We provide the experiments with $b=500, r=15$ on Caltech101 data set with ResNet18 as the backbone. We ignore the BACKGROUND Google label and then we have 8677 images in total. At the last round, we run out of all images in the pool. As shown in the Figure~\ref{fig:large_R_caltech101}, our method consistently outperforms those baselines. Note, due to the non-retraining setting, the model will have different performance even if all the samples are used for the training.

\begin{figure}[h]
    \centering
    \includegraphics[width=0.5\textwidth]{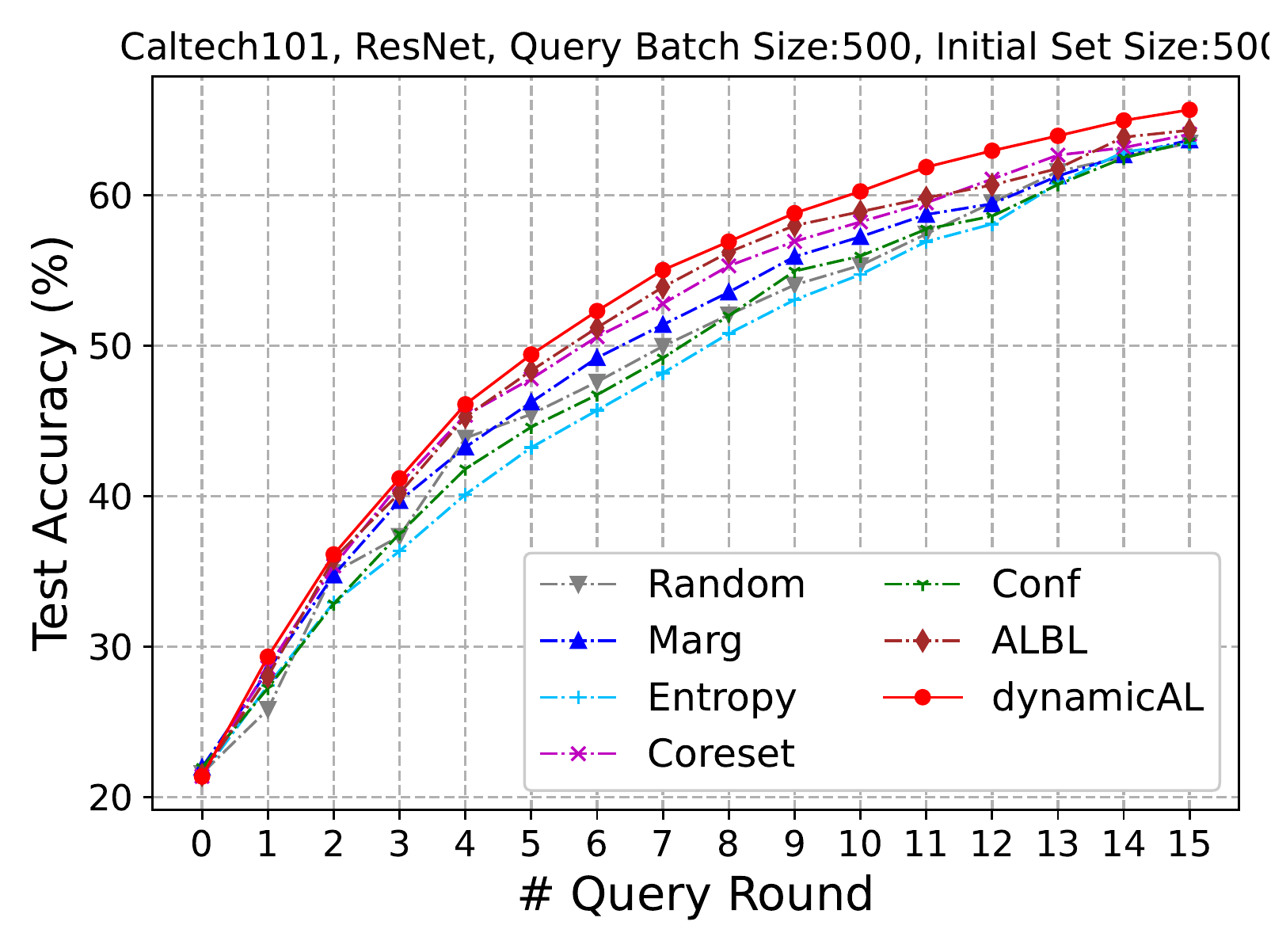}
    \vspace{-12pt}
    \caption{\color{black} The evaluation result with larger query round on Caltech101.}
    \label{fig:large_R_caltech101}
\end{figure}
\color{black}

\begin{table}[H]
\centering
\small
\setlength{\tabcolsep}{4pt}{
\begin{sc}
\centering
\caption{\color{black} CIFAR10, ResNet, Query Batch Size 500, Initial Set Size 500.}
\begin{tabular}{cccc}
\toprule
{\#Round} &{Random}&{Coreset} &{\dgal} \\
\midrule
0 & 30.80$\pm$1.81  & 30.77$\pm$0.92 & 30.94$\pm$2.17 \\
1 & 35.80$\pm$1.52  & 36.62$\pm$2.10 & 36.47$\pm$0.13 \\
2 & 42.91$\pm$1.75 & 43.16$\pm$1.79 & 42.74$\pm$2.44 \\
3 & 43.76$\pm$0.65 & 44.35$\pm$2.25 & 46.43$\pm$1.07 \\
4 & 47.03$\pm$1.19 & 48.74$\pm$1.94 & 49.38$\pm$1.80 \\
5 & 49.16$\pm$1.77 & 50.20$\pm$1.25 & 51.61$\pm$1.09 \\
6 & 52.43$\pm$1.33 & 53.44$\pm$1.37 & 54.33$\pm$1.76 \\
7 & 52.81$\pm$1.55 & 53.89$\pm$0.78 & 54.59$\pm$1.04 \\
8 & 54.56$\pm$0.23 & 57.12$\pm$1.11 & 57.50$\pm$1.28 \\
9 & 58.08$\pm$1.48 & 59.62$\pm$1.50 & 60.35$\pm$1.80 \\
\bottomrule
\label{tab:retrain_cf10}
\end{tabular}
\end{sc}}
\end{table}

\begin{table}[H]
\centering
\small
\setlength{\tabcolsep}{4pt}{
\begin{sc}
\centering
\caption{\color{black} SVHN, VGG, Query Batch Size 500, Initial Set Size 500.}
\begin{tabular}{cccc}
\toprule
{\#Round} &{Random}&{Coreset} &{\dgal} \\
\midrule
0 & 52.68$\pm$1.97 & 52.74$\pm$6.16 & 52.59$\pm$3.73 \\
1 & 67.64$\pm$1.99 & 68.08$\pm$3.61 & 66.48$\pm$4.10 \\
2 & 73.46$\pm$1.51 & 74.93$\pm$1.44 & 74.34$\pm$2.22 \\
3 & 77.30$\pm$1.08 & 76.49$\pm$2.08 & 76.73$\pm$2.65 \\
4 & 79.27$\pm$0.78 & 79.33$\pm$0.72 & 80.19$\pm$0.78 \\
5 & 79.97$\pm$1.28 & 82.09$\pm$1.08 & 82.08$\pm$1.39 \\
6 & 83.97$\pm$0.42 & 82.30$\pm$0.33 & 83.80$\pm$1.30 \\
7 & 83.44$\pm$0.57 & 83.29$\pm$1.11 & 84.85$\pm$1.12 \\
8 & 86.24$\pm$0.52 & 84.72$\pm$0.52 & 86.59$\pm$1.25 \\
9 & 85.75$\pm$1.23 & 85.62$\pm$0.55 & 86.57$\pm$0.74 \\
\bottomrule
\label{tab:retrain_svhn}
\end{tabular}
\end{sc}}
\end{table}

\section{Discussion}

\paragraph{Limitation and Future Work.}
In the work, we study the connection between generalization performance and the training dynamics under the NTK regime. Although the relation between training dynamics and generalization performance has been verified by our experiments, the theoretical analysis of the relation out of the NTK regime still needs study. 
Besides, in the experiments, we mainly focus on the classification problem. Whether the proposed method is effective for the regression problem is under-explored. We would like to leave the study of the previously mentioned two problems in the future work.

\textcolor{black}{\paragraph{NTK Analysis for the Design of Practical Method.}
Although some works~\cite{lee2020finite, fort2020deep} discussed that the NTK assumption is hard to be strictly satisfied in some real-world models, we notice that some recent works have shown that the high-level conclusions derived based on NTK is insightful and useful for the design of practical models. Some of their applications can achieve SOTA. For example, Park et al.~\citep{park2020towards} used the NTK to predict the generalization performance of architectures in the application of Neural Architecture Search (NAS). Chen et al.~\citep{chen2021neural} used the condition number of NTK to predict a model’s trainability. Chen et al.~\citep{chen2021vision} also used the NTK to evaluate the trainability of several ImageNet models, such as ResNet. Deshpande et al.~\citep{deshpande2021linearized} used the NTK for model selection in the fine-tuning of pre-trained models on a target task. In our work, the  empirical results in Figure 3 and Appendix.E also show the effectiveness of the high-level conclusions derived from the theory still hold.
}

\paragraph{Social Impacts.}
In this work, we study the connection between the generalization performance and the training dynamics and try to bridge the gap between the theoretic findings of deep neural networks and deep active learning applications. We hope our work would inspire more attempts on the design of deep active learning algorithms with theoretical justification, which might have positive social impacts.
We do not foresee any form of negative social impact induced by our work.

\paragraph{License Privacy Information.}
We use the commonly used datasets, CIFAR10\footnote{\url{https://www.cs.toronto.edu/~kriz/cifar.html}}, SVHN\footnote{\url{http://ufldl.stanford.edu/housenumbers/}}, Caltech101\footnote{\url{https://data.caltech.edu/records/20086}} in the experiments. Those datasets follow the MIT, CC0 1.0, CC BY 4.0 License respectively and are publicly accessible.
No privacy information is included in those datasets.

\end{document}